\documentclass[11pt]{article}

\usepackage[utf8]{inputenc} % allow utf-8 input
\usepackage[T1]{fontenc}    % use 8-bit T1 fonts
\usepackage{hyperref}       % hyperlinks
\usepackage{url}            % simple URL typesetting
\usepackage{booktabs}       % professional-quality tables
\usepackage{amsfonts}       % blackboard math symbols
\usepackage{nicefrac}       % compact symbols for 1/2, etc.
\usepackage{microtype}      % microtypography
\usepackage{xcolor}
\usepackage{mathtools}
\usepackage{algorithm}
\usepackage{algorithmic}
\usepackage{amssymb}
\usepackage{enumitem}
\usepackage{calc}
\usepackage{amsthm}
\usepackage{subcaption}
\usepackage{graphicx}
\usepackage{authblk}

\bibliographystyle{rfs}

\usepackage{float}

\usepackage{wrapfig}
\definecolor{dbblue}{RGB}{10,65,155}

\hypersetup{colorlinks,urlcolor=dbblue,citebordercolor=dbblue,linkbordercolor=dbblue,filecolor=dbblue,filebordercolor=dbblue,citecolor=dbblue,linkcolor=dbblue,colorlinks=true}

\usepackage[verbose=true,letterpaper]{geometry}

 \setlength{\paperwidth}{216mm} \setlength{\paperheight}{279mm}
\setlength{\hoffset}{0mm} \setlength{\voffset}{0mm} \setlength{\topmargin}{-11mm}
\setlength{\oddsidemargin}{-10.4mm} \setlength{\evensidemargin}{-10.4mm}
\setlength{\textwidth}{186mm} \setlength{\textheight}{225mm} \setlength{\columnsep}{11mm}
\renewcommand{\baselinestretch}{1.25}

\sloppy

\usepackage[numbers]{natbib}

\newtheorem{thm}{Theorem}
\newtheorem{lem}{Lemma}

\newtheorem{ass}{Assumption}
\newtheorem{prop}{Proposition}
\newtheorem{rem}{Remark}

\setlength{\affilsep}{1em}

\def\mnn{m^{\text{NN}}}
\def\msub{m^{\text{KNNR}}}

\title{$K$-Nearest-Neighbor Resampling for Off-Policy Evaluation in Stochastic Control\footnotetext{\scriptsize{\hskip-0.55cm Michael Giegrich (\href{mailto:giegrich@maths.ox.ac.uk}{giegrich@maths.ox.ac.uk}) is supported by the EPSRC Centre for Doctoral Training in Mathematics of Random Systems: Analysis, Modelling and Simulation (EP/S023925/1). Roel Oomen is employed as a Managing Director of FIC quantitative trading at Deutsche Bank A.G. This paper was prepared within the Sales and Trading function of DB, and was not produced, reviewed or edited by the DB Research Department. The views and opinions rendered in this paper reflect the author's personal views about the subject. No part of the author's compensation was, is, or will be directly related to the views expressed in this paper.}}}

\author[a]{Michael Giegrich}
\affil[a]{University of Oxford}
\affil[b]{Deutsche Bank, London}
\affil[c]{London School of Economics}
\affil[d]{Oxford Man Institute of Quantitative Finance}
\author[b,c]{Roel Oomen}
\author[a,d]{Christoph Reisinger}

\date{June 2023}

\begin{document}

\maketitle

\begin{abstract}
In this paper, we propose a novel $K$-nearest neighbor resampling procedure for estimating the performance of a policy from historical data containing realized episodes of a decision process generated under a different policy. We provide statistical consistency results under weak conditions. In particular, we avoid the common assumption of identically and independently distributed transitions and rewards. Instead, our analysis allows for the sampling of entire episodes, as is common practice in most applications. To establish the consistency in this setting, we generalize Stone's Theorem, a well-known result in nonparametric statistics on local averaging, to include episodic data and the counterfactual estimation underlying off-policy evaluation (OPE). By focusing on feedback policies that depend deterministically on the current state in environments with continuous state-action spaces and system-inherent stochasticity effected by chosen actions, and relying on trajectory simulation similar to Monte Carlo methods, the proposed method is particularly well suited for stochastic control environments. Compared to other OPE methods, our algorithm does not require optimization, can be efficiently implemented via tree-based nearest neighbor search and parallelization, and does not explicitly assume a parametric model for the environment's dynamics. Numerical experiments demonstrate the effectiveness of the algorithm compared to existing baselines in a variety of stochastic control settings, including a linear quadratic regulator, trade execution in limit order books, and online stochastic bin packing.
\end{abstract}
\vfill
\thispagestyle{empty}
\pagebreak\setcounter{page}{1}

\section{Introduction}
In reinforcement learning (RL), off-policy evaluation (OPE) deals with the problem of estimating the value of a target policy with observations generated from a different behavior policy. OPE methods are typically applied to sequential decision making problems where observational data is available but experimentation with the environment directly is not possible or costly. More broadly, OPE methods are a widely researched subject in RL (see, e.g., \cite{voloshin2021empirical, fu2021benchmarks,uehara2022review} for recent overviews), however, relatively little attention has been paid to stochastic environments where the stochasticity depends on the chosen actions
and state and action spaces are continuous. For example, common benchmark problems are either deterministic or have finite state and/or action spaces (see, e.g., \cite{voloshin2021empirical, fu2021benchmarks}).

Notwithstanding this, stochastic control problems are precisely concerned with the setting where a decision process affects random transitions. Stochastic control is a field closely related to reinforcement learning and its methods have been applied to a wide range of high-stakes decision-making problems in diverse fields such as operations research \cite{gallego1994optimal,martinez2021optimizing}, economics \cite{kendrick2005stochastic,hudgins2019stress}, electrical engineering \cite{mu2019energy,chen2016state}, autonomous driving \cite{williams2018information} and finance \cite{cartea2015optimal,schmidli2007stochastic}. In the stochastic control literature, optimal policies are often represented as deterministic feedback policies (i.e., as deterministic functions of the current state) and, in the episodic case, are non-stationary due to the impact of a finite time-horizon. 

Stochastic control environments pose a challenging setting for OPE methods. For example, classical methods like importance sampling (IS) \cite{precup2000eligibility} struggle with deterministic target policies in continuous action spaces due to the severe policy mismatch between the target and the behavior policy (see, e.g. \cite{mou2023kernel}, for more details on the difficulty of estimation in such a regime in general). This policy mismatch can lead to a high variance in the estimate (see, e.g., \cite{andradottir1995choice, precup2001off,dann2014policy,mahmood2014weighted, mahmood2017multi, schlegel2019importance}), which is exacerbated by the stochastic control environments' inherent stochasticity. Furthermore, IS requires knowledge of the distribution of the behavior policy. Learning this distribution,  known as imitation learning, in a continuous action space of a policy functionally depending on a continuous and potentially high-dimensional state adds a further complication to this class of methods. Lastly, OPE methods not based on importance sampling often assume for their theoretical guarantees that the data sets used for value estimation contain only identically distributed and independent (iid) transitions (see, e.g., Fitted Q Evaluation \cite{le2019batch},  minimax V learning \cite{feng2019kernel}, model-free Monte Carlo \cite{fonteneau2010model,fonteneau2013batch}, DualDice \cite{nachum2019dualdice}). However, for episodic control problems this is an unrealistic assumption, as previous interaction with the environment should also be episodic and not rely on a potentially non-existent oracle that generates iid transitions. 

In this paper, we propose a novel procedure for off-policy evaluation based on a nearest-neighbor resampling algorithm. Assuming continuity in the rewards and the state transitions, our procedure exploits that similar state/action pairs (in a metric sense) are associated with similar rewards and state transitions. This addresses the counterfactual estimation problem of evaluating the performance of a target policy using a data set sampled under a different behavior policy without knowing or estimating the distribution of this policy. Similar to Monte Carlo methods, the resampling algorithm simulates trajectories and has a provably vanishing mean squared error (MSE) under episodic sampling. Compared to other OPE methods, our algorithm does not require optimization, can be efficiently implemented via tree-based nearest neighbor search and parallelization and does not explicitly assume a parametric model for the environment's dynamics. These properties make the proposed resampling algorithm particularly useful and directly applicable for stochastic control environments.
\setlength{\tabcolsep}{5pt}
\begin{table}[tbhp]
	{\footnotesize
		\caption{In this table, we summarize statistical results on commonly used model-based OPE methods. We report whether the results take into account episodic sampling, whether they apply to continuous state/ action spaces, whether model-inherent stochasticity was considered and whether statistical consistency was proved.}\label{tab:consistency}
		\begin{center}
			\begin{tabular}{|c||c|c|c|c|c|} \hline
				Method & Episodic & Cont. State & Cont. Action&  Stochastic & Consistency\\ \hline
				
				Mannor et al. 2007 \cite{mannor2007bias} & No & No & No& Yes &No \\
				Fonteneau et al. 2010 \cite{fonteneau2010model}  & No & Yes & Yes& Yes &No\\
				Liu et al. 2018 \cite{liu2018representation} &Yes &Yes&No&Yes &No\\
				Gottesman et al. 2019 \cite{gottesman2019combining} & Yes & Yes & No\footnote{See Section "Related Work".} & No & Partially\\
				Nachum et al. 2019 \cite{nachum2019dualdice} & No &Yes &Yes &Yes & Partially\\
				Yin \& Wang 2020 \cite{yin2020asymptotically} & Yes &No &No  &Yes & Yes\\
				Uehara \& Sun 2021 \cite{uehara2021pessimistic} & No & Yes &Yes &Yes & Yes\\
				Wang et al. 2024 \cite{wang2024off} & Partially & No & No& Yes &Yes \\ \hline
			\end{tabular}
		\end{center}
	}
\end{table}
\paragraph{Contributions} 
\begin{enumerate}
	\item We introduce a novel nearest-neighbor based resampling algorithm for off-policy evaluation in stochastic control environments that can generate trajectories without exact knowledge of the behavior policy and does not rely on estimating a parametric model for the dynamics (see Section \ref{sec:algo}). It is faster than comparable existing algorithms and parallelizable. 
	\item We prove that the MSE of this method vanishes for episodic sampling, under only weak assumptions
	(Section \ref{sec:theory_main}). 
	For model-based OPE methods, this is to our knowledge the first consistency result under episodic sampling for non-deterministic environments with continuous state-action spaces. An overview of statistical results for model-based OPE methods can be found in Table \ref{tab:consistency}.
	\item To prove consistency of the resampling algorithm, we extend a well known statistical result known as Stone's Theorem (see \cite{gyorfi2002distribution}, Theorem 4.1), which gives conditions for the consistency of locally weighted regressions. Our generalization consists of including the episodic setting and the counterfactual element of OPE. This result can be found in Theorem \ref{thm:stone_mult} in Section \ref{sec:theory_main}. 
	\item We demonstrate the method's effectiveness on various stochastic control environments including a linear quadratic regulator, trade execution in limit order books and online stochastic bin packing. These environments vary in their control mechanisms, sources of uncertainty, areas of application and dimensionality. We show competitive performance of the resampling algorithm compared to baselines (see Section \ref{sec:experiments}). 
\end{enumerate}

\paragraph{Related Work} Our method is most closely related to the model-free Monte Carlo (MFMC) algorithm proposed in \cite{fonteneau2010model,fonteneau2013batch}, where a nearest neighbor method is used for trajectory generation. Under fairly generic assumptions, this method leads to a computational complexity of $\mathcal{O}(n^2)$ in the number of observed sampling episodes $n$. In contrast, our method can utilize efficient tree-based nearest neighbor search algorithms, which leads to a more favourable computational complexity of $\mathcal{O}(K_n n\log(n))$ where the number of nearest neighbours under consideration is of the order $o(n)$ \footnote{In our experiments, we choose $K_n$ in the order of $\mathcal{O}(n^{\frac{1}{4}})$ based on empirical observations. A heuristic choice for the nearest neighbor growth rate is common (i.e., \cite{lall1996nearest}) and regression theory suggests a rate smaller than $1/2$ (i.e., Theorem 6.2 in \cite{gyorfi2002distribution} suggests the rate $\mathcal{O}(n^{\frac{2}{d+2}})$ where $d\geq3$ is the dimension of the feature space).}. This translates into improvements in computational costs of an order of  magnitude even for moderately-sized data sets (see Section \ref{sec:experiments}). Furthermore, our algorithm allows for almost complete parallelization in the generation of trajectories. On the theory side, \cite{fonteneau2010model,fonteneau2013batch} provide point-wise error bounds on the bias and variance depending on the distances of the nearest neighbors, while not studying convergence properties. In contrast, we establish our algorithm's consistency, which is global in the state domain. Crucially, their results only hold for data sets containing iid transitions, while our results allow for data sets sampled in episodes. Note that our analytic approach can be modified to establish consistency for the MFMC. A more detailed comparison between our resampling algorithm and MFMC can be found in the supplementary material.%In these experiments we find that our algorithms consistently outperforms the MFMC method for moderately sized data sets while for larger data sets and for more complex metric structures a comparison becomes infeasible due to the MFMC's computional cost and its ineligibility for parallelization. The MFMC algorithm is also only tested on one simple benchmark problem, while we evaluate our algorithm on multiple stochastic control problems with varying levels of complexity and compare our algorithm to the MFMC and other benchmarks. 

In \cite{gottesman2019combining}, a version of the MFMC algorithm was combined with a parametric model approach to improve performance for smaller data sets. Consistency is derived for a deterministic environment by using the stability of the deterministic dynamical system, and under the critical assumption that for any fixed action the distance to the nearest neighbor state vanishes as more data becomes available\footnote{In statistics, this is commonly considered a statement that should be proven (e.g., \cite{gyorfi2002distribution}, Lemma 6.1).}. Crucially, they only consider nearest neighbor matching in the state space and require exact matches in the action space (Assumption 1 in \cite{gottesman2019combining}). A simple probabilistic argument shows, that this assumption excludes data generated by not fully atomic behavior policies on continuous action spaces and practically limits the results to discrete action spaces.

Finally, there is a broader literature on nearest neighbor search and other local weighting approaches in the context of learning for dynamical systems. In \cite{lall1996nearest}, the authors introduced a nearest neighbor resampling technique similar to ours for simulating hydrological time series. Their method has attracted considerable interest in hydrology, where several other papers apply it in this context (e.g.,\cite{rajagopalan1999k,towler2009simulating,raseman2020nearest}). This literature, however, does not consider interventions and counterfactual effects, and does not provide any theoretical analysis of the algorithm. Other examples of nearest neighbor methods in reinforcement learning include imitation learning \cite{pari2022surprising}, memory-based learning \cite{moore1990efficient,baird1993reinforcement,atkeson1997locally,atkeson1997locally_control,mansimov2018simple,humphreys2022large}, and data augmentation \cite{li2022associative}.

\section{Description of Algorithm}\label{sec:algo}
\paragraph{Notation} For a metric space $(S,d)$, $C(S,S)$ will denote the space of continuous functions with respect to $d$ from $S$ onto $S$. We will use the following multi-index notation: $i^{k}=(i_0,\ldots,i_k)$,  $t^k = (t_0,\ldots,t_k)$, in particular $(i^{k},t^k)=((i_0,t_0),\ldots,(i_k,t_k))$. For any two multi-indexes $i^k$ and $j^l$ with $l\in\mathbb{N}$,  denote $[i^k,j^l]=(i_0,\cdots,i_k,j_0\cdots,j_l)\in\mathbb{N}^{k+l+2}$ and, for $l<k$, $i^{l:k}=(i_{l},\cdots,i_k)\in\mathbb{N}^{k-l+1}$. We define the summation $\sum_{i^{k}, t^k}^{n, T} \coloneqq \sum_{i_1=1}^{n}\sum_{t_1=0}^{T-1}\cdots\sum_{i_k=1}^{n}\sum_{t_k=0}^{T-1}$, where $T,n\in\mathbb{N}$. The operator $|\cdot|$ applied to a set denotes its cardinality. If applied to a multindex, $|i^k|$ denotes the cardinality of the set $\{i_j\}_{j=0}^k$. $I_{\cdot}$ defines an indicator function. For any set $S$, $\mathcal{V}(S)$ denotes the uniform distribution on $S$.
 
%For any random variable $X$, we denote its distribution by $\mu_X$.and $j^{k+1:\tau}\coloneqq (j_{k+1},\dots,j_{\tau})$, $t^{k+1:\tau}=(t_{k+1},\dots,t_{\tau})$

\paragraph{Setting} We are given a data set $\hat{\mathcal{D}}\coloneqq\{(X_i,U_i,R_i)\}_{i=1,...,n}$, where the state process $X_i = (X_{i,t})_{t=0,...,T}$, the action process $U_i = (U_{i,t})_{t=0,...,T}$ and the reward process $R_i = (R_{i,t})_{t=0,...,T}$ are stochastic processes on $\mathcal{X}\subset \mathbb{R}^{d_1}$, $\mathcal{U}\subset\mathbb{R}^{d_2}$ and $\mathbb{R}$, respectively, with $d_1,d_2\in\mathbb{N}$. Here, $\mathcal{X}$ denotes the state space and $\mathcal{U}$ the action space. For each $i$, $((X_{i,t},U_{i,t},R_{i,t}))_{t=0,...,T}$ is called a sample episode and we assume that $\hat{\mathcal{D}}$ contains $n$ episodes, which are rolled out independently and with the same behavior policy. For all $t$, the  behavior policy is given by the distribution of $U_{i,t}$ conditioned on $X_{i,t}$. Furthermore, denote the data set without the last period as $\mathcal{D}\coloneqq\hat{\mathcal{D}}\setminus{\{(X_{i,T},U_{i,T},R_{i,T})\}_{i=1,...,n}}$.\footnote{While our algorithm and our theoretical results could be formulated slightly more data efficient if we use $\hat{\mathcal{D}}$ instead of $\mathcal{D}$, we omit this detail to simplify notation.}

Each episode is generated by a random dynamical system where we use a similar formulation as in \cite{bertsekas1996stochastic,bhattacharya2003random,bertsekas2019reinforcement}. Consider $S \coloneqq\mathcal{X}\times\mathcal{U}$ equipped with a metric $d$ and let $\mathcal{S}$ be the Borel $\sigma$-algebra of $S$. Let $\Gamma\subset C(S,S)$, a set of uniformly equicontinuous functions, be endowed with a $\sigma$-algebra $\Sigma$ such that $(\gamma, x) \rightarrow \gamma(x)$ is $(\Gamma \times S, \Sigma \otimes \mathcal{S})$-$(S, \mathcal{S})$ measurable and let $\mathbb{Q}$ be a probability measure on $(\Gamma, \Sigma)$. On a probability space $(\Omega, \mathcal{F}, \mathbb{P})$, let $\left(\alpha_t\right)_{t=0}^{T-1}$ be a sequence of iid random functions from $\Gamma$ distributed according to $\mathbb{Q}$. For an initial $S$-valued random variable $(X_0,U_0)$, independent of the sequence $\left(\alpha_t\right)_{t=0}^{T-1}$, a sample episode is generated according to $(X_{t+1},U_{t+1}) = \alpha_{t} (X_t,U_t)$ or equivalently by $(X_{t+1},U_{t+1}) = \alpha_{t} \alpha_{t-1} \ldots \alpha_0 (X_0,U_0)$. We assume that the distribution of $X_0$ is known and we denote it by $\nu_0$. 

For the process $R=(R_t)_{t=0,...,T}$, we assume that there exists a sequence of iid real random variables $(\epsilon_t)_{t=0}^{T}$, independent of the system noise, distributed on $[-M, M]$ for $M\in\mathbb{R}$ with $\mathbb{E}[\epsilon_t]=0$, and a continuous function $r:S\rightarrow [-M,M]$ such that $R_t = r(X_t,U_t)+\epsilon_t$. For sample $i$, denote with $\epsilon_{i,t}$ and $\alpha_{i,t}$ the corresponding random variable and random function of this sample. We call this dynamical system the sampling system. While we are not aware of any general result relating  this set-up to Markov decision processes (MDP), we provide some discussion in the supplementary materials and note that the setting we consider here is more general compared than the settings considered in \cite{fonteneau2010model,fonteneau2013batch, gottesman2019combining}.

Furthermore, we define a second dynamical system, on the same space as the first system, as follows. Given a sequence of iid random functions $\left(\alpha_t\right)_{t=0}^{T-1}$ as above, consider a family of feedback policies $u=(u_t)_{t=0}^T$ with $u_t:\mathcal{X}\rightarrow\mathcal{U}$.  Define the sequence $\left(\alpha_t|^u\right)_{t=0}^{T-1}$ where $\alpha_t|^u =(\Pi(\alpha_t),u_t(\Pi(\alpha_t))) $ and $\Pi:\mathcal{X}\times\mathcal{U} \rightarrow\mathcal{X}$ with $\Pi(x,u)=x$. For an initial $S$-valued random variable $(X_0,u_0(X_0)))$, where $X_0$ is the same as in the sampling system and is independent of the sequence $\left(\alpha_t|^u\right)_{t=0}^{T-1}$, an episode is generated by $(\widehat{X}_{t+1},u_{t+1}(\widehat{X}_{t+1})) = \alpha_{t}|_u (\widehat{X}_t,\widehat{U}_t)$ or equivalently by $(\widehat{X}_{t+1},u_{t+1}(\widehat{X}_{t+1}))  = \alpha_{t}|_u \alpha_{t-1}|_u \ldots \alpha_0|_u (X_0,u_0(X_0))$. Let $(\epsilon_t)_{t=0}^{T-1}$ be a sequence of iid real random variables  
%with $\mathbb{E}[\epsilon_t]=0$ and $r:S\rightarrow[-M,M]$ the function 
as above, then define the stochastic process $R|_u=(R_t|_u)_{t=0,...,T}$ by
$R_t|_u = r(\widehat{X}_t,u_t(\widehat{X}_t))+\epsilon_t$.

We call this dynamical system the counterfactual system and for a given realisation of the noise $(X_0, \left(\alpha_t\right)_{t=0}^{T-1}, \left(\epsilon_t\right)_{t=0}^{T})$, we call the generated episode of the counterfactual system the counterfactual path. Note that for each episode generated by the sampling system a counterfactual path can be identified.
Our algorithm has the target to consistently estimate the cumulative reward under the counterfactual system with target policy $u$:
\begin{equation}
	m(u)\coloneqq\mathbb{E}\Big[\sum_{t=0}^{T}R_t|_u\Big].
\end{equation}

 \begin{rem}
A commonly made simplifying assumption in the OPE literature is that data consist of iid transitions $\{X_{i},U_i,R_i,X_i'\}_{i=1}^n$, where $X_i'$ denotes the subsequent state \cite{fonteneau2010model,fonteneau2013batch,le2019batch,hao2021bootstrapping}. Note that the samples in our setting are not assumed iid, but may exhibit an episodic dependence structure, which poses a considerable complication in the analysis of OPE methods not relying on importance weights. However, considering episodic data is often more realistic as in most practical situations data is gathered in episodes rather than from one-transition oracles.
\end{rem}

\begin{rem}\label{rem:generalization}
Our algorithm and the corresponding consistency results allow for two generalizations of targets: First, we can allow the reward process to depend not only on the current state and action, but also the subsequent state, i.e., we can define the reward process as $\widetilde{R}=(\widetilde{R}_t)_{t=0,...,T}$
by a sequence $(\epsilon_t)_{t=0}^{T}$ as above and a continuous function $\widetilde{r}:\mathcal{X}\times\mathcal{U}\times\mathcal{X}\rightarrow [-M,M]$, such that $\widetilde{R}_t = \widetilde{r}(X_t,U_t,X_{t+1})+\epsilon_t$. Second, we can consistently estimate the expectation of functions of counterfactual paths. More precisely, for a known continuous function $g:\mathcal{X}^T\rightarrow[-M,M]$, we can consistently estimate $\mathbb{E}[g|_u]$, where $(\widehat{X}_t)_{t=0}^T$ is a counterfactual path and $g|_u=g(\widehat{X}_0,\dots, \widehat{X}_{T})$ a random variable. This extension allows for example the consistent estimation of path-dependent risk measures using our algorithm. Since the two extensions can be treated analogously to the original rewards, both in the algorithm and the main theorem, we omit them for readability.
\end{rem}

\paragraph{$K$-Nearest Neighbors and $K$-Nearest Neighbor Paths} Given a tuple $(x_0, v_0)\in S$, the reordered version of a data set $\mathcal{D}$ is given by $\{(X_{(i,nT)},U_{(i,nT)},R_{(i,nT)})(x_0,v_0)\}_{i=1}^{nT}$, where the ordering is according to increasing values of $d\left( (x_0,v_0),(X_{i,t},U_{i,t})\right)$. For simplicity, we assume for this ranking and all the subsequent ones that no ties occur. For $K\in\mathbb{N}$ with $K<n$, $(X_{i,t},U_{i,t})$ is a $K$-nearest neighbor of $(x_0,v_0)$ in $\mathcal{D}$, if for the corresponding sample in the reordered data set $(X_{(j,nT)},U_{(j,nT)})(x_0,v_0)= (X_{i,t},U_{i,t})$ it holds that $j\leq K$. For both the reordered data sets and nearest neigbhors, we omit the dependence on $(x_0,v_0)$ for readability, if it is clear from the context.

For $\tau=0,\dots,T$, define  a $K$-nearest neighbor ($K$-NN) path for a given initial state $x_0\in\mathcal{X}$, a feedback policy $u$, and data  $\mathcal{D}$ 
by the following construction. First, reorder $\mathcal{D}$ according to increasing values in $d\left( (x_0, u_0(x_0)),(X_{i,t},U_{i,t})\right)$ and consider the reordered version  $\{(X_{(i,Tn)},U_{(i,Tn)},R_{(i,Tn)})\}_{i=1}^{Tn}$. For $k_0\leq K $, identify the $k_0$-nearest neighbor  $(X_{(k_0,Tn)},U_{(k_0,Tn)})$ with the original sample from the data set and denote it by $(X^{(k_0)}_{i_0,t_0},U^{(k_0)}_{i_0,t_0})$.  Next, consider $(X^{(k_0)}_{i_0,t_0+1},u_1(X^{(k_0)}_{i_0,t_0+1}))$ and reorder the data set $\mathcal{D}$ according to increasing values in $d\big( (X^{(k_0)}_{i_0,t_0+1},u_1(X^{(k_0)}_{i_0,t_0+1})),(X_{i,t},U_{i,t})\big)$ and denote it by $\{(X^{(k_0)}_{(i,Tn)},U^{(k_0)}_{(i,Tn)},r^{(k_0)}_{(i,Tn)})\}_{i=1}^{Tn}$. For $ k_1\leq K $, again identify $(X^{(k_0)}_{(k_1,Tn)},U^{(k_0)}_{(k_1,Tn)})$ and denote it by $(X^{(k_0,k_1)}_{i_1,t_1},U^{(k_0,k_1)}_{i_1,t_1})$. We repeat this procedure $\tau$ times where for any $q=1,\dots,\tau$,{} $k^{q-1}$ and $k_q$ we search for the $k_q$-nearest neighbor of $(X^{k^{q-1}}_{i_{q-1},t_{q-1}+1},u_{q}(X^{k^{q-1}}_{i_{q-1},t_{q-1}+1}))$ in the set $\mathcal{D}$ and then identify the $k_q$-nearest neighbor with $(X^{k^{q}}_{i_{q},t_{q}},U^{k^{q}}_{i_{q},t_{q}})$. For an initial state $x_0\in \mathcal{X}$, $((X_{j_p,s_p}, U_{j_p,s_p}))_{p=0}^{\tau}$ is a $K$-NN path of length $\tau$ if there exists a sequence $k^{\tau}=(k_0,k_1,\dots,k_{\tau})$ with $k_q\leq K$ for all $q=0,\dots,\tau$ such that  $((X_{j_p,s_p}, U_{j_p,s_p}))_{p=0}^{\tau} =((X^{k^{q}}_{i_{q},t_{q}},U^{k^{q}}_{i_{q},t_{q}})((x_0, u_0(x_0)))_{q=0}^{\tau}$ . Equivalently, we say $(j^{\tau},s^{\tau})$ is a $K$-NN path. Since we assume that each nearest neighbor attribution is unique, we can also uniquely identify each $K$-NN path by the sequence $k^{\tau}$ for any initial $x_0$. Thus, for any initial $x_0$ each $K$-NN path can be uniquely written as $(X_{t}^{k^{\tau}}, U_{t}^{k^{\tau}}, R_{t}^{k^{\tau}})_{t=0,\dots,{\tau}}$. We provide a visualization of a K-NN path in Figure \ref{fig:knnPath}.

\begin{wrapfigure}{L}{0.65\textwidth}\label{fig:knnPath}
	\begin{center}
		\includegraphics[width=0.6\textwidth]{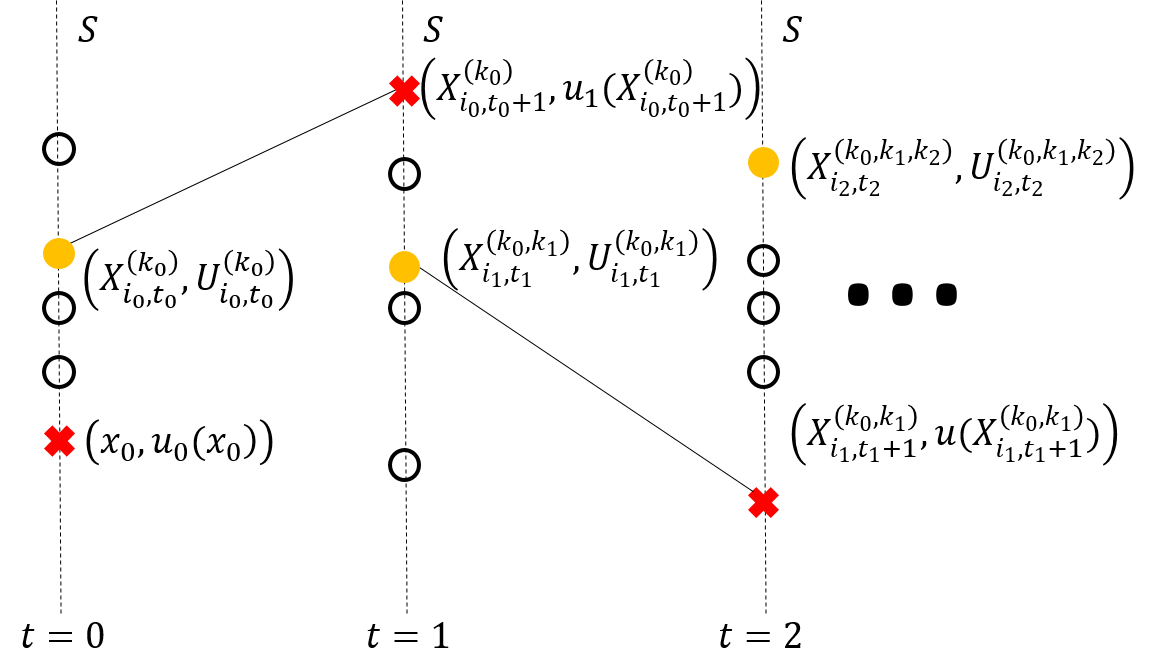}
	\end{center}
	\caption{Visualization of the $K$-NN path with $k^3=(3,2,4)$ where $S$ is projected onto a line for each $t$ and metric distances are assumed to be preserved in this one dimensional representation. The circles represent samples and the filled circle denotes the sample that is part of the $K$-NN path. The crosses represent the starting point on the first line and the transitions from samples on the $K$-NN path on the subsequent lines. }
\end{wrapfigure}

%Each sequence of state-action pairs $((X_{j_p,q_p}, U_{j_p,q_p}))_{p=0}^{\tau}$ from $\mathcal{D}$ that can be constructed this way is called a $K$-NN path with the signature $k^{\tau}$ for an initial state $x_0$. Equivalently we say $(j^{\tau},q^{\tau})$ is a $K$-NN path. Since we assume that each nearest neighbor attribution is unique, the signature is unique and we can uniquely identify each $K$-NN path by its signature $k^{\tau}$. Hence, it is well defined to write $(X_{t}^{k^{\tau}}, U_{t}^{k^{\tau}}, R_{t}^{k^{\tau}})_{t=0,\dots,{\tau}}$ for a $K$-NN path.

%\mycommend{Or should we use this formulation instead: "For an initial state $x_0\in \mathcal{X}$, $((X_{j_p,s_p}, U_{j_p,s_p}))_{p=0}^T$ is a $K$-NN path, if there exists a sequence $k^T=(k_0,k_1,\dots,k_T)$ with $k_q\leq K$ for all $q=0,\dots,T$, such that  $((X_{j_p,s_p}, U_{j_p,s_p}))_{p=0}^T =((X^{k^{q}}_{i_{q},t_{q}},U^{k^{q}}_{i_{q},t_{q}})((x_0, u_0(x_0)))_{q=0}^T$ . Since we assume that each nearest neighbor attribution is unique, we can also uniquely identify each $K$-NN path by the sequence $k^T$ for any initial $x_0$. Thus, for any initial $x_0$ each $K$-NN path can be uniquely written as $(X_{t}^{k^T}, U_{t}^{k^T}, R_{t}^{k^T})_{t=0,\dots,T}$. "}

%Furthermore, for some $X_{t}^{k^T}$ corresponding to some $X_{j,s}$ in the original indexing, we denote $X_{j,s+1}$ by  $X_{t,+}^{k^T}$.

\paragraph{$K$-Nearest Neighbor Resampling for OPE} Given the data set $\mathcal{D}$ and the feedback policy $u$, the goal of the $K$-nearest neighbor resampling algorithm is to estimate the expected cumulative reward $m(u)$. We achieve this by randomly sampling $K$-NN paths and averaging the cumulative rewards along the sampled $K$-NN paths. This procedure is explicitly formulated in Algorithm \ref{alg:matching}.\footnote{To simplify readability, some algorithmic details are omitted in Algorithm \ref{alg:matching} (e.g., tree-based nearest neighbor search). A more detailed version can be found in the supplementary materials.}

The algorithm produces an estimate of the value by generating trajectories without assuming a parametric model. Instead, we exploit the fact that under sufficient regularity of the underlying stochastic system, state-action pairs that are close in the chosen metric are related to similar transitions and rewards. This also implies that the algorithm does not require optimization. Furthermore, it is directly applicable to problems with continuous state and action spaces and no detailed knowledge of the sampling policy is required. In particular, it can handle the distribution mismatch between random behavior policies and deterministic target policies and it allows for the evaluation of non-stationary policies. The algorithm also allows, with slight modifications, the estimation of path dependent quantities (see Remark \ref{rem:generalization})

The choice of the hyperparameters $K$ and $l$, the number of resampled paths, allows for flexibility in the bias-variance trade-off of the estimation. Our theoretical results require $K_n\rightarrow\infty$ where $K_n=o(n)$ and that $l_n\rightarrow\infty$ as $n\rightarrow\infty$. As mentioned earlier, for $K_n$, standard regression theory suggests a dimension dependent rate of $\mathcal{O}(n^{\frac{2}{d+2}})$ where the dimension $d\geq3$ to optimally balance variance and bias (Theorem 6.2 in \cite{gyorfi2002distribution}).  In the regression case, this choice of $K_n$ leads to minimax optimal convergence for the class of Lipschitz-continuous functions with a fixed Lipschitz constant (i.e., \cite{gyorfi2002distribution}). In practice, a heuristic choice of $K$ is common (i.e., \cite{lall1996nearest}). Finally, the averaging over resampled paths makes the algorithm behave similarly to Monte Carlo methods. An example for this is the empirical observation in Section \ref{sec:experiments} that the standard deviation of the estimation vanishes approximately with the order $O(n^{-0.5})$, if $l_n$ is of the order $O(n)$\footnote{For rates of order $o(n)$, the standard deviation decays slower in experiments.}. The Monte Carlo-like behavior makes the \ref{alg:matching} algorithm particularly well-suited for problems with inherent stochasticity.
\begin{algorithm}
	\renewcommand{\thealgorithm}{KNNR}
	\caption{ $K$-nearest neighbor resampling for OPE}\label{alg:matching}
	 \hspace*{\algorithmicindent} \textbf{Input:}  Data set $\mathcal{D}$; target policy $u$; nearest neighbors parameter $K$; metric $d$ on $S$; number of resampled trajectories $l$
	\begin{algorithmic}[1]
		\STATE $R\gets 0$
	\FOR{$j$ from $1$ to $l$}
	\STATE{$r\gets 0$}
	\STATE{Sample an initial state $X^{\mathrm{NN}}_{j,0}$ from $\nu_0$}
	\FOR{$s$ from $0$ to $T$ }
	\STATE{Randomly choose $\mathcal{K}_s$ where $\mathcal{K}_s \sim \mathcal{V}(\{1,\dots,K\})$ and $\mathcal{K}^s\gets(\mathcal{K}_0,\dots,\mathcal{K}_s)$}
	\STATE{	Find the $\mathcal{K}_s$-nearest neighbor of $(X^{\mathrm{NN}}_{j,s}, u_s(X^{\mathrm{NN}}_{j,s}))$ in $\mathcal{D}$ under $d$ and denote them by $(X^{\mathcal{K}^s}_{i_s,t_s}, U^{\mathcal{K}^s}_{i_s,t_s})$ with the corresponding reward $R^{\mathcal{K}^s}_{i_s,t_s}$}
	\STATE{Set $r\gets r + R^{\mathcal{K}^s}_{i_s,t_s}$ and $X^{\mathrm{NN}}_{j,s+1}\gets X^{\mathcal{K}^s}_{i_s,t_s+1}$ if $s<T$}
	\ENDFOR
	\STATE{$R\gets r + R$}
	\ENDFOR
	\end{algorithmic}
\hspace*{\algorithmicindent} \textbf{Output:} $\frac{R}{l}$
\end{algorithm}

Relying on nearest neighbor searches imposes some limitations on the algorithm. First, at evaluation, we are required to have the data set stored and the computational cost may be relatively high, depending on the size of the data set. As mentioned before, however, our algorithm does not require optimization and, in contrast to similar existing methods, allows for more efficient evaluation using tree-based nearest neighbor search and almost complete parallelization. Also, nearest neighbor search in high-dimensional spaces suffers from the curse of dimensionality. This problem can be potentially alleviated by using appropriate lower dimensional representations \cite{pari2022surprising} and a suitable choice of metric. Furthermore, we need to assume that our decision problem has a Markov structure. This restriction can be potentially circumvented if a sufficient summary statistic of the paths can be identified. The formulation of the algorithm and the theoretical results require the target policies to be deterministic and the behavior policies stationary. These restrictions might be avoidable in future results. Finally, only weak regularity assumptions on transitions and rewards, and a sufficient coverage by the behavioral policy are required.

\begin{rem}
	On an algorithmic level, a crucial difference of Algorithm  \ref{alg:matching} shown above to the MFMC method proposed by \cite{fonteneau2010model,fonteneau2013batch} occurs in line 8. At this point of the algorithm, MFMC would remove the used sample from the data set while \ref{alg:matching} keeps the data set unchanged. This seemingly small difference leads to the favourable computational properties of the \ref{alg:matching} compared to the MFMC as specified in the introduction. 
\end{rem}

\section{Consistency of $K$-nearest neighbor resampling}\label{sec:theory_main}
In this section, we introduce our theoretical results on the consistency of the estimator implemented by Algorithm \ref{alg:matching}. This means we can guarantee under certain weak assumptions in the setting previously introduced that the mean-squared error related to the off-policy value estimation by $K$-nearest neighbor resampling goes to zero as the number of sampled episodes $n$ increases. Note that this is in particular equivalent to the bias and variance of value vanishing as more data becomes available. To our knowledge,this is the first consistency result under episodic sampling for non-deterministic environments with continuous state-action spaces.

We will first provide the assumptions needed and introduce a regression notation for the algorithm's output, before stating our main result and 
providing the idea of the proof. A full proof is given in Section \ref{sec:proof} with some detailed technical steps in the supplementary materials.
 
\begin{ass}\label{ass:absolute_cont_mult_main}\leavevmode
	\begin{enumerate}
		\item The probability of a tie occurring in any nearest neighbor search is zero.
		\item 	Let $(\pi_t)_{t=0}^{T}$ be a family of probability measures where $\pi_t$  is the measure of $(X_t,U_t)$ and let $\pi_t|_x$ denote the marginal distribution in the first component. Assume that for all $t$ and for all $x\in supp(\pi_t|_x)$  there exists a $\tilde{t}<T-1$, such that $(x,u_t(x))\in supp(\pi_{\tilde{t}})$.
	\end{enumerate}	
\end{ass} 
The first assumption is common in the analysis of nearest neighbor-based methods (e.g., \cite{gyorfi2002distribution}, Chapter 6) and can usually be replaced by introducing a tie-breaking mechanism. We omit treating this case to simplify argumentation and note that this assumption is satisfied for any $K$-NN search where the matching variables have non-atomic probability distributions. The second condition guarantees sufficient exploration of the behaviour policy and is comparable with common coverage assumptions (cf. \cite{uehara2021pessimistic} for an overview).

To state the main result, we will introduce the K-NN path resampling estimator (KNNR) for the value function that corresponds to the estimator produced by Algorithm \ref{alg:matching}.  With the notation introduced in the previous section, $R_{\tau}^{k^{\tau}}(x;u)$ denotes the reward in period $\tau$ of the $K_n$ nearest neighbor path $k^{\tau}$ starting at the state $x\in\mathcal{X}$ under the target policy $u$. Draw independently $l\in\mathbb{N}$ sequences $\mathcal{K}^T_j=(\mathcal{K}_{0,j},\dots, \mathcal{K}_{T,j})$ of length $T+1$, where $\mathcal{K}^T_j$ is a sequence of iid random variables distributed uniformly on $\{1,\ldots, K_n\}$ and $j\leq l$ with $j\in\mathbb{N}$. Then, the resampling regression is given by $\msub_n(x,u,\tau,j) =  \sum_{k^{\tau}}^{K_n} R_{\tau}^{k^{\tau}}(x;u)I_{k^{\tau}=\mathcal{K}^{\tau}_j}$, where $\sum_{k^{\tau}}^{K_n} =\sum_{k_0=1}^{K_n}\cdots\sum_{k_{\tau}=1}^{K_n}$ and the resampling estimator for $\mathbb{E}[R_{\tau}|_u]$ is $\msub_{l,n}(u,\tau) = \frac{1}{l}\sum_{j=1}^l \msub_n(\tilde{X}_j,u,\tau,j)$, where $\tilde{X}_j$ is sampled independently from the initial state distribution. Then, the KNNR estimator for the value of policy $u$ is given by
\begin{equation}
	\msub_{l,n}(u) = \sum_{\tau=0}^{T} \msub_{l,n}(u,\tau).
\end{equation}
With this definition, we can state that estimator calculated by \ref{alg:matching} is consistent:
\begin{thm}\label{thm:resampling_main}
	Given the setting described in Section \ref{sec:algo} and under Assumption \ref{ass:absolute_cont_mult_main},  $\msub_{l,n}(u)$ is consistent at each family of feedback policies $u=(u_t)_{t=0}^T$ with $u_t:\mathcal{X}\rightarrow\mathcal{U}$, i.e.,
	\begin{equation}
		\lim_{n\rightarrow\infty}\mathbb{E}\left[\left(\msub_{l_n,n}(u)-m(u)\right)^2\right]=0
	\end{equation}
	with $l_n\rightarrow\infty$, $K_n\rightarrow\infty$ and $\frac{K_n}{n}\rightarrow0$.
\end{thm}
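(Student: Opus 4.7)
The plan is to decompose the mean-squared error of the cumulative reward estimator across time periods and, for each period, perform a bias-variance decomposition in which the ``variance'' is the fluctuation of the resampling average given the data and the ``bias'' reduces to the $L^2(\nu_0)$ risk of a local averaging regression. The technical crux will then be an application of the generalized Stone's theorem (Theorem \ref{thm:stone_mult}) to show that this $L^2$ regression risk vanishes.

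First, applying Minkowski's inequality in $L^2$ to $\msub_{l_n,n}(u)-m(u)=\sum_{\tau=0}^{T}\bigl(\msub_{l_n,n}(u,\tau)-\mathbb{E}[R_\tau|_u]\bigr)$, and using that $T$ is fixed, it suffices to show $\mathbb{E}\bigl[(\msub_{l_n,n}(u,\tau)-\mathbb{E}[R_\tau|_u])^2\bigr]\to0$ for each fixed $\tau$. Conditioning on $\mathcal{D}$ and invoking the law of total variance, I split this per-period MSE as
\[\mathbb{E}\bigl[\mathrm{Var}(\msub_{l_n,n}(u,\tau)\mid\mathcal{D})\bigr]+\mathbb{E}\Bigl[\bigl(\mathbb{E}[\msub_{l_n,n}(u,\tau)\mid\mathcal{D}]-\mathbb{E}[R_\tau|_u]\bigr)^2\Bigr].\]
The first (resampling-variance) term is elementary: the $l_n$ resampling draws $(\tilde X_j,\mathcal{K}^\tau_j)_{j=1}^{l_n}$ are iid given $\mathcal{D}$ and the rewards are bounded in absolute value by $2M$, so this term is at most $4M^2/l_n$ and vanishes because $l_n\to\infty$.

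For the bias term, using the iid structure of the resampling draws and the uniform distribution of $\mathcal{K}^\tau_j$ on $\{1,\dots,K_n\}^{\tau+1}$,
\[\mathbb{E}[\msub_{l_n,n}(u,\tau)\mid\mathcal{D}]=\int_{\mathcal{X}}\bar m_n(x;u,\tau)\,d\nu_0(x),\qquad \bar m_n(x;u,\tau):=\frac{1}{K_n^{\tau+1}}\sum_{k^\tau}^{K_n} R_\tau^{k^\tau}(x;u),\]
while $\mathbb{E}[R_\tau|_u]=\int_{\mathcal{X}} m_\tau(x;u)\,d\nu_0(x)$ with $m_\tau(x;u):=\mathbb{E}[R_\tau|_u\mid X_0=x]$. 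Jensen's inequality then bounds the bias term by $\mathbb{E}\bigl[\int_{\mathcal{X}}(\bar m_n(x;u,\tau)-m_\tau(x;u))^2\,d\nu_0(x)\bigr]$, which is exactly the $L^2(\nu_0)$ risk of the iterative $K$-NN path estimator $\bar m_n(\cdot;u,\tau)$ as an estimator of the target regression function $m_\tau(\cdot;u)$.

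The hard part is controlling this last risk. Unlike a one-step nearest-neighbor regression on iid data, $\bar m_n$ is built from $\tau+1$ successive nearest-neighbor searches in $\mathcal{D}$ whose query points are dictated by the previous step, and the data are only episodically (not transitionally) independent. Three issues must be handled simultaneously: (i) showing that at every step of the path the nearest-neighbor distance vanishes in probability, which requires propagating the coverage condition in Assumption \ref{ass:absolute_cont_mult_main}.2 along the composed trajectory; (ii) transferring the smallness of those distances into smallness of the pathwise reward, using the uniform equicontinuity of $\Gamma$ and the continuity of $r$ iteratively across $\tau+1$ composed random maps; (iii) controlling the variance contribution despite the episodic (non-iid) dependence among data points used at different steps of the path. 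These are precisely the features the authors' extension of Stone's theorem is designed to accommodate. Granted Theorem \ref{thm:stone_mult}, the $L^2(\nu_0)$ regression risk vanishes under $K_n\to\infty$ and $K_n/n\to 0$, and combining this with the vanishing resampling variance and summing over $\tau$ yields the claimed consistency.
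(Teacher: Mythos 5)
Your proof is correct, but it organizes the argument differently from the paper. The paper splits $\msub_{l_n,n}(u)-m(u)$ into $(\msub_{l_n,n}(u)-\mnn_{l_n,n}(u))+(\mnn_{l_n,n}(u)-m(u))$: the second piece is handled wholesale by Theorem \ref{thm:NN_consistent_mult_main}, and the first is bounded by a direct second-moment expansion of $\sum_{k^\tau}R_\tau^{k^\tau}(\tilde X_j;u)\bigl(I_{k^\tau=\mathcal{K}^\tau_j}-K_n^{-(\tau+1)}\bigr)$, exploiting $\mathbb{E}[I_{k^\tau=\mathcal{K}^\tau_j}]=K_n^{-(\tau+1)}$ and independence across $j$ to get a bound of order $M^2/l_n$. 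You instead condition on $\mathcal{D}$ and apply the law of total variance: your conditional-variance term absorbs in one stroke both the $\mathcal{K}^\tau_j$-subsampling fluctuation and the Monte Carlo fluctuation over the initial draws $\tilde X_j$ (which the paper leaves inside the $\mnn$ term, where it reappears as $D_n$ in the proof of Theorem \ref{thm:stone_mult}), and your bound $4M^2/l_n$ is cleaner than the paper's $8M^2/l_n$; your squared-bias term, after Jensen, is exactly the integrated $L^2(\nu_0)$ risk of the full $K_n$-NN path average, i.e.\ the quantity $C_n'$ appearing in the paper's proof of Theorem \ref{thm:stone_mult} specialized to the weights $V_{n,i^\tau,t^\tau}$. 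The one imprecision is attributional: Theorem \ref{thm:stone_mult} is conditional on Assumption \ref{ass:weights_mult}, so it does not by itself yield the vanishing of that risk under $K_n\to\infty$, $K_n/n\to 0$; you also need the verification that the $K_n$-NN path weights satisfy Assumption \ref{ass:weights_mult} (Propositions \ref{prop:asymptotic}--\ref{prop:vanishing_weights} and Lemma \ref{lem:vanish_dist}, i.e.\ the content of Theorem \ref{thm:NN_consistent_mult_main}), which is where the bulk of the paper's technical work lives. Since that result is available, the cleanest closing of your bias step is to note $\mathbb{E}\bigl[(\mathbb{E}[\msub_{l_n,n}(u,\tau)\mid\mathcal{D}]-\mathbb{E}[R_\tau|_u])^2\bigr]\le\mathbb{E}\bigl[(\mnn_{l_n,n}(u,\tau)-\mathbb{E}[R_\tau|_u])^2\bigr]$ by Jensen and invoke Theorem \ref{thm:NN_consistent_mult_main} directly; with that citation fixed, your argument is complete and arguably tidier than the paper's.
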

The proof follows in three steps. We will briefly describe each of these steps, but will refer the details to Section \ref{sec:proof}.

\subsection{Local Averaging Regression Estimators} The first step is to show that a more general class of estimators based on local-averaging regressions is consistent under certain conditions on the local weights. Denote $D = \{(X_i,U_i)\}_{i=1}^n$. Define the local averaging regression for $R_{\tau}|_u$ for the initial condition $x$ and policy $u$ by $\widehat{m}_n(x,u,\tau) =\sum_{i^{\tau}, t^{\tau}}^{n, T} W_{n,i^{\tau}, t^{\tau}}(x,u) R_{i_{\tau},t_{\tau}}$. The weights are for each $(i^{\tau}, t^{\tau})$ real-valued functions of $x$, $u$ and all the state action pairs $D$ (i.e., $W_{n,i^{\tau}, t^{\tau}}(x,u) = W_{n,i^{\tau}, t^{\tau}}(x,u,D)\in\mathbb{R}$). Note that for the local-averaging regression we sum $\tau$-times over the rewards of the entire data set and assign each summand a unique weight. The estimator for $\mathbb{E}[R_{\tau}|_u]$ is given by $\widehat{m}_{l,n}(u,\tau) = \frac{1}{l}\sum_{j=1}^l \widehat{m}_n(\tilde{X}_{j},u,\tau)$, where $\tilde{X}_j$ is sampled independently from the initial state distribution. The local-averaging regression estimator for  $m(u)$ is $\widehat{m}_{l,n}(u) = \sum_{\tau=0}^{T} \widehat{m}_{l,n}(u,\tau)$.

We define $\{(\tilde{X}_{i,t+s},\tilde{U}_{i,t+s})\}_{s=1}^{T-t}$ which is generated by the iid sequence of random functions $(\tilde{\alpha}_{i,t+s})_{s=0}^{T-t-1}$ distributed as in the original data set generation but independent of the original data set and starting from the pair $(X_{i,t},U_{i,t})$, i.e. $(\tilde{X}_{i,t+s},\tilde{U}_{i,t+s})=\tilde{\alpha}_{i,t+s-1}\tilde{\alpha}_{i,t+s-2}\cdots\tilde{\alpha}_{i,t}(X_{i,t},U_{i,t})$. Define $D_{i,t}= (D \setminus\{(X_{i,t+s},U_{i,t+s})\}_{s=1}^{T-t}\cup\{(\tilde{X}_{i,t+s},\tilde{U}_{i,t+s})\}_{s=1}^{T-t}$ as the data set where $\{(X_{i,t+s},U_{i,t+s})\}_{s=1}^{T-t}$ is replaced by $\{(\tilde{X}_{i,t+s},\tilde{U}_{i,t+s})\}_{s=1}^{T-t}$. Analogously, for any multi-index $i^{l},t^{l}$, we recursively define $D_{i^{k:l},t^{k:l}}=(D_{i^{k+1:l},t^{k+1:l}}\setminus\{(X_{i_k,t_k+s},U_{i_k,t_k+s})\}_{s=1}^{T-t_k})\cup\{(\tilde{X}_{i_k,t_k+s},\tilde{U}_{i_k,t_k+s})\}_{s=1}^{T-t_k}$. For a weight, we highlight the data set it depends on by writing $W_{n,i^{\tau}, t^{\tau}}(\tilde{D})$ where $\tilde{D}$ is a data set. With this notation we can now state the conditions on the weights of the local averaging regression under which the estimation is consistent:

\begin{ass}\label{ass:weights_mult}
	For all $\tau=0,...,T$ and with $W_{n,i^{\tau}, t^{\tau}}(\tilde{X}_{1},u)$ denoted by $W_{n,i^{\tau}, t^{\tau}}$, the following holds:
	\begin{enumerate}[leftmargin=*]
		\item For either $\tilde{D}_1=\tilde{D}_2=D$ or $\tilde{D}_1=D_{i^{\tau}, t^{\tau}}$, $\tilde{D}_2=D_{j^{\tau}, s^{\tau}}$,
		\begin{equation*}
			\lim_{n\rightarrow\infty}\mathbb{E}\Bigg[\sum_{i^\tau,t^\tau}^{n,T}\sum_{\substack{j^\tau,s^\tau\\  |[i^\tau,j^\tau]|<2\tau+2}}^{n,T} |W_{n,i^{\tau}, t^{\tau}}(\tilde{D}_1) W_{n,j^\tau,s^\tau}(\tilde{D}_2)| \Bigg] =0
		\end{equation*}
		
		\item There is a $L\geq 1$ such that 
		\begin{equation*}
			\mathbb{P}\Bigg( \sum_{i^{\tau}, t^{\tau}}^{n, T} | W_{n,i^{\tau}, t^{\tau}}|\leq L\Bigg) = 1
		\end{equation*}
		\item For all $a>0$, 
		\begin{align*}
			\begin{split}
				&\lim_{n\rightarrow\infty}\mathbb{E}\Bigg[\sum_{i^{\tau}, t^{\tau}}^{n, T}| W_{n,i^{\tau}, t^{\tau}}|\Bigg(
				I_{d\left((X_{i_{0},t_{0}},U_{i_{0},t_{0}}),(\tilde{X}_{1}, u_{0}(\tilde{X}_{1}))\right)>a}+\sum_{k=0}^{\tau} I_{\tau-k}(a)\Bigg)\Bigg]=0
			\end{split}
		\end{align*}
		where
		\begin{equation*}
			I_{m}(a)  = 
			I_{d\left((X_{i_{m},t_{m}},U_{i_{m},t_{m}} ),(X_{i_{m-1},t_{m-1}+1}, u_{m}(X_{i_{m-1},t_{m-1}+1} ))\right)>a}
		\end{equation*}
		and $X_{i_{-1},t_{-1}}=\tilde{X}_{1}$ for all $i^{\tau}, t^{\tau}$.
		%\item $W_{n,i^{\tau}, t^{\tau}}\geq0$ almost surely for all $i^{\tau}, t^{\tau}$.

		\item It holds in probability that
		\begin{equation*}
			\sum_{i^{\tau}, t^{\tau}}^{n, T} W_{n,i^{\tau}, t^{\tau}}\rightarrow1
		\end{equation*}
		
		\item
		\begin{equation*}
			\lim\limits_{n\rightarrow\infty}\mathbb{E}\Bigg[\Bigg(\sum_{i^{\tau}, t^{\tau}}^{n, T} |W_{n,i^{\tau}, t^{\tau}}(D_{i^{\tau},t^{\tau}})-W_{n,i^{\tau}, t^{\tau}}(D)|\Bigg)^2\Bigg]= 0
		\end{equation*}
		and
		\begin{align*}
			\lim\limits_{n\rightarrow\infty}\mathbb{E}\Bigg[\sum_{i^{\tau}, t^{\tau}}^{n, T}& \sum_{\substack{j^\tau,s^\tau\\  |[i^\tau,j^\tau]|=2\tau+2}}^{n,T}
			|W_{n,i^{\tau}, t^{\tau}}(D_{i^{\tau},t^{\tau}})|\\
			&\cdot|W_{n,j^{\tau}, s^{\tau}}(D_{j^{\tau},s^{\tau}})
			-W_{n,j^{\tau}, s^{\tau}}(D_{(i^{\tau},j^{\tau}),(t^{\tau},s^{\tau})})|\Bigg]= 0
		\end{align*}
		
	\end{enumerate}
\end{ass}

\begin{thm}\label{thm:stone_mult}
	Given the setting described in Section 2 and under Assumption \ref{ass:weights_mult}, $\hat{m}_{l,n}(u)$  is consistent at each family of feedback policies $u=(u_t)_{t=0}^T$ with $u_t:\mathcal{X}\rightarrow\mathcal{U}$, i.e.,
	\begin{equation}
		\lim_{n\rightarrow\infty}\mathbb{E}\left[\left(\hat{m}_{l_n,n}(u)-m(u)\right)^2\right]=0
	\end{equation}
	where $l_n\rightarrow\infty$.
\end{thm}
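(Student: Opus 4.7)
The plan is to extend the classical proof of Stone's Theorem (Theorem 4.1 in \cite{gyorfi2002distribution}) by handling the additional complications introduced by the episodic dependence and the multi-step counterfactual structure. Since $\hat m_{l,n}(u)$ is a sum over $\tau=0,\dots,T$, by $(a_0+\cdots+a_T)^2\leq (T+1)\sum_\tau a_\tau^2$ it suffices to establish vanishing MSE for each fixed $\tau$. Introducing $\mu_\tau(x)\coloneqq \mathbb{E}[R_\tau|_u \mid \widehat{X}_0=x]$, so that $\mathbb{E}[R_\tau|_u]=\mathbb{E}[\mu_\tau(\tilde{X}_1)]$, I would write
\begin{equation*}
\hat m_{l,n}(u,\tau)-\mathbb{E}[R_\tau|_u] = \frac{1}{l}\sum_{j=1}^l\bigl(\hat m_n(\tilde X_j,u,\tau)-\mu_\tau(\tilde X_j)\bigr) + \frac{1}{l}\sum_{j=1}^l\bigl(\mu_\tau(\tilde X_j)-\mathbb{E}[\mu_\tau(\tilde X_1)]\bigr).
\end{equation*}
The second term is a Monte Carlo average of iid bounded centred variables and is $O(1/l_n)$. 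For the first, Cauchy-Schwarz plus the common distribution of the $\tilde X_j$ bounds its squared expectation by $\mathbb{E}[(\hat m_n(\tilde X_1,u,\tau)-\mu_\tau(\tilde X_1))^2]$, reducing the problem to single-point consistency.

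Conditioning on $\tilde X_1$, I would apply the standard bias-variance split. Since the weights are $\sigma(D,\tilde X_1)$-measurable while the reward noise $\epsilon_{i,t}$ is independent of $(D,\tilde X_1)$ with zero mean, the conditional expectation of $\hat m_n(\tilde X_1,u,\tau)$ equals $\mathbb{E}\bigl[\sum_{i^\tau,t^\tau}W_{n,i^\tau,t^\tau}\,r(X_{i_\tau,t_\tau},U_{i_\tau,t_\tau})\mid\tilde X_1\bigr]$. For the bias, the idea is to invoke Assumption \ref{ass:weights_mult}(3) to restrict the weighted sum to multi-indices $(i^\tau,t^\tau)$ where each consecutive pair is within $a$ of the corresponding nearest-neighbor successor. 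Using the uniform equicontinuity of $\Gamma$, I can propagate this local closeness through the random dynamics $\alpha_t$: if $(X_{i_{k-1},t_{k-1}+1}, u_k(X_{i_{k-1},t_{k-1}+1}))$ is $a$-close to $(X_{i_k,t_k},U_{i_k,t_k})$, then the distribution of the subsequent iterate is close as well. Combined with continuity of $r$, Assumption \ref{ass:weights_mult}(4) for normalisation, Assumption \ref{ass:weights_mult}(2) for uniform integrability, and Assumption \ref{ass:absolute_cont_mult_main}(2) to ensure the counterfactual trajectory stays in the support of the sampling distribution, this forces the bias to zero as $a\downarrow 0$.

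For the conditional variance, I would split into a noise contribution and a data contribution. The noise term, $\sum_{i^\tau,t^\tau}\sum_{j^\tau,s^\tau}W_{n,i^\tau,t^\tau}W_{n,j^\tau,s^\tau}\mathbb{E}[\epsilon_{i_\tau,t_\tau}\epsilon_{j_\tau,s_\tau}]$, is non-zero only when $(i_\tau,t_\tau)=(j_\tau,s_\tau)$, which in particular forces $|[i^\tau,j^\tau]|<2\tau+2$; this is controlled directly by Assumption \ref{ass:weights_mult}(1) with $\tilde D_1=\tilde D_2=D$. The data variance calls for an Efron-Stein-type argument. Off the diagonal, I would replace the tails $\{(X_{i_k,t_k+s},U_{i_k,t_k+s})\}_{s\geq 1}$ by iid copies (the construction $D_{i^\tau,t^\tau}$), so that the summands indexed by disjoint multi-indices become conditionally independent given $\tilde X_1$ and the root pairs $(X_{i_k,t_k},U_{i_k,t_k})$. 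Assumption \ref{ass:weights_mult}(5) then guarantees that evaluating the weights on $D$ versus on the surgically modified data differs negligibly in $L^2$, while the overlap contribution (where $|[i^\tau,j^\tau]|<2\tau+2$) is absorbed by the second case of Assumption \ref{ass:weights_mult}(1).

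The main obstacle, and the essential departure from the classical one-step Stone's theorem, lies in the bias analysis. Unlike ordinary regression where one compares $\sum W_i\,m(X_i)$ to a fixed continuous function $m$ at the query point, here the target $\mu_\tau(\tilde X_1)$ is the expectation of $\tau$ compositions of the random maps $\alpha_t$. Approximating it by sums along nearest-neighbor multi-indices requires an induction over the trajectory length, each step exploiting the uniform equicontinuity of $\Gamma$ to transport local closeness in $S$ to closeness in distribution of subsequent iterates, uniformly over all reachable roots. The coverage condition in Assumption \ref{ass:absolute_cont_mult_main}(2) is what permits the induction to close: it ensures that every state visited by the counterfactual system lies in a region the sampling distribution explores sufficiently densely to make the nearest-neighbor resampling meaningful.
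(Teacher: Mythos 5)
Your proposal follows essentially the same route as the paper: the same $(T+1)\sum_\tau$ reduction, the same Monte Carlo/Jensen step to reduce to a single query point $\tilde X_1$, and the same three-way treatment of noise (zero-mean independence plus Assumption 2.1 on the diagonal), bias (locality from Assumption 2.3 propagated through the realized random maps via uniform equicontinuity and continuity of $r$, with Assumptions 2.2 and 2.4 for integrability and normalisation), and data variance (the tail-replacement construction $D_{i^\tau,t^\tau}$ to create conditional independence off the overlap set, controlled by Assumptions 2.1 and 2.5). The only cosmetic difference is organisational — the paper splits the error as $E_n+F_n+G_n$ with a telescoping pathwise argument for the bias rather than a distributional one — but the ingredients and the role of each assumption are identical.
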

The proof of Theorem \ref{thm:stone_mult} is given in Section \ref{subsec:stone}.

\begin{rem}
Theorem \ref{thm:stone_mult} gives consistency of the local averaging regression estimator under certain conditions on the weights. This theorem can be viewed as a generalization of Stone's Theorem (see \cite{gyorfi2002distribution}, Theorem 4.1). In particular, Theorem \ref{thm:stone_mult} generalizes Stone's Theorem in three important aspects: First, Theorem \ref{thm:stone_mult} allows an episodic sampling setting where within each episode the data is not independent and not identically distributed. Second, the counterfactual estimation problem arising through off-policy evaluation is taken into account and, third, we allow for a sequential dependence structure in the weights and through this in the regression. As the conditions on the weights that we propose are still general, we believe that this result can be of independent interest and potentially be applied to a wide range of local averaging regression estimators such as partitioning or kernel-based methods.
\end{rem}

\subsection{The $K_n$-NN path regression estimator}The second step of the proof of Theorem \ref{thm:resampling_main} is to show that the conditions on the weights in Theorem \ref{thm:stone_mult} are satisfied by a regression estimator based on K-NN paths. Define the set $\Lambda_n(x,u)=\{K_n\text{-NN paths starting at } x \text{ under } u\}$ and introduce the $K_n$-NN path regression by $\mnn_n(x,u,\tau) = \sum_{i^{\tau}, t^{\tau}}^{n, T}\frac{1}{(K_n)^\tau} I_{(i^{\tau}, t^{\tau}) \in \Lambda_n(x,u)}\cdot R_{i_{\tau},t_{\tau}}$ or equivalently by $\mnn_n(x,u,\tau) = \frac{1}{(K_n)^\tau} \sum_{k^{\tau}}^{K_n} R_{\tau}^{k^{\tau}}(x;u)$. For $\mathbb{E}[R_{\tau}|_u]$, we use $\mnn_{l,n}(u,\tau) = \frac{1}{l}\sum_{j=1}^l \mnn_n(\tilde{X}_{j},u,\tau)$ where $\tilde{X}_j$ is sampled independently from the initial state distribution and, for $m(u)$ the estimator is $\mnn_{l,n}(u) =\sum_{\tau=0}^{T-1} \mnn_{l,n}(u,\tau)$.  We identify the weights in the $K_n$-NN path regression by $V_{n,i^{\tau}, t^{\tau}}\coloneqq (1/(K_n)^{\tau+1}) I_{(i^{\tau}, t^{\tau}) \text{ is a } K_n\text{-NN path}}(x,u)$.

\begin{thm}\label{thm:NN_consistent_mult_main}
	Given the setting described in Section \ref{sec:algo} and under Assumption \ref{ass:absolute_cont_mult_main},  it holds at each family of feedback policies $u=(u_t)_{t=0}^T$ with $u_t:\mathcal{X}\rightarrow\mathcal{U}$,  that
	\begin{equation}
		\lim_{n\rightarrow\infty}\mathbb{E}\left[\left(\mnn_{l_n,n}(u)-m(u)\right)^2\right]=0
	\end{equation}
	with $l_n\rightarrow\infty$, $K_n\rightarrow\infty$ and $\frac{K_n}{n}\rightarrow0$.
\end{thm}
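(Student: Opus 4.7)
The plan is to apply Theorem \ref{thm:stone_mult} directly, with the generic local averaging weights taken to be the $K_n$-NN path weights $V_{n,i^\tau, t^\tau}=(1/K_n^{\tau+1})\,I_{(i^\tau,t^\tau)\in\Lambda_n(\tilde{X}_1,u)}$. The consistency of $\mnn_{l_n,n}(u)$ then reduces to checking the five conditions of Assumption \ref{ass:weights_mult} for these weights under the regime $K_n\to\infty$ and $K_n/n\to 0$.

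Conditions (2) and (4) follow from a direct counting argument: there are exactly $K_n^{\tau+1}$ distinct $K_n$-NN paths of length $\tau+1$ from any starting point (one per rank sequence $k^\tau\in\{1,\dots,K_n\}^{\tau+1}$), each with weight $1/K_n^{\tau+1}$, so $\sum_{i^\tau,t^\tau}|V_{n,i^\tau,t^\tau}|=1$ almost surely. This delivers (2) with $L=1$ and (4) deterministically. For condition (1), since $(\sum|V|)^2=1$, the task is to control the share of pairs of $K_n$-NN paths sharing at least one episode index. The key combinatorial observation is that any single K-NN selection step can pick at most $T$ samples from any one episode, because each episode contributes only $T$ samples to $\mathcal{D}$. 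Summing over the $(\tau+1)^2$ positions at which overlap can occur yields a bound of order $(\tau+1)^2 T/K_n\to 0$; the same argument applies in the modified-data version of (1) because each replacement leaves the $T$-per-episode count intact.

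For condition (3), the standard $K$-NN localization argument applies: whenever a query lies in the support of the sampling distribution, its $K_n$-th nearest neighbour distance among $nT$ samples vanishes in probability when $K_n/n\to 0$. Assumption \ref{ass:absolute_cont_mult_main}(2) ensures this at every step of a $K_n$-NN path: the initial query $(\tilde X_1, u_0(\tilde X_1))$ lies in $\mathrm{supp}(\pi_{\tilde t})$ for some $\tilde t$ by assumption, and inductively each successor query $(X^{k^q}_{i_q, t_q+1}, u_{q+1}(X^{k^q}_{i_q, t_q+1}))$ is generated by the true dynamics applied to a genuine data sample, so it again lies in the support. Uniform equicontinuity of $\Gamma$ and continuity of the $u_{q+1}$ propagate the vanishing displacement from each step to the next, so the indicator terms in condition (3) vanish with total weight tending to zero.

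The main obstacle is condition (5), the stability of the weights under replacement of the tails of the episodes appearing in $(i^\tau,t^\tau)$. In passing from $D$ to $D_{i^\tau,t^\tau}$ one modifies at most $(\tau+1)T$ samples out of $nT$, so by an averaging argument the expected number of $K_n$-nearest neighbours of a given query that are affected is of order $(\tau+1)K_n/n$. Two effects must be tracked simultaneously: the candidate pool for the K-NN search shifts, and the recursively defined query points themselves change because $X_{i_q, t_q+1}$ is one of the replaced samples. The second effect is controlled using the equicontinuity of $\Gamma$ and continuity of $u$ to propagate the vanishing displacement forward along the path; a union bound over the $(\tau+1)$ steps then gives $\mathbb{E}\bigl[\sum_{i^\tau,t^\tau}|V_{n,i^\tau,t^\tau}(D_{i^\tau,t^\tau})-V_{n,i^\tau,t^\tau}(D)|\bigr]=O(K_n/n)$. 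Since this sum is almost surely bounded by $2$, squaring preserves the vanishing rate. The second part of condition (5), involving the simultaneous double modification indexed by $(i^\tau, j^\tau)$, follows from the same argument applied to the $O(\tau)$ additional modified episodes. With all five conditions verified, Theorem \ref{thm:stone_mult} yields the desired consistency of $\mnn_{l_n,n}(u)$.
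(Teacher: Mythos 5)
Your overall architecture is exactly the paper's: verify Assumption \ref{ass:weights_mult} for the path weights $V_{n,i^{\tau},t^{\tau}}$ and invoke Theorem \ref{thm:stone_mult}. Conditions (2) and (4) are immediate as you say, and your treatment of condition (3) matches the paper's Lemma \ref{lem:vanish_dist} (support of the query at every step via Assumption \ref{ass:absolute_cont_mult_main}.2, then the standard vanishing $K_n$-NN distance under $K_n/n\to 0$; the equicontinuity remark is superfluous there, since each step's indicator is bounded by the $K_n$-NN distance of that step's query on its own). Your combinatorial argument for condition (1) with $\tilde{D}_1=\tilde{D}_2=D$ --- at most $T$ of the $K_n$ neighbours at any step lie in a fixed episode, hence the weight mass of colliding pairs is $O((\tau+1)^2 T/K_n)$ --- is correct and arguably cleaner than the paper's route through the expectation bound $\mathbb{E}[V_{n,i^{\tau},t^{\tau}}]\leq(n-\tau-1)^{-|i^{\tau}|}K_n^{-(\tau+1-|i^{\tau}|)}$ (Proposition \ref{prop:asymptotic}), though for the modified data sets the telescoping normalization $\sum V=1$ no longer holds exactly and needs the rank-stability remark below.

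The genuine gap is condition (5), which is where the paper spends essentially all of its technical effort (Lemma \ref{lem:asymptotic} and Proposition \ref{prop:vanishing_weights}). Two problems. First, your "averaging argument" does not deliver a vanishing bound: the replaced samples are never the ones selected by a path (only the \emph{tails after} $(i_k,t_k)$ are swapped out), so what matters is not how many of the $K_n$ neighbours of a query are replaced, but whether a \emph{selected} sample's rank crosses the level $K_n$. Without the observation that swapping $O(\tau T)$ samples shifts any rank by at most $O(\tau T)$ --- so that only selections whose rank lies in a window of width $O(\tau T)$ around $K_n$ can change status, contributing a fraction $O(\tau T/K_n)$ of the weight mass --- a crude count of affected neighbours summed over all $\sim(nT)^{\tau+1}$ paths gives an $O(T)$ bound, which does not vanish; and establishing that the weight mass on the rank window is indeed $O(T/K_n)$ requires the paper's induction-and-contradiction argument because the window events are entangled with the path weights (Lemma \ref{lem:asymptotic}.5). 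Your claimed rate $O(K_n/n)$ is symptomatic of tracking the wrong quantity. Second, your proposed control of the changed query points via equicontinuity rests on a false premise: the replacement substitutes $X_{i_k,t_k+1}$ by $\tilde{X}_{i_k,t_k+1}=\tilde{\alpha}_{i_k,t_k}(X_{i_k,t_k},U_{i_k,t_k})$, an \emph{independent redraw} of the transition, which is identically distributed but in general nowhere near the original, so there is no "vanishing displacement" to propagate. The paper instead handles this effect distributionally, via the triangle inequality and the fact that the multi-indices for which a query change is forced by a repeated episode index are few enough (at most $O(n^{|i^{\tau}|})$ with $|i^{\tau}|\leq\tau$) that the expectation bound already kills their total contribution. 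As it stands, condition (5) is not established by your argument, so the reduction to Theorem \ref{thm:stone_mult} is incomplete.
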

This theorem is proved by showing that the weights $V_{n,i^{\tau}, t^{\tau}}$ satisfy the conditions of Theorem \ref{thm:stone_mult}. This is done by first analysing the asymptotic behaviour of $\mathbb{E}[V_{n,i^{\tau}, t^{\tau}}]$. This analysis is key to showing that a nearest neighbour-based OPE algorithm can directly use episodic sampling. With this result, we investigate the limit of $\sum_{i^{\tau}, t^{\tau}}^{n,T}V_{n,i^{\tau}, t^{\tau}}$. Finally, it remains to show that with Assumption  \ref{ass:absolute_cont_mult_main} at any relevant point in $\mathcal{S}$ the distance to the $K_n$-nearest neighbor vanishes. The proof of Theorem \ref{thm:NN_consistent_mult_main} is given in Section \ref{subsec:NN_consistent_mult_main}.

As the third and final step, one identifies $\msub_{l,n}(u)$ as a subsampled version of the estimator  $\mnn_{l,n}(u)$ in the sense that $\msub_{l,n}(u)$ averages over a growing number of rewards chosen along randomly selected $K_n$-paths that are also contained in $\mnn_{l,n}(u)$. Then showing that the mean squared error between $\msub_{l,n}(u)$ and $\mnn_{l,n}(u)$ vanishes yields the statement. This proof is given in Section \ref{subsec:resampling_main}.

\section{Experiments}\label{sec:experiments}
We demonstrate the effectiveness of the \ref{alg:matching} algorithm for off-policy evaluation on three stochastic control environments with varying control mechanisms, areas of applications, sources of uncertainty and dimensionality. Namely, we study a linear quadratic regulator environment, trade execution in limit order books and online stochastic bin packing. As baselines, we consider weighted per episode importance sampling (PEIS) \cite{precup2000eligibility},  weighted per decision importance sampling (PDIS) \cite{precup2000eligibility}, weighted doubly-robust (WDR) \cite{thomas2016data}, model-free Monte Carlo (MFMC) \cite{fonteneau2010model,fonteneau2013batch}, fitted Q-evaluation with a linear-quadratic regression (FQE Lin) \cite{le2019batch}, fitted Q-evaluation with a nearest neighbor regression (FQE NN) \cite{le2019batch} and the naive average of rewards generated by the behaviour policy (NA). Before presenting the results, we shortly highlight the differentiating properties of the test environments. Note that more details on the test environments, the target and behaviour policies, the baselines, results and a link to the implementation are included in the supplementary materials.

\begin{figure}
	\centering
	\includegraphics[width=\textwidth]{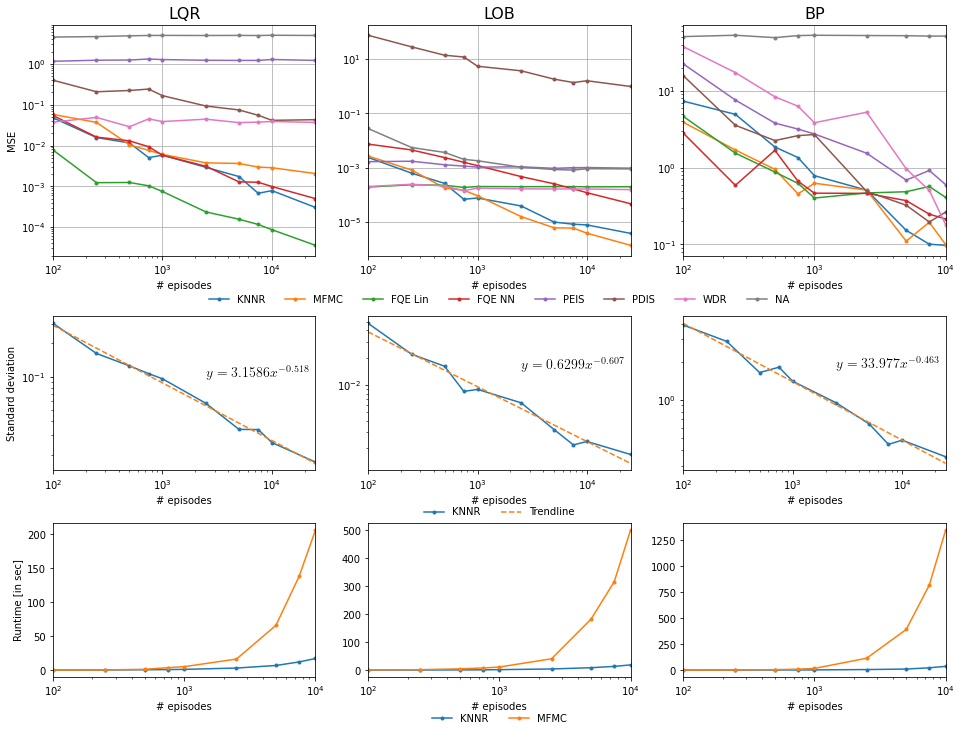}
	\caption{Experimental results: Each column corresponds to one test environment. First row: Median MSEs of the value estimates (where the target for the value is computed via Monte Carlo roll outs using the target policy) versus the number of episodes sampled under the behaviour policy (log-log plot). Second row: Standard deviation in the resampling estimates of the value as a function of the number of episodes in the data set. Third row: Comparison of median runtimes for the resampling algorithm and MFMC depending on the number of episodes in the data set.}
	\label{fig:results}
\end{figure}

\paragraph{Linear Quadratic Regulator} The linear quadratic regulator problem (LQR) is often seen as the simplest non-trivial example for stochastic control and was proposed as an important test-bed in reinforcement learning by \cite{recht2019tour}. The LQR problem has a continuous state and action space, with the dynamics  linear in the state and action, and the reward quadratic in the state and action. It can be viewed as a local approximation of more general non-linear stochastic control problems. For this experiment, we consider the double integrator example from \cite{recht2019tour} with a higher variance to highlight the benefits of the resampling approach in stochastic settings. The LQR environment has a two dimensional state space and a one dimensional action space. 

\paragraph{Optimal Execution in Limit Order Book Markets} The second environment models optimal execution in limit order books (LOB), as in \cite{cartea2015algorithmic}, Chapter 8.2. In this scenario, a trading agent has the task to liquidate a position of assets in a fixed amount of time, trying to maximise trade revenues. They do so by quoting a price difference to a mid-price that varies exogenously. 
%The quote of the price difference is the control variable and allows to influence the probability that an asset is sold for the quoted price, where 
A larger difference to the mid-price is associated with a lower probability of a sale but a higher revenue, and vice versa. The state consists of the current price of the asset (continuous) and inventory (finite). The action space is one dimensional and continuous. The target policy is non-stationary, depending on time and the current level of inventory. Similar control mechanisms are central in the dynamic pricing literature (e.g., \cite{gallego1994optimal}).

\paragraph{Stochastic Online Bin Packing}
We consider a slightly altered version of the stochastic online bin packing (BP) environment as introduced in \cite{gupta2012online,balaji2019orl}. The general idea of the bin packing problem is that over time items of random size arrive and need to be allocated to bins of fixed size, with the goal to minimize open capacities at each point in time. Bin packing is used to model many different resource allocation problems in both operations research and computer science. By our modification, we allow the size of the next arriving item to be negatively correlated with current free capacity. In operations research, this dependence could originate from having a separate dynamic pricing policy depending on available capacities \cite{gallego1994optimal,bayliss2019dynamic}. This problem differs from the previous problems in that both the state and action space are finite. The state space consists of the number of bins at each potential fill level and the size of the arriving item. The action space contains the possible assignments of the arriving item. We choose this problem as a benchmark as it has a higher dimensional state space ($d=10$) and highlights the wider applicability of the resampling algorithm to non-standard stochastic control environments.  

\paragraph{Results} We report our results in Figure \ref{fig:results}.  From the first row, one can see that in all environments our resampling algorithm exhibits decreasing 
Mean-Square Errors (MSEs) for increasing data set sizes, measured in the number of episodes. The performance of the resampling algorithm versus the baselines is competitive. It consistently outperforms PEIS, PDIS and naive averages, and is mostly better or performs similarly to the other baselines. The exception is FQE Lin in the LQR environment. This can be explained by the fact that the value function in the LQR problem has exactly the same functional form as the linear quadratic regression used in FQE Lin for value function estimation. The second row highlights that the standard deviation of the resampling estimates vanishes with approximately the rate $O(n^{-0.5})$ as for Monte Carlo methods. Finally, the third row shows that the actual runtime for the resampling algorithm is lower compared to MFMC and reaches an order of magnitude difference for medium sized data sets.

\section{Proofs}
\label{sec:proof}

In this section, we give the proofs for the main results. Other additional supporting results are included in the appendix.
\subsection{Proof of Theorem \ref{thm:stone_mult}}\label{subsec:stone}
\begin{proof}[Proof of Theorem \ref{thm:stone_mult}]
Note that 
	\begin{align*}
		\mathbb{E}\Bigg[\Bigg(\sum_{\tau=0}^{T}\hat{m}_{l_n,n}(u,\tau)-\mathbb{E}[(R_{\tau}|_u)]\Bigg)^2\Bigg]
		\leq (T+1)\sum_{\tau=0}^{T}\mathbb{E}\left[\left(\hat{m}_{l_n,n}(u,\tau)-\mathbb{E}[(R_{\tau}|_u)]\right)^2\right].
	\end{align*}
	
	It will be sufficient to show that
	\begin{equation*}
		\lim_{n\rightarrow\infty}\mathbb{E}\left[\left(\hat{m}_{l_n,n}(u,\tau)-\mathbb{E}[(R_{\tau}|_u)]\right)^2\right]=0
	\end{equation*}
	for a fixed $\tau$ since the arguments are analogous for all $\tau$.

	For that purpose, consider
	\begin{align*}
		\mathbb{E}&\left[\left(\hat{m}_{l_n,n}(u,\tau)-\mathbb{E}[(R_{\tau}|_u)]\right)^2\right]\\
		&\quad\leq  2\mathbb{E}\left[\left(\frac{1}{{l_n}}\left(\sum_{j=1}^{l_n}\left(\sum_{i^{\tau}, t^{\tau}}^{n, T} W_{n,i^{\tau}, t^{\tau}}(\tilde{X}_{j},u) r_{i_{\tau},t_{\tau}}\right)-\mathbb{E}[R_{\tau}|_u| \tilde{X}_{j}]\right)\right)^2 \right]\\
		&\quad+ 2\mathbb{E}\left[\left(\frac{1}{{l_n}}\sum_{j=1}^{l_n}\mathbb{E}[R_{\tau}|_u| \tilde{X}_{j}]-m(u)\right)^2 \right]\\
		\eqqcolon& 2C_n + 2D_n.
	\end{align*}

	By Jensen's inequality, the fact that the sampling of the initial state is iid and independent of the data set, we get
	\begin{equation*}
		C_n \leq \mathbb{E}\Bigg[\Bigg(\sum_{i^{\tau}, t^{\tau}}^{n, T} W_{n,i^{\tau}, t^{\tau}}(\tilde{X}_{1},u) R_{i_{\tau},t_{\tau}}-\mathbb{E}[R_{\tau}|_u| \tilde{X}_{1}]\Bigg)^2 \Bigg] \eqqcolon C'_n.
	\end{equation*}
	Further\footnote{For readability, we omit $(\tilde{X}_{1},u)$ from the weights.}, 
	\begin{align*}
		C'_n \leq & 3\mathbb{E}\Bigg[\Bigg(\sum_{i^{\tau}, t^{\tau}}^{n, T} W_{n,i^{\tau}, t^{\tau}}(R_{i_{\tau},t_{\tau}}-r(X_{i_{\tau},t_{\tau}},U_{i_{\tau},t_{\tau}}))\Bigg)^2 \Bigg]\\
		& + 3\mathbb{E}\Bigg[\Bigg(\sum_{i^{\tau}, t^{\tau}}^{n, T} W_{n,i^{\tau}, t^{\tau}}\Big(r(X_{i_{\tau},t_{\tau}},U_{i_{\tau},t_{\tau}})\\
		&\qquad\qquad-r(\alpha_{i_{\tau-1},t_{\tau-1}}|_u\circ\cdots\circ\alpha_{i_{0},t_{0}}|_u(\tilde{X}_{1},u_0(\tilde{X}_{1})))\Big)\Bigg)^2 \Bigg]\\
		&+ 3 \mathbb{E}\Bigg[\Bigg(\sum_{i^{\tau}, t^{\tau}}^{n, T} W_{n,i^{\tau}, t^{\tau}}r(\alpha_{i_{\tau-1},t_{\tau-1}}|_u\circ\cdots\circ\alpha_{i_{0},t_{0}}|_u(\tilde{X}_{1},u_0(\tilde{X}_{1})))\\
		& \qquad-\mathbb{E}[R_{\tau}|_u| \tilde{X}_{1}]\Bigg)^2 \Bigg]\\
		\eqqcolon& 3E_n + 3F_n + 3G_n   
	\end{align*}
	In Proposition \ref{prop:stone_mult} in the supplementary materials we show that $D_n$, $E_n$, $F_n$, and $G_n$ vanish using Assumption \ref{ass:weights_mult}.
\end{proof}

\subsection{Proof of Theorem \ref{thm:NN_consistent_mult_main}}\label{subsec:NN_consistent_mult_main}
Due to the recursive definition of $K_n$-NN paths, $V_{n,i^{\tau}, t^{\tau}}$ can be written as a product of weights:
\begin{equation*}
	V_{n,i^{\tau}, t^{\tau}}(\tilde{D})=V_{n,i_{0}, t_{0}}(\tilde{D})\prod_{k=1}^{\tau} V_{n,[(i_{k-1}, t_{k-1}),(i_{k}, t_{k})]}(\tilde{D})
\end{equation*}
where
\begin{equation*}
	V_{n,i_0, t_0}(\tilde{X}_1,u,\tilde{D})=\frac{1}{K_n}I_{(X_{i_0, t_0},U_{i_0, t_0}) \text{ is a } K_n-\text{NN}\text{ in }\tilde{D}}(\tilde{X}_1,u_0(\tilde{X}_1))
\end{equation*}
and
\begin{align*}
	&V_{n,[(i_{k-1}, t_{k-1}),(i_{k}, t_{k})]}(\tilde{D})=\\
	&\quad\frac{1}{K_n}I_{(X_{i_{k}, t_{k}},U_{i_{k}, t_{k}}) \text{ is a } K_n-\text{NN}\text{ in }\tilde{D}}(X_{i_{k-1}, t_{k-1}+1},u_{t+1}(X_{i_{k-1}, t_{k-1}+1})).
\end{align*} 
We omit the dependence on the data set $\tilde{D}$, if the unaltered data $D$ is used. Finally, note that by definition it holds almost surely for $k=1,\dots, \tau$ that
\begin{equation*}
	\sum_{i=1}^{n}\sum_{t=0}^{T-1}V_{n,[(i_{k-1}, t_{k-1}),(i, t)]}=1\quad\text{and}\quad 	\sum_{i=1}^{n}\sum_{t=0}^{T-1}V_{n,i, t}(\tilde{X}_1,u)=1.
\end{equation*}
The next two propositions analyse the asymptotic behaviour of $\mathbb{E}[V_{n,i^{\tau}, t^{\tau}}]$ and  $\sum_{i^{\tau}, t^{\tau}}^{n,T}V_{n,i^{\tau}, t^{\tau}}$, respectively.

\begin{prop}\label{prop:asymptotic}
	Let Assumption 1.1 be satisfied. For all $\tau\leq T$ and $(i^{\tau},t^{\tau})$, it holds for all $n$ that
	\begin{equation}
		\mathbb{E}[V_{n,i^{\tau}, t^{\tau}}]\leq (n-\tau-1)^{-|i^{\tau}|}K^{-(\tau+1-|i^{\tau}|)}_n.
	\end{equation}
\end{prop}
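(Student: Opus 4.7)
I would induct on $\tau$, exploiting the permutation symmetry of the $n$ iid episodes together with the recursive product decomposition of $V_{n, i^\tau, t^\tau}$ stated just before the proposition. Write $m = |i^\tau|$ and let $P$ denote the set partition of $\{0, \ldots, \tau\}$ induced by $k \sim k' \iff i_k = i_{k'}$; two multi-indices $i^\tau$ and $j^\tau$ are said to have the same pattern (written $j^\tau \sim i^\tau$) when they induce the same partition $P$. Because the episodes are iid and the $K_n$-NN path construction is equivariant under relabeling of episode indices, for any permutation $\sigma$ of $\{1, \ldots, n\}$ one has $\mathbb{E}[V_{n, i^\tau, t^\tau}] = \mathbb{E}[V_{n, \sigma(i^\tau), t^\tau}]$, and averaging over $\sigma$ yields the identity
\[
\mathbb{E}[V_{n,i^\tau, t^\tau}] \;=\; \frac{(n-m)!}{n!}\, \mathbb{E}\Big[\sum_{j^\tau \sim i^\tau} V_{n,j^\tau, t^\tau}\Big].
\]
Since $(n-m)!/n! \le (n-m+1)^{-m} \le (n-\tau-1)^{-m}$ for $m \le \tau+1$, it is enough to show
\[
\mathbb{E}\Big[\sum_{j^\tau \sim i^\tau} V_{n,j^\tau, t^\tau}\Big] \;\le\; K_n^{-(\tau+1-m)}.
\]

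I would prove this reduced bound by induction on $\tau$. The base case $\tau = 0$ (with $m=1$) is immediate from $\sum_{j_0=1}^n V_{n, j_0, t_0} = (1/K_n)\,\big|\{K_n\text{-NNs of }(\tilde X_1, u_0(\tilde X_1))\text{ at time }t_0\}\big| \le 1$. For the inductive step I use the factorization $V_{n, i^\tau, t^\tau} = V_{n, i^{\tau-1}, t^{\tau-1}} \cdot V_{n,[(i_{\tau-1}, t_{\tau-1}),(i_\tau, t_\tau)]}$ and distinguish whether the $P$-class of $\tau$ coincides with an earlier one or is new. In the \emph{repeated-episode} case (so $|i^{\tau-1}| = m$), the trivial bound $V_{n,[\cdot,\cdot]} \le 1/K_n$ gives
\[
\sum_{j^\tau \sim i^\tau} V_{n,j^\tau, t^\tau} \;\le\; \frac{1}{K_n} \sum_{j^{\tau-1} \sim i^{\tau-1}} V_{n,j^{\tau-1}, t^{\tau-1}}.
\]
In the \emph{new-episode} case (so $|i^{\tau-1}| = m - 1$), I parametrize each $j^\tau \sim i^\tau$ by distinct values $(c_1, \ldots, c_{m-1})$ assigned to the preexisting classes plus $c_m$ assigned to the singleton class $\{\tau\}$, and sum $c_m$ out first. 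Since the preceding factor $V_{n, j^{\tau-1}, t^{\tau-1}}$ does not involve $c_m$,
\[
\sum_{c_m \notin \{c_1, \ldots, c_{m-1}\}} V_{n,[(j_{\tau-1}, t_{\tau-1}), (c_m, t_\tau)]} \;\le\; \frac{1}{K_n}\,\big|\{K_n\text{-NNs at time } t_\tau\}\big| \;\le\; 1,
\]
which reduces the full sum to $\sum_{j^{\tau-1} \sim i^{\tau-1}} V_{n, j^{\tau-1}, t^{\tau-1}}$. In either case, applying the inductive hypothesis in its equivalent summed form (with $m$ replaced by $|i^{\tau-1}|$, via the same permutation identity) closes the induction.

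The main obstacle is the bookkeeping in the new-episode case: one must parametrize the pattern equivalence class so that the inner factor depends on $c_m$ only through its own episode argument while the remaining product depends only on $(c_1, \ldots, c_{m-1})$, which is what makes the summation bound $\sum_{c_m} V_{n,[\cdot,\cdot]} \le 1$ applicable. Working with the sum over the whole pattern class via permutation averaging, rather than trying to condition on a fixed set of already-used episodes, is essential because the $K_n$-NN set at each step depends on the entire data set and not merely on the episodes selected earlier, so one cannot cleanly factorize the expectation through a filtration that fixes the used episodes.
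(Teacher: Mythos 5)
Your proof is correct, and it rests on the same two ingredients as the paper's --- exchangeability of the $n$ iid episodes and the almost-sure normalization $\sum_{i=1}^{n}\sum_{t=0}^{T-1}V_{n,[(i_{\tau},t_{\tau}),(i,t)]}=1$ --- but it organizes them differently. The paper inducts on $\tau$ over a \emph{single} multi-index, handling the repeated-episode case exactly as you do (trivial bound $V_{n,[\cdot,\cdot]}\le 1/K_n$) but handling the new-episode case by contradiction: it assumes the expectation bound fails, invokes episode symmetry to propagate the assumed lower bound to all $j$ with $|(i^{\tau},j)|>|i^{\tau}|$, and then sums against the normalization identity to get $1>1$. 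You instead sum over the whole pattern-equivalence class from the outset, which turns that contradiction into a direct computation: the new episode index $c_m$ is summed out against the normalization, and the repeated-index case loses a factor $1/K_n$ termwise. Your route buys two things: the intermediate bound $\sum_{j^{\tau}\sim i^{\tau}}V_{n,j^{\tau},t^{\tau}}\le K_n^{-(\tau+1-|i^{\tau}|)}$ holds \emph{almost surely} rather than only in expectation, and the prefactor $(n-|i^{\tau}|)!/n!$ is slightly sharper than $(n-\tau-1)^{-|i^{\tau}|}$. The one point you rightly flag as the crux --- that $V_{n,j^{\tau-1},t^{\tau-1}}$ is constant as $c_m$ varies, so it factors out of the inner sum --- is sound, since the earlier weights depend on the data set only as an unlabeled whole plus the episodes $c_1,\dots,c_{m-1}$; this is the same exchangeability fact the paper uses, just deployed pointwise rather than distributionally.
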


\begin{proof}
	We prove this statement by induction over $\tau$.
	First, consider $\tau=0$:
	\begin{align*}
		\mathbb{E}[V_{n,i_0, t_0}(\tilde{X}_1,u)]=& \frac{1}{K_n}\mathbb{P}((X_{i_0, t_0},U_{i_0, t_0}) \text{ is a } K_n-\text{NN}\text{ of }(\tilde{X}_1,u_0(\tilde{X}_1)) \text{ in }D )\\
		\leq &\frac{1}{K_n} \mathbb{P}((X_{i_0, t_0},U_{i_0, t_0}) \text{ is a } K_n-\text{NN}\text{ of }(\tilde{X}_1,u_0(\tilde{X}_1)) \text{ in }D_{t_0} )\\
		\leq& \frac{K_n}{nK_n }\leq \frac{1}{n-1}
	\end{align*}
	where $D_{t_0}\coloneqq\{(X_{i, t_0},U_{i, t_0})\}_{i=1}^n$ and we use that the samples in $D_{t_0}$ are iid.
	
	For $\tau\Rightarrow\tau+1$ where $\tau+1\leq T$, we consider two cases. In the first case we assume that $|i^{\tau}|=|i^{\tau+1}|$:
	\begin{align*}
		\mathbb{E}&[V_{n,i^{\tau+1}, t^{\tau+1}}]=\mathbb{E}[V_{n,[(i_{\tau}, t_{\tau}),(i_{\tau+1}, t_{\tau+1})]}V_{n,i^{\tau}, t^{\tau}}]\\
		=&\mathbb{E}[V_{n,[(i_{\tau}, t_{\tau}),(i_{\tau+1}, t_{\tau+1})]}\mid V_{n,i^{\tau}, t^{\tau}}=K_n^{-(\tau+1)}]\mathbb{E}[V_{n,i^{\tau}, t^{\tau}}]\\
		\leq& (n-1)^{-|i^{\tau}|}K^{-(\tau+2-|i^{\tau}|)}_n\\
		&\cdot \mathbb{P}\Big((X_{i_{k}, t_{k}},U_{i_{k}, t_{k}}) \text{ is a }
		 K_n-\text{NN}\text{ of }(X_{i_{k-1}, t_{k-1}+1},u_{t+1}(X_{i_{k-1}, t_{k-1}+1}))\text{ in }D, (i^{\tau},t^{\tau})\\
		 &\qquad\qquad\mid V_{n,i^{\tau}, t^{\tau}}=K_n^{-(\tau+1)}\Big)\\
		\leq&	(n-1)^{-|i^{\tau+1}|}K^{-(\tau+2-|i^{\tau+1}|)}_n
	\end{align*}                         
	For the second case consider $|i^{\tau}|+1=|i^{\tau+1}|$. Assume that 
	\begin{equation*}
		\mathbb{E}[V_{n,i^{\tau+1}, t^{\tau+1}}]> (n-\tau-2)^{-|i^{\tau+1}|}K^{-(\tau+2-|i^{\tau+1}|)}_n.
	\end{equation*}
	By the iid assumption for the sampling of the different episodes, it has to hold that
	\begin{equation}\label{eq:contradict}
		\mathbb{E}[V_{n,[(i_{\tau}, t_{\tau}),(j, t_{\tau+1})]}V_{n,i^{\tau}, t^{\tau}}]> (n-\tau-2)^{-|i^{\tau+1}|}K^{-(\tau+2-|i^{\tau+1}|)}_n
	\end{equation}
	for all $j$ where $|(i^{\tau},j)|>|i^{\tau}|$.
	By induction hypothesis it follows that
	\begin{align*}
		\mathbb{E}[V_{n,[(i_{\tau}, t_{\tau}),(j, t_{\tau+1})]}\mid V_{n,i^{\tau}, t^{\tau}}=1/K^{\tau+1}_n]> (n-\tau-2)^{-1}
	\end{align*}
	Recall that $\sum_{i=1}^{n}\sum_{t=0}^{T-1}V_{n,[(i_{\tau}, t_{\tau}),(i, t)]}=1$ almost surely and, thus,
	\begin{equation*}
		\sum_{i=1}^{n}\sum_{t=0}^{T-1}\mathbb{E}[	\mathbb{E}[V_{n,[(i_{\tau}, t_{\tau}),(i, t)]}\mid V_{n,i^{\tau}, t^{\tau}}=1/K^{\tau+1}_n]]=1
	\end{equation*} 
	We use the notation $i\in i^{\tau}$ as shorthand for $i\in {i_k}_{k=0}^{\tau}$ and $i\notin i^{\tau}$ for $i\notin {i_k}_{k=0}^{\tau}$.  Splitting the sum and using condition (\ref{eq:contradict}), yields
	\begin{align*}
		1=& \sum_{\substack{i=1,\\ i\in i^{\tau}}}^{n}\sum_{t=0}^{T-1}	\mathbb{E}[V_{n,[(i_{\tau}, t_{\tau}),(i, t)]}\mid V_{n,i^{\tau}, t^{\tau}}=1/K^{\tau+1}_n] \\
		&+ \sum_{\substack{i=1,\\ i\notin i^{\tau}}}^{n}\sum_{\substack{t=0,\\ t\neq t_{\tau+1}}}^{T-1}	\mathbb{E}[V_{n,[(i_{\tau}, t_{\tau}),(i, t)]}\mid V_{n,i^{\tau}, t^{\tau}}=1/K^{\tau+1}_n]\\
		&+\sum_{\substack{i=1,\\ i\notin i^{\tau}}}^{n}	\mathbb{E}[V_{n,[(i_{\tau}, t_{\tau}),(i, t_{\tau+1})]}\mid V_{n,i^{\tau}, t^{\tau}}=1/K^{\tau+1}_n]\\
		\geq& \sum_{\substack{i=1,\\ i\notin i^{\tau}}}^{n}	\mathbb{E}[V_{n,[(i_{\tau}, t_{\tau}),(i, t_{\tau+1})]}\mid V_{n,i^{\tau}, t^{\tau}}=1/K^{\tau+1}_n]\\
		>& \frac{n-\tau-2}{n-\tau-2}=1
	\end{align*}
	This is a contradiction. Hence, the induction step is satisfied and the proposition holds. 
\end{proof}

\begin{prop}\label{prop:vanishing_weights}
	Let Assumption 1.1 be satisfied. Then it holds that:
	\begin{enumerate}[leftmargin=*]
		\item 	
		\begin{equation}
			\lim_{n\rightarrow\infty}\mathbb{E}\Bigg[\sum_{i^\tau,t^\tau}^{n,T}\sum_{\substack{j^\tau,s^\tau\\  |[i^\tau,j^\tau]|<2\tau+2}}^{n,T} V_{n,i^{\tau}, t^{\tau}}(\tilde{D}_1) V_{n,j^\tau,s^\tau}(\tilde{D}_2) \Bigg] =0
		\end{equation}
		for $\tilde{D}_1=\tilde{D}_2=D$ and $\tilde{D}_1=D_{i^{\tau}, t^{\tau}}$, $\tilde{D}_2=D_{j^{\tau}, s^{\tau}}$.
		\item 
		\begin{equation}
			\lim\limits_{n\rightarrow\infty}\mathbb{E}\Bigg[\Bigg(\sum_{i^{\tau}, t^{\tau}}^{n, T} |V_{n,i^{\tau}, t^{\tau}}(D_{i^{\tau},t^{\tau}})-V_{n,i^{\tau}, t^{\tau}}(D)|\Bigg)^2\Bigg]= 0.
		\end{equation}
		\item 
		\begin{equation}
			\begin{split}
				\lim\limits_{n\rightarrow\infty}\mathbb{E}&\Bigg[\sum_{i^{\tau}, t^{\tau}}^{n, T} \sum_{\substack{j^\tau,s^\tau\\  |[i^\tau,j^\tau]|=2\tau+2}}^{n,T}
				V_{n,i^{\tau}, t^{\tau}}(D_{i^{\tau},t^{\tau}})\\
				&\qquad\cdot|V_{n,j^{\tau}, s^{\tau}}(D_{j^{\tau},s^{\tau}})
				-V_{n,j^{\tau}, s^{\tau}}(D_{(i^{\tau},j^{\tau}),(t^{\tau},s^{\tau})})|\Bigg]= 0
			\end{split}
		\end{equation}
	\end{enumerate}
\end{prop}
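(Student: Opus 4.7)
The plan is to prove each of the three parts in turn by extending the moment bound of Proposition~\ref{prop:asymptotic} and combining it with a stability argument that quantifies how robust the $K_n$-NN path weights are under replacement of a vanishing fraction of the samples.

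For part 1, I would first establish a two-path generalization of Proposition~\ref{prop:asymptotic}:
$$\mathbb{E}\bigl[V_{n,i^\tau,t^\tau}(\tilde D_1)\, V_{n,j^\tau,s^\tau}(\tilde D_2)\bigr] \;\leq\; (n - 2\tau - 1)^{-|[i^\tau,j^\tau]|}\, K_n^{-(2\tau+2-|[i^\tau,j^\tau]|)}.$$
The proof mirrors that of Proposition~\ref{prop:asymptotic}: one inducts across the combined $2(\tau+1)$ indicator factors, using the iid structure across episodes together with the a.s.\ identities $\sum_{i,t} V_{n,i,t} = 1$ and $\sum_{i,t} V_{n,[(i_{k-1}, t_{k-1}),(i,t)]} = 1$. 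Both data-set choices work uniformly because the replacement samples $\tilde X,\tilde U$ share the originals' conditional distributions, preserving the cross-episode independence. Grouping the sum by the collision size $p = |[i^\tau,j^\tau]|$, the number of index tuples with this pattern is $O(n^p T^{2\tau+2})$ for fixed $\tau,T$; multiplied by the bound above, each slice contributes $O(T^{2\tau+2} K_n^{-(2\tau+2-p)})$, which vanishes because the constraint $p < 2\tau+2$ leaves at least one factor of $K_n^{-1} \to 0$.

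Parts 2 and 3 are stability statements. The common core is to couple the two data sets via the same underlying randomness and then bound the probability that the $K_n$-NN structure is actually disturbed. In part 2, $D$ and $D_{i^\tau,t^\tau}$ differ only at the $O(T(\tau+1))$ positions $\{(i_k,t_k+s):0\le k\le\tau,\,1\le s\le T-t_k\}$. For the weight $V_{n,i^\tau,t^\tau}$ to change, at some step of the $\tau+1$-step cascade either a perturbed sample must land among the $K_n$ nearest neighbors of the query, or the query itself (which involves the sample at position $(i_{k-1},t_{k-1}+1)$) must shift enough to change the NN set. Because the perturbed continuations and the original continuations are iid with the same conditional distribution given the kept data and the anchor pairs $(X_{i_k,t_k},U_{i_k,t_k})$, a rank-symmetry argument bounds each such event by $O(K_n/n)$. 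Chaining over the cascade and summing over $(i^\tau,t^\tau)$ with the combinatorial count from part 1 yields an $L^1$ bound of order $K_n/n$ on $\sum|V_1-V_2|$; a variance bound on $\sum V_1$ around its expectation $1$ (again following the Proposition~\ref{prop:asymptotic} induction) upgrades this to the required $L^2$ statement via Cauchy--Schwarz. Part 3 is handled by factoring out $V_{n,i^\tau,t^\tau}(D_{i^\tau,t^\tau})$ and controlling it through the two-path extension of Proposition~\ref{prop:asymptotic} (the disjointness $|[i^\tau,j^\tau]|=2\tau+2$ enters here), then applying the same coupling to $V_{n,j^\tau,s^\tau}$ for its own perturbation between $D_{j^\tau,s^\tau}$ and $D_{(i^\tau,j^\tau),(t^\tau,s^\tau)}$.

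The main obstacle lies in the stability steps. The weight $V_{n,i^\tau,t^\tau}$ cascades $\tau+1$ successive $K_n$-NN searches, so a perturbation in the data can in principle propagate and compound across the cascade: both the query at each step and the candidate set are affected simultaneously. Making the coupling argument rigorous and extracting the correct scaling is where the technical crux sits, and the hypothesis $K_n/n\to 0$ is what ultimately drives the bound to zero.
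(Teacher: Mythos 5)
Your overall architecture matches the paper's: a two-path extension of Proposition~\ref{prop:asymptotic} of the form $\mathbb{E}[V_{n,i^{\tau},t^{\tau}}(\tilde D_1)V_{n,j^{\tau},s^{\tau}}(\tilde D_2)]\leq (n-2\tau-2)^{-|[i^{\tau},j^{\tau}]|}K_n^{-(2\tau+2-|[i^{\tau},j^{\tau}]|)}$ proved by the same normalization-and-contradiction induction, followed by grouping the index tuples by collision size so that every slice with $|[i^\tau,j^\tau]|<2\tau+2$ retains at least one surviving factor $K_n^{-1}$. That settles part 1 exactly as in the paper. For the stability parts, the paper's mechanism is also close to yours in spirit but quantitatively different: it telescopes the replacement $D\to D_{i^\tau,t^\tau}$ one episode at a time, writes each difference of indicator weights as $K_n^{-1}(I_A+I_B)$ where $A,B$ are rank-flip events, and observes that exchanging at most $T$ samples moves any rank by at most $2T$, so the flip event is contained in ``the candidate's rank lies in a window of width $2T$ around $K_n$''; the same counting argument as in Proposition~\ref{prop:asymptotic} then yields an extra gain factor of order $T/K_n$ per difference factor (Lemma~\ref{lem:asymptotic}.5--9). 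The paper also has to introduce intermediate weights $\tilde V$ because the query point $X_{i_{k-1},t_{k-1}+1}$ is itself one of the replaced samples; you flag this (``the query itself must shift'') but do not resolve it.

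The genuine gap is in your part 2. You propose to establish an $L^1$ bound on $\sum_{i^\tau,t^\tau}|V_{n,i^{\tau},t^{\tau}}(D_{i^{\tau},t^{\tau}})-V_{n,i^{\tau},t^{\tau}}(D)|$ and then ``upgrade'' it to the required $L^2$ statement via Cauchy--Schwarz and a variance bound on $\sum V_1$. This cannot work as stated: Cauchy--Schwarz gives $\mathbb{E}[Z^2]\geq(\mathbb{E}[Z])^2$, so an $L^1$ bound does not control the second moment, and the term-by-term application $\mathbb{E}[a_ia_j]\leq\sqrt{\mathbb{E}[a_i^2]\mathbb{E}[a_j^2]}$ combined with $a_i\leq 2K_n^{-(\tau+1)}$ produces a bound of order $(n/K_n)^{\tau+1}K_n^{-1}$, which diverges. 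The trivial almost-sure bound $\sum_{i^\tau,t^\tau}V_{n,i^\tau,t^\tau}(\cdot)=1$ is also unavailable here because each summand is evaluated on a \emph{different} perturbed data set $D_{i^\tau,t^\tau}$, so the sum need not be bounded. The only way through --- and the one the paper takes --- is to expand the square into a double sum over pairs $(i^\tau,t^\tau)$, $(j^\tau,s^\tau)$ and invoke a two-index version of the difference bound (Lemma~\ref{lem:asymptotic}.8): for fully disjoint pairs ($|[i^\tau,j^\tau]|=2\tau+2$) the \emph{product} of the two difference-gain factors supplies $K_n^{-2}$ against the $n^{2\tau+2}$ tuples, while colliding pairs are controlled by the reduced tuple count exactly as in part 1. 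You already deploy this double-sum machinery in parts 1 and 3, so the fix is available to you, but the route you actually propose for part 2 breaks at the $L^1\to L^2$ step. Part 3 needs only first-moment control and your factorization plus coupling argument is sound there.
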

\begin{proof}
	The proof relies on different variations of Proposition \ref{prop:asymptotic}. The details can be found in the appendix.
\end{proof}
The following Lemma guarantees that under Assumption  \ref{ass:absolute_cont_mult_main} at any relevant point in $\mathcal{S}$ the distance to the $K_n$-nearest neighbor vanishes.

\begin{lem}\label{lem:vanish_dist}
	Under Assumption \ref{ass:absolute_cont_mult_main} , let $\lim_{n\rightarrow\infty}K_n/n = 0$ and $x_0\in supp(\pi_t|_x)$ for some $t=0,\dots,T-1$, then
	\begin{equation}\label{eq:vanishing_distance_0}
		d\left( (x_0, u_0(x_0)),(X_{(K_n,n)},U_{(K_n,n)})(x_0, u_0(x_0))\right)\rightarrow0
	\end{equation}
	and, for all $\tau=1,\dots,T-1$ and any fixed $K_n$-nearest neighbor path $k^{\tau-1}=[k_0,\dots, k_{\tau-1}]$,
	\begin{equation}\label{eq:vanishing_distance_tau}
		\begin{split}
			d\Big( (X^{k^{\tau-1}}_{i_{\tau-1},t_{\tau-1}+1}&,u_{\tau}(X^{k^{\tau-1}}_{i_{\tau-1},t_{\tau-1}+1}))\\
			&,(X^{k^{\tau-1}}_{(K_n,n)},U^{k^{\tau-1}}_{(K_n,n)})((X^{k^{\tau-1}}_{i_{\tau-1},t_{\tau-1}+1},u_{\tau}(X^{k^{\tau-1}}_{i_{\tau-1},t_{\tau-1}+1})))\Big)\rightarrow0
		\end{split}
	\end{equation} 
	almost surely.
\end{lem}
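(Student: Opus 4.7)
The plan is to establish both parts via the strong law of large numbers (SLLN) applied to the iid samples at a time level $\tilde{t}$ supplied by the coverage condition in Assumption \ref{ass:absolute_cont_mult_main}.2, combined with $K_n/n\to 0$.

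For equation (\ref{eq:vanishing_distance_0}), the query point $(x_0, u_0(x_0))$ is deterministic, and Assumption \ref{ass:absolute_cont_mult_main}.2 supplies some $\tilde{t}<T-1$ with $(x_0, u_0(x_0))\in supp(\pi_{\tilde{t}})$. Hence $\pi_{\tilde{t}}(B_\epsilon(x_0, u_0(x_0)))>0$ for every $\epsilon>0$. The sub-collection $\{(X_{i,\tilde{t}}, U_{i,\tilde{t}})\}_{i=1}^n\subset\mathcal{D}$ is iid with law $\pi_{\tilde{t}}$, so by SLLN its empirical mass inside $B_\epsilon(x_0, u_0(x_0))$ is at least $n\,\pi_{\tilde{t}}(B_\epsilon(x_0, u_0(x_0)))/2$ for all $n$ large enough, almost surely. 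Since $K_n=o(n)$, this count eventually exceeds $K_n$, which forces the $K_n$-nearest neighbor in $\mathcal{D}$ to lie in $B_\epsilon(x_0, u_0(x_0))$. A countable intersection over $\epsilon\downarrow 0$ yields (\ref{eq:vanishing_distance_0}) almost surely.

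For (\ref{eq:vanishing_distance_tau}), write $Y_n:=(X^{k^{\tau-1}}_{i_{\tau-1},t_{\tau-1}+1}, u_\tau(X^{k^{\tau-1}}_{i_{\tau-1},t_{\tau-1}+1}))$ for the random query point. I would first verify that its state coordinate lies almost surely in $supp(\pi_{t_{\tau-1}+1}|_x)$: it equals $X_{i_{\tau-1},t_{\tau-1}+1}$ for some random index $i_{\tau-1}$, each $X_{i,s}$ has marginal law $\pi_s|_x$, and a union bound over the finite index set $(i,s)\in\{1,\dots,n\}\times\{0,\dots,T-1\}$ rules out excursions outside the support. Applying Assumption \ref{ass:absolute_cont_mult_main}.2 pointwise then produces a random $\tilde{t}<T-1$ with $Y_n\in supp(\pi_{\tilde{t}})$ and $\pi_{\tilde{t}}(B_\epsilon(Y_n))>0$ almost surely.

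The main technical obstacle is that $Y_n$ depends on the very data set in which the nearest-neighbor search is conducted, so a direct conditional SLLN as in Part 1 is unavailable. The plan is to exploit that $Y_n$ depends only on the at most $\tau$ independent episodes touched by the fixed index sequence $k^{\tau-1}$, leaving at least $n-\tau$ episodes as an ample iid pool of candidate neighbors. Combining this observation with a Vapnik--Chervonenkis uniform law of large numbers for the finite-VC class of metric balls in $\mathbb{R}^{d_1+d_2}$ gives, at each fixed time $\tilde{t}\in\{0,\dots,T-2\}$,
\begin{equation*}
\sup_{y}\,\Big|\,\frac{1}{n}\sum_{i=1}^n I_{(X_{i,\tilde{t}}, U_{i,\tilde{t}})\in B_\epsilon(y)}-\pi_{\tilde{t}}(B_\epsilon(y))\,\Big|\to 0 \quad \textrm{a.s.}
\end{equation*}
Specialising at $y=Y_n$ and using $\pi_{\tilde{t}}(B_\epsilon(Y_n))>0$ almost surely, the empirical count grows linearly in $n$ and therefore exceeds $K_n=o(n)$ for all $n$ large enough, bounding the $K_n$-NN distance in $\mathcal{D}$ by $\epsilon$. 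Decomposing into the finitely many possibilities $\tilde{t}\in\{0,\dots,T-2\}$ handles measurability in the random choice of $\tilde{t}$, and a countable diagonal over $\epsilon\downarrow 0$ upgrades the bound to the almost-sure convergence asserted in (\ref{eq:vanishing_distance_tau}).
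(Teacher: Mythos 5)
Your Part 1 is essentially the paper's argument: translate the event that the $K_n$-NN distance exceeds $\varepsilon$ into the event that the empirical count of samples in the $\varepsilon$-ball falls below $K_n$, apply the SLLN to the iid samples at a time level where the ball has positive mass (the paper sums over all $t$; you pick a single $\tilde t$ from Assumption \ref{ass:absolute_cont_mult_main}.2 -- both work), and use $K_n/n\to0$. For Part 2 you take a genuinely different route. The paper conditions on the value $\tilde{x}$ of the query point, applies a conditional SLLN for each fixed $\tilde{x}$ in $\bigcup_t supp(\pi_t|_x)$, and then integrates over the law of the query point; you instead invoke a Vapnik--Chervonenkis uniform law of large numbers over the class of $\varepsilon$-balls and then specialise at the random query point $Y_n$. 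Your device is arguably cleaner on the dependence issue you correctly identify: uniformity in $y$ makes the substitution $y=Y_n$ legitimate without any conditioning, whereas the paper's conditional SLLN quietly assumes that conditioning on a statistic of the data preserves the iid structure needed for the law of large numbers. (Your remark about the $n-\tau$ untouched episodes is not actually needed once you go the VC route.)

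There is, however, a gap in your final step. The uniform LLN gives $\sup_y|\frac1n\sum_i I_{(X_{i,\tilde t},U_{i,\tilde t})\in S(y,\varepsilon)}-\pi_{\tilde t}(S(y,\varepsilon))|\to0$, but to conclude that the empirical count at $Y_n$ eventually dominates $K_n=o(n)$ you need $\liminf_{n}\pi_{\tilde t}(S(Y_n,\varepsilon))>0$ almost surely, not merely $\pi_{\tilde t}(S(Y_n,\varepsilon))>0$ for each $n$. Since $Y_n$ is a different random point for each $n$ (the $K_n$-NN path is recomputed on the growing data set), the sequence $\pi_{\tilde t}(S(Y_n,\varepsilon))$ could in principle decay to zero, e.g.\ if the query points drift toward regions of vanishing ball mass in a non-compact support. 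You would need an additional localisation step -- for instance, restricting $Y_n$ to a compact set carrying most of its mass, on which $y\mapsto\pi_{\tilde t}(S(y,\varepsilon))$ is bounded below over the support, and controlling the exceptional event. To be fair, the paper's own proof elides essentially the same point: it proves almost-sure convergence for each \emph{fixed} $\tilde x$ in the support and then asserts the conclusion for the $n$-dependent random query point, so your proposal is no less rigorous than the published argument, but the step as you state it does not yet close.
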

\begin{proof}
	Let $\varepsilon>0$ and fix arbitrarily $k^{\tau-1}$, $\tau$ and $x_0$. $S(x,\varepsilon)$ denote the $\varepsilon$-ball centred at $x\in S$.
	For proving (\ref{eq:vanishing_distance_tau}), observe that
	\begin{align*}
		&\left\{d \left( (X^{k^{\tau-1}}_{i_{\tau-1},t_{\tau-1}+1},u_{\tau}(X^{k^{\tau-1}}_{i_{\tau-1},t_{\tau-1}+1})),(X^{k^{\tau-1}}_{(K_n,n)},U^{k^{\tau-1}}_{(K_n,n)})\right)>\varepsilon\right\}\\
		&\quad=\left\{\frac{1}{nT}\sum_{i=1}^{n}\sum_{t=0}^{T-1}I_{(X_{i,t},U_{i,t})\in S\left((X^{k^{\tau-1}}_{i_{\tau-1},t_{\tau-1}+1},u_{\tau}(X^{k^{\tau-1}}_{i_{\tau-1},t_{\tau-1}+1})),\varepsilon\right)}<\frac{k_n}{nT} \right\}\eqqcolon M_{\tau}
	\end{align*} 
	where $k^{\tau-1}$ and $\tau$ are fixed and 
	\begin{align*}
		M_{\tau}\subseteq\bigcup_{t=0}^{T-1}\left\{\frac{1}{nT}\sum_{i=1}^{n}I_{(X_{i,t},U_{i,t})\in S\left((X^{k^{\tau-1}}_{i_{\tau-1},t_{\tau-1}+1},u_{\tau}(X^{k^{\tau-1}}_{i_{\tau-1},t_{\tau-1}+1})),\varepsilon\right)}<\frac{k_n}{nT} \right\}.
	\end{align*}
	For any $\tilde{x}\in\bigcup_{t=0}^{T-1}supp(\pi_t|_x)$, 
	\begin{align*}
		&\pi(M_0|X^{k^{\tau-1}}_{i_{\tau-1},t_{\tau-1}+1}=\tilde{x} )\\
		&\leq\sum_{t=0}^{T-1}\pi_t\Bigg(\left\{\frac{1}{nT}\sum_{i=1}^{n}I_{(X_{i,t},U_{i,t})\in S\left((X^{k^{\tau-1}}_{i_{\tau-1},t_{\tau-1}+1},u_{\tau}(X^{k^{\tau-1}}_{i_{\tau-1},t_{\tau-1}+1})),\varepsilon\right)}<\frac{k_n}{nT} \right\}\\
		&\qquad\Bigg|X^{k^{\tau-1}}_{i_{\tau-1},t_{\tau-1}+1}=\tilde{x}\Bigg).
	\end{align*}
	where $\pi(\cdot|X^{k^{\tau-1}}_{i_{\tau-1},t_{\tau-1}+1}=\tilde{x})$ and  $\pi_t(\cdot|X^{k^{\tau-1}}_{i_{\tau-1},t_{\tau-1}+1}=\tilde{x})$ for all $t$ denotes the conditional probability measure. Note that by the strong law of large number
	\begin{align*}
		\sum_{t=0}^{T-1}&\frac{1}{n}\sum_{i=1}^{n}I_{(X_{i,t},U_{i,t})\in S\left((\tilde{x}, u_{\tau}(\tilde{x})),\varepsilon\right)}\\ 
		&\rightarrow \sum_{t=0}^{T-1}\pi_t\left(S\left((X^{k^{\tau-1}}_{i_{\tau-1},t_{\tau-1}+1}, u_{\tau}(X^{k^{\tau-1}}_{i_{\tau-1},t_{\tau-1}+1})),\varepsilon\right)\Bigg|X^{k^{\tau-1}}_{i_{\tau-1},t_{\tau-1}+1}=\tilde{x}\right) 
	\end{align*}
	almost surely and that by Assumption 1,
	\begin{align*}
		\sum_{t=0}^{T-1}\pi_t\left(S\left((X^{k^{\tau-1}}_{i_{\tau-1},t_{\tau-1}+1}, u_{\tau}(X^{k^{\tau-1}}_{i_{\tau-1},t_{\tau-1}+1})),\varepsilon\right)\Bigg|X^{k^{\tau-1}}_{i_{\tau-1},t_{\tau-1}+1}=\tilde{x}\right) > 0.
	\end{align*}
	Since by assumption, $K_n/n\rightarrow0$, it holds that 
	\begin{equation*}
		d \left( (\tilde{x},u_{\tau}(\tilde{x})),((X^{k^{\tau-1}}_{(K_n,n)},U^{k^{\tau-1}}_{(K_n,n)}))(\tilde{x},u_{\tau}(\tilde{x}))\right)\rightarrow0
	\end{equation*}
	almost surely for all  $\tilde{x}\in\bigcup_{t=0}^{T-1}supp(\pi_t|_x)$. Let $\mu_{X^{k^{\tau-1}}_{i_{\tau-1},t_{\tau-1}+1}}$ be the measure describing $X^{k^{\tau-1}}_{i_{\tau-1},t_{\tau-1}+1}$ then it holds that $supp(\mu_{X^{k^{\tau-1}}_{i_{\tau-1},t_{\tau-1}+1}})\subseteq\bigcup_{t=0}^{T-1}supp(\pi_t|_x)$. Thus, (\ref{eq:vanishing_distance_tau}) holds. (\ref{eq:vanishing_distance_0}) is shown analogously. 
\end{proof}

\begin{proof}[Proof of Theorem \ref{thm:NN_consistent_mult_main}]
For proving this theorem we check the assumptions of Theorem \ref{thm:stone_mult}. Assumption \ref{ass:weights_mult}.2 and Assumption  \ref{ass:weights_mult}.4 hold automatically by the definition of the weights. Proposition \ref{prop:vanishing_weights}.1 implies Assumption  \ref{ass:weights_mult}.1. Proposition \ref{prop:vanishing_weights}.2 and Proposition \ref{prop:vanishing_weights}.3 imply Assumption \ref{ass:weights_mult}.5. Let $\varepsilon>0$, then for all $\tau=0,\dots,T-1$ it holds that
\begin{align*}
	\mathbb{E}\Bigg[&\sum_{i^{\tau}, t^{\tau}}^{n, T} V_{n,i^{\tau}, t^{\tau}}
	I_{d\left((X_{i_{0},t_{0}},U_{i_{0},t_{0}}),(\tilde{X}_{1}, u_0(\tilde{X}_{1}))\right)>\varepsilon}\Bigg]\\
	=&\int_{\mathcal{X}}\mathbb{E}\Bigg[\sum_{i^{\tau}, t^{\tau}}^{n, T} V_{n,i^{\tau}, t^{\tau}}
	I_{d\left((X_{i_{0},t_{0}},U_{i_{0},t_{0}}),(x, u_0(x))\right)>\varepsilon}\Bigg]\pi_0(dx)\\
	=&\int_{\mathcal{X}}\mathbb{E}\Bigg[\frac{1}{K^{\tau}_n}\sum_{i^{\tau}, t^{\tau}}^{n, T} I_{i^{\tau}, t^{\tau} \text{ is a } K_n\text{-NN path}}
	I_{d\left((X_{i_{0},t_{0}},U_{i_{0},t_{0}}),(x, u_0(x))\right)>\varepsilon}\Bigg]\pi_0(dx)\\
	=& \int_{\mathcal{X}}\mathbb{E}\Bigg[\frac{1}{K^{\tau}_n}\sum_{k^{\tau}}^{K_n} 
	I_{d\left((X^{k^{\tau}}_{0},U^{k^{\tau}}_{0})(x, u_0(x)),(x, u_0(x))\right)>\varepsilon}\Bigg]\pi_0(dx)\rightarrow0,
\end{align*}
if 
\begin{equation*}
	\int_{\mathcal{X}}\mathbb{P}\left(d\left( (x, u_0(x)),(X_{(K_n,n)},U_{(K_n,n)})(x, u_0(x))\right) \right)\pi_0(dx).
\end{equation*}
This holds since Lemma \ref{lem:vanish_dist} implies 
\begin{equation*}
	\mathbb{P}\left(d\left( (x, u_0(x)),(X_{(K_n,n)},U_{(K_n,n)})(x, u_0(x))\right) \right)\rightarrow0
\end{equation*}
for all $x\in supp(\pi_0)$. Note for some $X_{t}^{k^{\tau}}$ corresponding to some $X_{j,s}$ in the original indexing, we denote $X_{j,s+1}$ by  $X_{t,+}^{k^{\tau}}$ and by $U_{t,+}^{k^{\tau}}$ we denote $u_{t+1}(X_{j,s+1})$. For any $l=0,\dots,T-1$, we have that
\begin{align*}
	\mathbb{E}\Bigg[&\sum_{i^{\tau}, t^{\tau}}^{n, T} V_{n,i^{\tau}, t^{\tau}}(\tilde{X}_{1}) I_{\tau-l}(\varepsilon)\Bigg]=\int_{\mathcal{X}}\mathbb{E}\Bigg[\sum_{i^{\tau}, t^{\tau}}^{n, T} V_{n,i^{\tau}, t^{\tau}}(x)I_{\tau-l}(\varepsilon)\Bigg]\pi_0(dx)\\
	=&\int_{\mathcal{X}}\mathbb{E}\Bigg[\frac{1}{K^{\tau}_n}\sum_{i^{\tau}, t^{\tau}}^{n, T} I_{i^{\tau}, t^{\tau} \text{ is a } K_n\text{-NN path}}(x)
	I_{\tau-l}(\varepsilon)\Bigg]\pi_0(dx)\\
	=& \int_{\mathcal{X}}\mathbb{E}\Bigg[\frac{1}{K^{\tau}_n} \sum_{k^{\tau}}^{K_n} 
	I_{d\left((X^{k^{\tau}}_{\tau-l},U^{k^{\tau}}_{\tau-l})(x),(X^{k^{\tau}}_{\tau-l-1,+},U^{k^{\tau}}_{\tau-l-1,+})(x)\right)>\varepsilon}\Bigg]\pi_0(dx)\rightarrow0
\end{align*}
if for any sequence $k^\tau$
\begin{equation*}\int_{\mathcal{X}}
	\mathbb{P}\left(d\left((X^{(k^{\tau-l-1}, K_n,k^{\tau-l+1:\tau})}_{\tau-l},U^{k^{\tau}}_{\tau-l})(x),(X^{k^{\tau}}_{\tau-l-1,+},U^{k^{\tau}}_{\tau-l-1,+})(x)\right)
	\right)\pi_0(dx)\rightarrow0
\end{equation*}
where $(k^{\tau-l-1}, K_n,k^{\tau-l+1:\tau}) =(k^\tau_0,\dots,k^\tau_{\tau-l-1},K_n,k^\tau_{\tau-l+1}, \dots,k^\tau_{\tau})$. This since Lemma \ref{lem:vanish_dist}, implies
\begin{equation*}
	\mathbb{P}\left(d\left((X^{(k^{\tau-l-1}, K_n,k^{\tau-l+1:\tau})}_{\tau-l},U^{k^{\tau}}_{\tau-l})(x),(X^{k^{\tau}}_{\tau-l-1,+},U^{k^{\tau}}_{\tau-l-1,+})(x)\right)
	\right)\rightarrow0
\end{equation*}
for all $x\in supp(\pi_0)$. Thus, Assumption \ref{ass:weights_mult}.3 is satisfied. 
\end{proof}

\subsection{Consistency of the KNNR estimator}\label{subsec:resampling_main}
\begin{proof}[Proof of Theorem \ref{thm:resampling_main}]
	Consider
	\begin{align*}
		\mathbb{E}&\left[\left(\msub_{l_n,n}(u)-m(u)\right)^2\right]\\
		&\leq 2\mathbb{E}\left[\left(\msub_{l_n,n}(u)-\mnn_{l_n,n}(u)\right)^2\right] +2\mathbb{E}\left[\left(\mnn_{l_n,n}(u)-m(u)\right)^2\right]
	\end{align*}	
	where second term vanishes as $n\rightarrow\infty$ by Theorem 3.
	For the first term,
	\begin{align*}
		\mathbb{E}\left[\left(\sum_{\tau=0}^{T}\msub_{l_n,n}(u,\tau)-\mnn_{l_n,n}(u,\tau)\right)^2\right]\\
		\leq (T+1)\sum_{\tau=0}^{T}\mathbb{E}\left[\left(\msub_{l_n,n}(u,\tau)-\mnn_{l_n,n}(u,\tau)\right)^2\right].
	\end{align*}
	
	It will be sufficient to show that
	\begin{equation*}
		\lim_{n\rightarrow\infty}\mathbb{E}\left[\left(\msub_{l_n,n}(u,\tau)-\mnn_{l_n,n}(u,\tau)\right)^2\right]=0
	\end{equation*}
	for a fixed $\tau$ since the arguments are analogous for the other terms.
	Consider
	\begin{align*}
		\mathbb{E}&\left[\left(
		\frac{1}{l_n}\sum_{j=1}^{l_n} \sum_{k^{\tau}}^{K_n} r_{\tau}^{k^{\tau}}(\tilde{X}_j,u)(I_{k^{\tau}=\mathcal{K}^{\tau}_j}-\frac{1}{K^{\tau}_n})\right)^2\right]\\
		&=\frac{1}{l^2_n}\mathbb{E}\left[
		\sum_{j_1=1}^{l_n} \sum_{j_2=1}^{l_n} \sum_{k_1^{\tau}}^{K_n} \sum_{k_2^{\tau}}^{K_n}r_{\tau}^{k_1^{\tau}}(\tilde{X}_{j_1},u)r_{\tau}^{k_2^{\tau}}(\tilde{X}_{j_2},u)(I_{k_1^{\tau}=\mathcal{K}^{\tau}_{j_1}}-\frac{1}{K^{\tau}_n})(I_{k_2^{\tau}=\mathcal{K}^{\tau}_{j_2}}-\frac{1}{K^{\tau}_n})\right]\\
		&=I_n
	\end{align*}
	Since $\mathbb{E}[I_{k^{\tau}=\mathcal{K}^{\tau}_{j}}]=\frac{1}{K^{\tau}_n}$ for all $j$ and due to the independence assumptions on $\mathcal{K}^{\tau}_{j}$, we have for $j_1\neq j_2$ and any $k_1^{\tau}$ and $k_2^{\tau}$:
	
	\begin{align*}
		\mathbb{E}&\left[ r_{\tau}^{k_1^{\tau}}(\tilde{X}_{j_1},u)r_{\tau}^{k_2^{\tau}}(\tilde{X}_{j_2},u)(I_{k_1^{\tau}=\mathcal{K}^{\tau}_{j_1}}-\frac{1}{K^{\tau}_n})(I_{k_2^{\tau}=\mathcal{K}^{\tau}_{j_2}}-\frac{1}{K^{\tau}_n})\right]\\
		&=\mathbb{E}\left[ r_{\tau}^{k_1^{\tau}}(\tilde{X}_{j_1},u)r_{\tau}^{k_2^{\tau}}(\tilde{X}_{j_2},u)\right]\mathbb{E}\left[ I_{k_1^{\tau}=\mathcal{K}^{\tau}_{j_1}}-\frac{1}{K^{\tau}_n}\right]\mathbb{E}\left[I_{k_2^{\tau}=\mathcal{K}^{\tau}_{j_2}}-\frac{1}{K^{\tau}_n}\right] \\
		&=0
	\end{align*}
	For $j_1=j_2$ and $k_1^{\tau}\neq k_2^{\tau}$,
	\begin{align*}
		\mathbb{E}&\left[ r_{\tau}^{k_1^{\tau}}(\tilde{X}_{j_1},u)r_{\tau}^{k_2^{\tau}}(\tilde{X}_{j_2},u)(I_{k_1^{\tau}=\mathcal{K}^{\tau}_{j_1}}-\frac{1}{K^{\tau}_n})(I_{k_2^{\tau}=\mathcal{K}^{\tau}_{j_2}}-\frac{1}{K^{\tau}_n})\right]\\
		&=\mathbb{E}\left[ r_{\tau}^{k_1^{\tau}}(\tilde{X}_{j_1},u)r_{\tau}^{k_2^{\tau}}(\tilde{X}_{j_2},u)\right]\left(-\frac{1}{K^{2\tau}_n}\right)
	\end{align*}
	since $\mathbb{E}[I_{k_1^{\tau}=\mathcal{K}^{\tau}_{j_1}}I_{k_2^{\tau}=\mathcal{K}^{\tau}_{j_2}}]=0$.
	For $j_1=j_2$ and $k_1^{\tau}= k_2^{\tau}$,
	\begin{align*}
		\mathbb{E}&\left[ r_{\tau}^{k_1^{\tau}}(\tilde{X}_{j_1},u)r_{\tau}^{k_2^{\tau}}(\tilde{X}_{j_2},u)(I_{k_1^{\tau}=\mathcal{K}^{\tau}_{j_1}}-\frac{1}{K^{\tau}_n})(I_{k_2^{\tau}=\mathcal{K}^{\tau}_{j_2}}-\frac{1}{K^{\tau}_n})\right]\\
		&=\mathbb{E}\left[(r_{\tau}^{k_1^{\tau}}(\tilde{X}_{j_1},u))^2\right] \left(\frac{1}{K^{\tau}_n}\right)
	\end{align*}
	where we used that $I_{k^{\tau}=\mathcal{K}^{\tau}_{j}}$ is Bernoulli distributed. Since for all $k_1^{\tau}, k_2^{\tau}$ and for all $j_1,j_2$, it holds that $\mathbb{E}\left[(r_{\tau}^{k_1^{\tau}}(\tilde{X}_{j_1},u))^2\right]\leq4M^2$ and $\mathbb{E}\left[|r_{\tau}^{k_1^{\tau}}(\tilde{X}_{j_1},u)r_{\tau}^{k_2^{\tau}}(\tilde{X}_{j_2},u)|\right]\leq4M^2$. Thus,
	
	\begin{align*}
		&I_n\leq \frac{1}{l^2_n}\sum_{j=1}^{l_n}\Bigg(\sum_{k^{\tau}}^{K_n} \mathbb{E}\left[(r_{\tau}^{k_1^{\tau}}(\tilde{X}_{j},u))^2\right] \left(\frac{1}{K^{\tau}_n}\right)\\
		&\qquad+\sum_{k_1^{\tau}}^{K_n} \sum_{\substack{k_2^{\tau}\\k_1^{\tau}\neq k_2^{\tau}}}^{K_n} \mathbb{E}\left[|r_{\tau}^{k_1^{\tau}}(\tilde{X}_{j},u)r_{\tau}^{k_2^{\tau}}(\tilde{X}_{j},u)|\right]\left(\frac{1}{K^{2\tau}_n} \right)\Bigg)\\
		&\leq \frac{1}{l^2_n} \sum_{j=1}^{l_n}8M^2 = \frac{8M^2 }{l_n}\rightarrow0
	\end{align*}
	as $n\rightarrow\infty$.
\end{proof}

\section{Conclusion}
%In this paper, we propose the $K$-NN resampling algorithm for OPE. Its properties make it particularly well suited for stochastic control environments. It does not require knowledge of the behaviour policy, optimization or a parametric model. It is faster than similar existing algorithms and parallelizable. We further prove that the MSE of this method vanishes under weak assumptions and for episodic sampling. We further generalize a well-known theorem from non-parametric statistics accounting for episodic sampling and counterfactual estimation. We demonstrate the method's effectiveness on various stochastic control problems.  

We believe that this work offers multiple possibilities for future research. First, bridging between reinforcement learning and stochastic control, we expect that the $K$-NN resampling algorithm can be applied to a multitude of practical stochastic control problems in diverse fields such as finance, operations research or engineering. Another interesting avenue of research is how to choose the state representation for our algorithm. With the right state representation the curse of dimensionality can be overcome in the nearest neighbor search \cite{pari2022surprising} or it could be applied to non-Markovian control problems. Related to this, the metric that is used for nearest neighbor search can be viewed as an engineering choice and potentially improved performance can be seen with a learned metric as in \cite{dadashi2021offline}. Our generalization of the local averaging theorem can also be employed to guide the design of other OPE methods based on local averaging. This could include kernel-based or other partitioning-based approaches.

\begin{small}
\renewcommand{\baselinestretch}{1}
\bibliography{refs}
\appendix
\section{Theoretical Considerations}
 
\subsection{Environment}
In this section, we discuss the relationship of our setting to general MDPs and the additional restrictions we impose on the dynamical system.

\paragraph{Comparison: MDP vs Controlled Iterated Function System (IFS)}
The standard MDP formulation for an episodic setting of fixed length can be related to our random dynamical systems formulation in the following sense: 
Consider an MDP $(\mathcal{X}, \mathcal{U}, \mathbb{P}, R)$ with a fixed episode length $T$ where $\mathcal{X}$ is the state space and $\mathcal{U}$ the action space. $P\left(\cdot \mid x_t, a_t\right)$ denotes the transition kernel, $R\left(x_t, a_t,x_{t+1}\right)$ the reward distribution, $d_0$ the distribution of the initial state and some Markovian policy $\pi\left(\cdot \mid x_t\right)$. The value of the policy is then given by    $V\left(\pi\right)=\mathbb{E}_{x \sim d_0}\left[\sum_{t=0}^{T-1} r_t \mid x_0=x\right]$ where $a_t \sim \pi_e\left(\cdot \mid x_t\right), x_{t+1} \sim P\left(\cdot \mid x_t, a_t\right), r_t \sim R\left(x_t, a_t,x_{t+1}\right)$. 

Further, recall that an MDP with a Markovian policy reduces to a time-homogenous Markov process on $S=\mathcal{X}\times\mathcal{U}$ where $S$ is equipped with a $\sigma$-algebra $\mathcal{S}$ and the transition kernels are denoted by $(x_{t+1},a_{t+1}) \sim \tilde{P}\left(\cdot \mid x_t, a_t\right)$. It is a well known fact that every time-homogenous Markov process can be generated via a random dynamical system with iid random functions that are measurable with respect to $(S,\mathcal{S})$ i.e. \cite{arnold1995random,bhattacharya2009stochastic}. The representation of a Markov process with random functions however is not unique. Furthermore, how restrictions on the distribution of the random functions (i.e. only continuous functions or sets with equicontinuous functions \footnote{Consider a set of functions $\Gamma$ between two metric spaces $(X,d_1)$ and $(Y,d_2)$.  $\Gamma$ consists of uniformly equicontinuous functions if for every $\varepsilon>0$, there exists a $\delta>0$ such that $d_2(f(x_1),f(x_2))<\varepsilon$ for all $f\in\Gamma$ and all $x_1,x_2\in X$ where $d_1(x_1,x_2)<\delta$. }) impact the scope of representable Markov processes in general is to our knowledge an open research question and is beyond the scope of our paper. For the one dimensional case (\cite{bhattacharya2009stochastic} p.228), one can explicitly construct an iterated function system from the quantile function of the transition kernel. With the same method one can construct an iterated function system where the functions are continuous or from a set of equicontinuous functions from the quantile function of the transition kernel where the quantile function satisfies the same property. While it is unclear how the representative power of an iterative function system varies between distributions on $C(S,S)$ and equicontinuous subsets $\Gamma\subset C(S,S)$, for the case that $S$ is compact we know that by Arzela-Ascoli's theorem (e.g. see \cite{alt2016linear}) every compact set contained in  $C(S,S)$ is uniformly equicontinuous. Finally, note that the setting we propose is more general compared to the settings considered in \cite{fonteneau2010model,fonteneau2013batch, gottesman2019combining}.

\paragraph{Assumptions on the dynamical system}
In general, we make three additional assumptions on the dynamical system:
\begin{ass}
	\begin{enumerate}
		\item $\Gamma\subset C(S,S)$ is a set of uniformly equicontinuous functions
		\item  $\epsilon_t$ is supported on $[-M, M]$ for $M\in\mathbb{R}$ and it holds that $r:S\times\mathcal{X}\rightarrow [-M,M]$
		\item $(\epsilon_t)_{t=0}^T$ and $\left(\alpha_t\right)_{t=1}^{T}$ are assumed to be independent and $\mathbb{E}[\epsilon_t]=0$
	\end{enumerate}
\end{ass}
The first assumption becomes necessary due to taking limits over compositions of transition functions. The impact of this assumption on the generality of the results is discussed in the previous paragraph and we note that sets of uniformly Lipschitz continuous functions (as in \cite{fonteneau2010model,fonteneau2013batch, gottesman2019combining}) are in particular uniformly equicontinuous. The second assumption simplifies the proof considerably and the restriction on regression targets on a compact set is a common simplification in non-parametric statistics. It is possible that this assumption is not necessary as the nearest neighbor regression satisfies a strong concentration property (cf. \cite{gyorfi2002distribution} Lemma 6.3). Possibly this could be extended to our setting. The third assumption is necessary to disentangle the noise of the transition from averaging over the rewards. Note however with the generalization mentioned in Remark 1 of the main paper one can introduce rewards depending on the transitions which are more general than the commonly assumed reward structures in a dynamical system formulation (i.e. \cite{bertsekas2012dynamic}). Finally, the condition $\mathbb{E}[\epsilon_t]=0$ can always be satisfied by shifting the reward function by the expectation of the noise.

\subsection{Proof of Propositions \ref{prop:stone_mult}}

\begin{prop}\label{prop:stone_mult}
	Given the setting described in Section 2 of the main paper and under Assumption \ref{ass:weights_mult}, it holds that $D_n$, $E_n$, $F_n$, and $G_n$ from the proof of Theorem \ref{thm:stone_mult} vanish as $n\rightarrow\infty$.
\end{prop}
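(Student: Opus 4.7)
\begin{hproof}
The four error terms each exploit different parts of Assumption \ref{ass:weights_mult}, and I would handle them in order of increasing difficulty.

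For $D_n$: since $\tilde{X}_1,\ldots,\tilde{X}_{l_n}$ are iid draws from $\nu_0$ and $|\mathbb{E}[R_\tau|_u \mid \tilde{X}_j]| \leq M$, the quantity inside the square is an iid sample-mean deviation, so $D_n = O(1/l_n) \to 0$. For $E_n$, the key observation is that the weights $W_{n,i^\tau,t^\tau}$ depend on $D$ and $\tilde{X}_1$ only through state-action pairs and are therefore independent of the reward noise $(\epsilon_{i,t})$. Expanding $E_n$ as a double sum and using this independence together with $\mathbb{E}[\epsilon]=0$, every cross-term with $(i_\tau,t_\tau)\neq(j_\tau,s_\tau)$ vanishes. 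The surviving diagonal forces $i_\tau = j_\tau$, hence $|[i^\tau,j^\tau]|\leq 2\tau+1$, and combining $|\epsilon|\leq M$ with Assumption \ref{ass:weights_mult}.1 in the form $\tilde{D}_1=\tilde{D}_2=D$ yields $E_n\to 0$.

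For $F_n$ I would combine continuity of $r$ with the uniform equicontinuity of $\Gamma$. An induction on $k$ shows that the gap between $(X_{i_k,t_k},U_{i_k,t_k})$ and $\alpha_{i_{k-1},t_{k-1}}|_u \circ \cdots \circ \alpha_{i_0,t_0}|_u(\tilde{X}_1,u_0(\tilde{X}_1))$ is controlled by the initial mismatch $d((X_{i_0,t_0},U_{i_0,t_0}),(\tilde{X}_1,u_0(\tilde{X}_1)))$ propagated through the common modulus of $\Gamma$, plus the successive one-step matching distances that appear in the indicators $I_m(a)$ of Assumption \ref{ass:weights_mult}.3. On the event that all these distances are bounded by $a$, uniform continuity of $r$ on the compact range of the dynamics bounds the reward difference by some modulus $\omega_r(a)\to 0$; on the complement, $|r|\leq M$ together with the total weight bound $\sum|W|\leq L$ of Assumption \ref{ass:weights_mult}.2 and the vanishing weighted indicator sum of Assumption \ref{ass:weights_mult}.3 handle the contribution. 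Letting $n\to\infty$ first and then $a\to 0$ gives $F_n\to 0$.

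The main obstacle is $G_n$. Writing $r^*(i^\tau,t^\tau;\tilde{X}_1)$ for the composition $r(\alpha_{i_{\tau-1},t_{\tau-1}}|_u \circ \cdots \circ \alpha_{i_0,t_0}|_u(\tilde{X}_1,u_0(\tilde{X}_1)))$ and $Y(i^\tau,t^\tau) := r^* - \mathbb{E}[R_\tau|_u\mid\tilde{X}_1]$, the point is that when $i_0,\ldots,i_{\tau-1}$ are pairwise distinct the functions $\alpha_{i_k,t_k}$ are iid from $\mathbb{Q}$ so $\mathbb{E}[Y\mid\tilde{X}_1]=0$; but the weights $W_{n,i^\tau,t^\tau}(D)$ are measurable with respect to the very same $\alpha_{i_k,t_k}$'s and cannot be pulled out of a conditional expectation. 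I would first bound $G_n \leq 2 M^2 \mathbb{E}[(\sum W - 1)^2] + 2 \mathbb{E}[(\sum W Y)^2]$, killing the first piece by Assumption \ref{ass:weights_mult}.4 together with bounded convergence (via $|\sum W|\leq L$). For the second piece, substitute $W(D)$ by $W(D_{i^\tau,t^\tau})$ at a cost bounded by $8 M^2 \mathbb{E}[(\sum|W(D) - W(D_{i^\tau,t^\tau})|)^2] \to 0$ via the first half of Assumption \ref{ass:weights_mult}.5. Expanding $\mathbb{E}[(\sum W(D_{i^\tau,t^\tau})Y)^2]$ as a double sum and splitting by the cardinality of $[i^\tau,j^\tau]$: the sub-$2\tau+2$ part is killed by Assumption \ref{ass:weights_mult}.1 with $\tilde{D}_1=D_{i^\tau,t^\tau}$, $\tilde{D}_2=D_{j^\tau,s^\tau}$ and $|Y|\leq 2M$. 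In the full-cardinality part, $W(D_{i^\tau,t^\tau})$ still depends on the $\alpha_{j_k,s_k}$'s that drive $Y(j^\tau,s^\tau)$, so a second substitution is required: replace $W(D_{j^\tau,s^\tau})$ by $W(D_{(i^\tau,j^\tau),(t^\tau,s^\tau)})$, with vanishing cost by the second half of Assumption \ref{ass:weights_mult}.5. After these two replacements the factors $W(D_{i^\tau,t^\tau})$, $W(D_{(i^\tau,j^\tau),(t^\tau,s^\tau)})$ and $Y(j^\tau,s^\tau)$ are all measurable with respect to data disjoint from $(\alpha_{i_k,t_k})_{k<\tau}$, so conditioning on $\tilde{X}_1$ and this sigma-algebra gives $\mathbb{E}[Y(i^\tau,t^\tau)\mid\cdots]=0$ and the full-cardinality term also vanishes. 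The delicate point, and the reason for the two-sided form of Assumption \ref{ass:weights_mult}.5, is precisely this bookkeeping of which $\alpha$'s each of the four factors is measurable with respect to, so that after both substitutions $Y(i^\tau,t^\tau)$ becomes conditionally mean-zero given the other three.
\end{hproof}
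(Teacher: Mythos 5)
Your proposal is correct and follows essentially the same route as the paper's proof: the diagonal/off-diagonal split with mean-zero noise for $E_n$, the telescoping-plus-equicontinuity argument with the Cauchy--Schwarz/weight-bound and the indicator splitting of Assumption \ref{ass:weights_mult}.3 for $F_n$, and for $G_n$ the same two successive weight substitutions (first $W(D)\mapsto W(D_{i^{\tau},t^{\tau}})$, then $W(D_{j^{\tau},s^{\tau}})\mapsto W(D_{(i^{\tau},j^{\tau}),(t^{\tau},s^{\tau})})$ on the full-cardinality part) followed by the conditional-independence argument that makes $\Delta_{i^{\tau},t^{\tau}}r$ conditionally mean-zero. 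The only cosmetic difference is that you keep $W(D)$ in the $\bigl(\sum W-1\bigr)$ piece of $G_n$ (needing only Assumption \ref{ass:weights_mult}.4 and dominated convergence there), whereas the paper uses the substituted weights; both work.
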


\begin{proof}	
For 
\begin{align*}
	E_n =& \mathbb{E}\Biggl[\Biggl(\sum_{i^{\tau}, t^{\tau}}^{n, T}\sum_{\substack{(j^{\tau-1},i_{\tau}),\\ (s^{\tau-1},t_{\tau})}}^{n, T} W_{n,i^{\tau}, t^{\tau}} W_{n,(j^{\tau-1},i_{\tau}), (s^{\tau-1},t_{\tau})}(R_{i_{\tau},t_{\tau}}-r(X_{i_{\tau},t_{\tau}},U_{i_{\tau},t_{\tau}}))^2\Biggr)\Biggr]	\\
	+&\mathbb{E}\Biggl[\Biggl(\sum_{i^{\tau}, t^{\tau}}^{n, T}\sum_{\substack{j^{\tau}, s^{\tau}\\ j_{\tau}\neq i_{\tau}\vee s_{\tau}\neq t_{\tau} }}^{N, T} W_{n,i^{\tau}, t^{\tau}} W_{n,j^{\tau}, s^{\tau}}(R_{i_{\tau},t_{\tau}}-r(X_{i_{\tau},t_{\tau}},U_{i_{\tau},t_{\tau}}))\\
	&\qquad\qquad\qquad\cdot(R_{j_{\tau},s_{\tau}}-r(X_{j_{\tau},s_{\tau}},U_{j_{\tau},s_{\tau}}))\Biggr)\Biggr]\\
	\coloneqq& E'_n + E_n''	
\end{align*}
note first that by Assumption \ref{ass:weights_mult}.1, 
\begin{align*}
|E'_n|\leq 4M^2 \mathbb{E}\Bigg[\sum_{i^{\tau}, t^{\tau}}^{n, T}\sum_{\substack{(j^{\tau-1},i_{\tau}),\\ (s^{\tau-1},t_{\tau})}}^{n, T} |W_{n,i^{\tau}, t^{\tau}} W_{n,(j^{\tau-1},i_{\tau}), (s^{\tau-1},t_{\tau})}|\Bigg]\rightarrow 0
\end{align*}
and 
\begin{align*}
	E_n'' = 0
\end{align*}
since for $i_\tau\neq j_\tau$ or $s_\tau \neq t_\tau$
\begin{align*}
	\mathbb{E}&\Bigl[W_{n,i^{\tau}, t^{\tau}} W_{n,j^{\tau}, s^{\tau}}(R_{i_{\tau},t_{\tau}}-r(X_{i_{\tau},t_{\tau}},U_{i_{\tau},t_{\tau}})) (R_{j_{\tau},s_{\tau}}-r(X_{j_{\tau},s_{\tau}},U_{j_{\tau},s_{\tau}}))\Bigr]\\
	\leq& \mathbb{E}\Bigl[\mathbb{E}\Bigl[W_{n,i^{\tau}, t^{\tau}} W_{n,j^{\tau}, s^{\tau}}\epsilon_{i_{\tau},t_{\tau}}\epsilon_{j_{\tau},s_{\tau}}\Big| D\Bigr]\Bigr]\\
	\leq&\mathbb{E}\Bigl[W_{n,i^{\tau}, t^{\tau}} W_{n,j^{\tau}, s^{\tau}}\mathbb{E}[\epsilon_{i_{\tau},t_{\tau}} ]\mathbb{E}[\epsilon_{j_{\tau},t_{\tau}}]\Bigr]\\
	=&0
\end{align*}
by the definition of $\epsilon_{i_{\tau},t_{\tau}}$ and $\epsilon_{j_{\tau},s_{\tau}}$.

Next, $F_n$ can be represented by a telescoping sum, such that
\begin{align*}
F_n &\leq (\tau+2)\mathbb{E}\Bigg[\Bigg(\sum_{i^{\tau}, t^{\tau}}^{n, T} W_{n,i^{\tau}, t^{\tau}}\big(r(X_{i_{\tau},t_{\tau}},U_{i_{\tau},t_{\tau}})\\
&\qquad-r(\alpha_{i_{\tau-1},t_{\tau-1}}|_u(X_{i_{\tau-1},t_{\tau-1}},U_{i_{\tau-1},t_{\tau-1}}))\big)\Bigg)^2 \Bigg]\\
&+ (\tau+2)\mathbb{E}\Bigg[\sum_{k=1}^{\tau-1}\\
&\quad\Bigg(\sum_{i^{\tau}, t^{\tau}}^{n, T} W_{n,i^{\tau}, t^{\tau}}\big(r(\alpha_{i_{\tau-1},t_{\tau-1}}|_u\circ\cdots\circ\alpha_{i_{\tau-k},t_{\tau-k}}|_u(X_{i_{\tau-k},t_{\tau-k}},U_{i_{\tau-k},t_{\tau-k}}))\\
& \qquad- r(\alpha_{i_{\tau-1},t_{\tau-1}}|_u\circ\cdots\circ\alpha_{i_{\tau-k-1},t_{\tau-k-1}}|_u(X_{i_{\tau-k-1},t_{\tau-k-1}},U_{i_{\tau-k-1},t_{\tau-k-1}}))\big)\Bigg)^2\Bigg]\\
&+ (\tau+2)\mathbb{E}\Bigg[\Bigg(\sum_{i^{\tau}, t^{\tau}}^{n, T} W_{n,i^{\tau}, t^{\tau}}\big(r(\alpha_{i_{\tau-1},t_{\tau-1}}|_u\circ\cdots\circ\alpha_{i_{0},t_{0}}|_u(X_{i_{0},t_{0}},U_{i_{0},t_{0}}))\\
& \quad- r(\alpha_{i_{\tau-1},t_{\tau-1}}|_u\circ\cdots\circ\alpha_{i_{0},t_{0}}|_u(\tilde{X}_{1},u_0(\tilde{X}_{1})))\big)\Bigg)^2\Bigg]\\
&\coloneqq(\tau+2)\left(F^{\tau}_n +\sum_{k=1}^{\tau}F^{\tau-k}_n+ F^{-1}_n\right).
\end{align*} 

By Cauchy--Schwarz inequality, and Assumption \ref{ass:weights_mult}.2, 

\begin{align*}
F^{\tau}_n&\leq\mathbb{E}\Bigg[\Bigg(\sum_{i^{\tau}, t^{\tau}}^{n, T} \sqrt{ |W_{n,i^{\tau}, t^{\tau}}|}\sqrt{| W_{n,i^{\tau}, t^{\tau}}|}\\
&\qquad\qquad\big|r(X_{i_{\tau},t_{\tau}},U_{i_{\tau},t_{\tau}})-r(\alpha_{i_{\tau-1},t_{\tau-1}}|_u(X_{i_{\tau-1},t_{\tau-1}},U_{i_{\tau-1},t_{\tau-1}}))\big|\Bigg)^2 \Bigg]\\
&\leq \mathbb{E}\Bigg[\Bigg(\sum_{i^{\tau}, t^{\tau}}^{n, T}  |W_{n,i^{\tau}, t^{\tau}}|\Bigg)\\
&\qquad\cdot\Bigg( \sum_{i^{\tau}, t^{\tau}}^{n, T}| W_{n,i^{\tau}, t^{\tau}}|
\Bigg(
r(X_{i_{\tau},t_{\tau}},U_{i_{\tau},t_{\tau}})-r(X_{i_{\tau-1},t_{\tau-1}+1},u_{\tau}(X_{i_{\tau-1},t_{\tau-1}+1}))
\Bigg)^2
\Bigg)\Bigg]\\
&\leq L\mathbb{E}\Bigg[\sum_{i^{\tau}, t^{\tau}}^{n, T}| W_{n,i^{\tau}, t^{\tau}}|
\Bigg(
r(X_{i_{\tau},t_{\tau}},U_{i_{\tau},t_{\tau}})-r(X_{i_{\tau-1},t_{\tau-1}+1},u_{\tau}(X_{i_{\tau-1},t_{\tau-1}+1}))
\Bigg)^2\Bigg]\\
&=L \tilde{F}^{\tau}_n
\end{align*}
and,  for an arbitrary $\delta>0$,
\begin{align*}
\tilde{F}^{\tau}_n\leq&  \mathbb{E}\Bigg[\sum_{i^{\tau}, t^{\tau}}^{n, T}| W_{n,i^{\tau}, t^{\tau}}|
\left(
r(X_{i_{\tau},t_{\tau}},U_{i_{\tau},t_{\tau}})-r(X_{i_{\tau-1},t_{\tau-1}+1},u_{\tau}(X_{i_{\tau-1},t_{\tau-1}+1}))
\right)^2\\
&\qquad\cdot I_{d\left((X_{i_{\tau},t_{\tau}},U_{i_{\tau},t_{\tau}}),(X_{i_{\tau-1},t_{\tau-1}+1}, u_{\tau}(X_{i_{\tau-1},t_{\tau-1}+1}))\right) \leq\delta}
\Bigg]\\
+& \mathbb{E}\Bigg[\sum_{i^{\tau}, t^{\tau}}^{n, T}| W_{n,i^{\tau}, t^{\tau}}|
\left(
r(X_{i_{\tau},t_{\tau}},U_{i_{\tau},t_{\tau}})-r(X_{i_{\tau-1},t_{\tau-1}+1},u_{\tau}(X_{i_{\tau-1},t_{\tau-1}+1}))
\right)^2\\
&\qquad\cdot I_{d\left((X_{i_{\tau},t_{\tau}},U_{i_{\tau},t_{\tau}}),(X_{i_{\tau-1},t_{\tau-1}+1}, u_{\tau}(X_{i_{\tau-1},t_{\tau-1}+1}))\right) >\delta}
\Bigg]\\
\leq&L \left(\sup_{x,y\in S:d(x,y)\leq\delta}\left|r(x)-r(y)\right|\right)^2\\
&+ 4M^2\mathbb{E}\left[\sum_{i^{\tau}, t^{\tau}}^{n, T}| W_{n,i^{\tau}, t^{\tau}}|
I_{d\left((X_{i_{\tau},t_{\tau}},U_{i_{\tau},t_{\tau}}),(X_{i_{\tau-1},t_{\tau-1}+1}, u_{\tau}(X_{i_{\tau-1},t_{\tau-1}+1}))\right) >\delta}\right]
\end{align*}

Hence, by  Assumption \ref{ass:weights_mult}.3,
\begin{equation*}
	\limsup_{n\rightarrow\infty} F^{\tau}_n\leq L^2\left(\sup_{x,y\in S :d(x,y)\leq\delta}\left|r(x)-r(y)\right|\right)^2
\end{equation*}
and since $r$ uniformly continuous  
\begin{equation*}
F^{\tau}_n\rightarrow0
\end{equation*}
as $\delta\rightarrow0$. 
The argument for $F^{\tau-k}_n$ with $k=1,...,\tau+1$ is analogous additionally using the uniform equicontinuity of the random functions.

Consider
\begin{align*}
	G_n \leq & 3\mathbb{E}\Bigg[\Bigg(\sum_{i^{\tau}, t^{\tau}}^{n, T} \Big(W_{n,i^{\tau}, t^{\tau}}\\
	&\quad -W_{n,i^{\tau}, t^{\tau}}(D_{i^{\tau},t^{\tau}})\Big)r(\alpha_{i_{\tau},t_{\tau}}|_u\circ\cdots\circ\alpha_{i_{0},t_{0}}|_u(\tilde{X}_{1},u_0(\tilde{X}_{1})))\Bigg)^2\Bigg]\\
	&+3\mathbb{E}\Biggl[\Biggl(\sum_{i^{\tau}, t^{\tau}}^{n, T} W_{n,i^{\tau}, t^{\tau}}(D_{i^{\tau},t^{\tau}})\Delta_{i^{\tau},t^{\tau}}r(\tilde{X}_{1})\Biggr)^2 \Biggr]\\
	& +3\mathbb{E}\Bigg[\Bigg(\Bigg(\sum_{i^{\tau}, t^{\tau}}^{n, T} W_{n,i^{\tau}, t^{\tau}}(D_{i^{\tau},t^{\tau}})-1\Bigg)\mathbb{E}[R_{\tau}|_u| \tilde{X}_{1}]\Bigg)^2 \Bigg]\\
	\eqqcolon& 3H_n + 3I_n + 3J_n   
\end{align*}
where
\begin{align*}
	\Delta_{i^{\tau},t^{\tau}}r(\tilde{X}_{1})\coloneqq \biggl(r(\alpha_{i_{\tau-1},t_{\tau-1}}|_u\circ\cdots\circ\alpha_{i_{0},t_{0}}|_u(\tilde{X}_{1},u_0(\tilde{X}_{1})))-\mathbb{E}[R_{\tau}|_u| \tilde{X}_{1}]\biggr).
\end{align*}

Next, for
\begin{align*}
	H_n\leq& 4M^2 \mathbb{E}\Bigg[\Bigg(\sum_{i^{\tau}, t^{\tau}}^{n, T} |W_{n,i^{\tau}, t^{\tau}}-W_{n,i^{\tau}, t^{\tau}}(D_{i^{\tau},t^{\tau}})|\Bigg)^2\Bigg]
\end{align*}
and, thus, by Assumption \ref{ass:weights_mult}.5, $H_n\rightarrow0$.

Further,
\begin{align*}
	I_n =& \mathbb{E}\Biggl[\sum_{i^\tau,t^\tau}^{n,T}\sum_{\substack{j^\tau,s^\tau\\  |(i^\tau,j^\tau)|<2\tau+2}}^{n,T}	W_{n,i^{\tau}, t^{\tau}}(D_{i^{\tau},t^{\tau}})W_{n,j^{\tau}, s^{\tau}}(D_{j^{\tau},s^{\tau}})\\
	&\quad\Delta_{i^{\tau},t^{\tau}}r(\tilde{X}_{1})\Delta_{j^{\tau},s^{\tau}}r(\tilde{X}_{1})  \Biggr]\\
	&+ \mathbb{E}\Biggl[\sum_{i^\tau,t^\tau}^{n,T}\sum_{\substack{j^\tau,s^\tau\\  |(i^\tau,j^\tau)|=2\tau+2}}^{n,T}
	W_{n,i^{\tau}, t^{\tau}}(D_{i^{\tau},t^{\tau}})W_{n,j^{\tau}, s^{\tau}}(D_{j^{\tau},s^{\tau}})\\
	&\quad\Delta_{i^{\tau},t^{\tau}}r(\tilde{X}_{1})\Delta_{j^{\tau},s^{\tau}}r(\tilde{X}_{1})\Biggr]\\
	\coloneqq& I_n'+I_n''
\end{align*}
where by Assumption \ref{ass:weights_mult}.1
\begin{align*}
	I_n'\leq4M^2\mathbb{E}\Bigg[\sum_{i^\tau,t^\tau}^{n,T}\sum_{\substack{j^\tau,s^\tau\\  |(i^\tau,j^\tau)|<2\tau+2}}^{n,T}	|W_{n,i^{\tau}, t^{\tau}}(D_{i^{\tau},t^{\tau}})W_{n,j^{\tau}, s^{\tau}}(D_{j^{\tau},s^{\tau}})|\Bigg]\rightarrow 0.
\end{align*}

Note that
\begin{align*}
|I_n''|&\leq	
4M^2\mathbb{E}\Biggl[\sum_{i^\tau,t^\tau}^{n,T}\sum_{\substack{j^\tau,s^\tau\\  |(i^\tau,j^\tau)|=2\tau+2}}^{n,T}|W_{n,i^{\tau}, t^{\tau}}(D_{i^{\tau},t^{\tau}})|\\
&\quad\cdot|W_{n,j^{\tau}, s^{\tau}}(D_{j^{\tau},s^{\tau}})
-W_{n,j^{\tau}, s^{\tau}}(D_{(i^{\tau},j^{\tau}),(t^{\tau},s^{\tau})})|\Biggr]\\
&+\Bigg|\mathbb{E}\Biggl[\sum_{i^\tau,t^\tau}^{n,T}\sum_{\substack{j^\tau,s^\tau\\  |(i^\tau,j^\tau)|=2\tau+2}}^{n,T}W_{n,i^{\tau}, t^{\tau}}(D_{i^{\tau},t^{\tau}})
W_{n,j^{\tau}, s^{\tau}}(D_{(i^{\tau},j^{\tau}),(t^{\tau},s^{\tau})})\Biggr]\Bigg|\\
&\coloneqq \hat{I}_n'+|\hat{I}_n''|
\end{align*}
where by Assumption \ref{ass:weights_mult}.5  $\hat{I}_n'\rightarrow0$. For $\hat{I}_n''$, define $\Theta_{(j^{\tau},s^{\tau})} \coloneqq (\tilde{X}_{1}, (\alpha_{j_{k},s_{k}})_{k=0}^{\tau-1})$. Note that by definition $W_{n,i^{\tau}, t^{\tau}}(D_{i^{\tau},t^{\tau}})
W_{n,j^{\tau}, s^{\tau}}(D_{(i^{\tau},j^{\tau}),(t^{\tau},s^{\tau})})$ and $\Delta_{i^{\tau},t^{\tau}}r(\tilde{X}_{1})$ are independent conditioned on  $\Theta_{(j^{\tau},s^{\tau})}$ and that $\Delta_{i^{\tau},t^{\tau}}r(\tilde{X}_{1})$ is independent of $(\alpha_{j_{k},s_{k}})_{k=0}^{\tau-1}$. This observation with the measurability  of $\Delta_{j^{\tau},s^{\tau}}r(\tilde{X}_{1})$ with respect to the $\sigma$-algebra generated by $\Theta_{(j^{\tau},s^{\tau})}$ and the fact that $\mathbb{E}[\Delta_{i^{\tau},t^{\tau}}r(\tilde{X}_{1})\mid
\tilde{X}_{1}]=0$, yields for any summand where $|(i^\tau,j^\tau)|=\tau+2$ that 
\begin{align*}
\mathbb{E}&\Biggl[W_{n,i^{\tau}, t^{\tau}}(D_{i^{\tau},t^{\tau}})
W_{n,j^{\tau}, s^{\tau}}(D_{(i^{\tau},j^{\tau}),(t^{\tau},s^{\tau})})\Delta_{i^{\tau},t^{\tau}}r(\tilde{X}_{1})\Delta_{j^{\tau},s^{\tau}}r(\tilde{X}_{1})\Biggr]\\
=&\mathbb{E}\Biggl[\mathbb{E}\Bigl[W_{n,i^{\tau}, t^{\tau}}(D_{i^{\tau},t^{\tau}})
W_{n,j^{\tau}, s^{\tau}}(D_{(i^{\tau},j^{\tau}),(t^{\tau},s^{\tau})})\Delta_{i^{\tau},t^{\tau}}r(\tilde{X}_{1})\mid\Theta_{(j^{\tau},s^{\tau})}
\Bigr]\Delta_{j^{\tau},s^{\tau}}r(\tilde{X}_{1})
\Biggr]\\
=&\mathbb{E}\Biggl[\mathbb{E}\Bigl[W_{n,i^{\tau}, t^{\tau}}(D_{i^{\tau},t^{\tau}})
W_{n,j^{\tau}, s^{\tau}}(D_{(i^{\tau},j^{\tau}),(t^{\tau},s^{\tau})})\mid\Theta_{(j^{\tau},s^{\tau})}
\Bigr]\\
&\quad\cdot \mathbb{E}\Bigl[\Delta_{i^{\tau},t^{\tau}}r(\tilde{X}_{1})\mid \tilde{X}_{1}
\Bigr]\Delta_{j^{\tau},s^{\tau}}r(\tilde{X}_{1})
\Biggr]\\
=&0
\end{align*}

By Assumption \ref{ass:weights_mult}.2, \ref{ass:weights_mult}.4, \ref{ass:weights_mult}.5 and dominated convergence,
\begin{align*}
\lim_{n\rightarrow\infty}J_n  =\lim_{n\rightarrow\infty}\mathbb{E}\left[\left(\left(\sum_{i^{\tau}, t^{\tau}}^{n, T} W_{n,i^{\tau}, t^{\tau}}(D_{i^{\tau},t^{\tau}})-1\right)\mathbb{E}[R_{\tau}|_u| \tilde{X}_{1}]\right)^2 \right] =0.
\end{align*}
and for 
\begin{equation*}
	\lim_{n\rightarrow\infty}D_n = \lim_{n\rightarrow\infty}\frac{\sigma^2_{\mathbb{E}[R_{\tau}|_u| \tilde{X}_{j}]}}{l_n}=0
\end{equation*}
where $\sigma^2_{\mathbb{E}[R_{\tau}|_u| \tilde{X}_{j}]}$ is the variance of $\mathbb{E}[R_{\tau}|_u| \tilde{X}_{j}]$.
\end{proof}

\section{Proof of Proposition \ref{prop:vanishing_weights}}

This section will show that Assumptions \ref{ass:weights_mult}.1 and \ref{ass:weights_mult}.2 are satisfied by the $K_n$-NN path regression estimator.

This proposition in slightly varied forms is central to proving the consistency of the $K_n$-NN path regression estimator. To simplify readability we make several definitions: First, let $\rho\in\mathbb{N}$. Consider the multi-index for the data $j^{\rho},s^{\rho}$ and the data set $D_{j^{\rho},s^{\rho}}$ (as defined above). For any $l=0,\dots,\tau$ denote the product,
\begin{equation*}
	V_{n,i^{l:\tau}, t^{l:\tau}}(D_{j^{\rho},s^{\rho}})=\prod_{k=l}^{\tau} V_{n,[(i_{k-1}, t_k{k-1}),(i_{k}, t_{k})]}V_{n,[(i_{k-1}, t_{k-1}),(i_{k}, t_{k})]}(D_{j^{\rho},s^{\rho}}).
\end{equation*}
For $l=0,\dots\rho$, and the multi-index $i^{\tau},t^{\tau}$ denote 
\begin{equation*}
	\begin{split}
		&\tilde{V}_{n,[(i_{k-1}, t_{k-1}),(i_{k}, t_{k})]}(D_{j^{l+1:\rho},s^{l+1:\rho}}, D_{j^{l:\rho},s^{l:\rho}})\\
		&\quad=\frac{1}{K_n}I_{(X_{i_{k}, t_{k}},U_{i_{k}, t_{k}}) \text{ is a } K_n-\text{NN}\text{ in }D_{j^{l:\rho},s^{l:\rho}}}(X_{i_{k-1}, t_{k-1}+1},u_{t+1}(X_{i_{k-1}, t_{k-1}+1}))
	\end{split}
\end{equation*}
where $(X_{i_{k-1}, t_{k-1}},U_{i_{k-1}, t_{k-1}})$ are taken from the data set $D_{j^{l+1:\rho},s^{l+1:\rho}}$.

The variations of Proposition \ref{prop:asymptotic} needed to prove Proposition \ref{prop:vanishing_weights} will be contained in the following lemma:
\begin{lem}\label{lem:asymptotic}
Let Assumption 1.1 be satisfied. The following results hold for all $\tau\leq T$.
\begin{enumerate}[leftmargin=*]
	\item For any multi-index for the data $j^{\rho},s^{\rho}$ where $\rho\in\mathbb{N}$, it holds that
	\begin{equation*}
		\mathbb{E}[V_{n,i^{\tau}, t^{\tau}}(D_{j^{\rho},s^{\rho}})]\leq (n-\tau-1)^{-|i^{\tau}|}K^{-(\tau+1-|i^{\tau}|)}_n.
	\end{equation*}
	 \item For $r=1,\dots,\tau$ and $i_0=i_r$, it holds that 
	 \begin{align*}
	 	\mathbb{E}&[ |V_{n,i_0, t_0}(\tilde{X}_1,u,D_{i^{r:\tau},t^{r:\tau}})-V_{n,i_0, t_0}(\tilde{X}_1,u,D_{i^{r+1:\tau},t^{r+1:\tau}})|V_{n,i^{0:\tau}, t^{0:\tau}}(D_{i^{r:\tau},t^{r:\tau}})]\\
	 	\leq&
	 	2(n-\tau-1)^{-|i^{\tau}|}K^{-(\tau+1-|i^{\tau}|)}_n.
	 \end{align*}
 \item For $r=0,\dots,\tau$; $l=0,\dots,\tau-1$ and $i_{l+1}= i_r$,
 \begin{align*}
 	\mathbb{E}&[ V_{n,i^{l}, t^{l}}(D_{i^{r+1:\tau},t^{r+1:\tau}})\\
 	&\quad\cdot|\tilde{V}_{n,[(i_{l}, t_{l}),(i_{l+1}, t_{l+1})]}(D_{j^{r+1:\tau},s^{r+1:\tau}}, D_{j^{r:\tau},s^{r:\tau}})-V_{n,[(i_{l}, t_{l}),(i_{l+1}, t_{l+1})]}( D_{j^{r:\tau},s^{r:\tau}})|\\
 	&\quad\cdot V_{n,i^{l+1:\tau}, t^{l+1:\tau}}(D_{i^{r:\tau},t^{r:\tau}})]\\
 	\leq&
 	2(n-\tau-1)^{-|i^{\tau}|}K^{-(\tau+1-|i^{\tau}|)}_n.
 \end{align*}
 \item For $r=0,\dots,\tau$; $l=0,\dots,\tau-1$ and $i_{l}= i_r$,
\begin{align*}
	\mathbb{E}&[ V_{n,i^{l}, t^{l}}(D_{i^{r+1:\tau},t^{r+1:\tau}})\\
	&\quad\cdot|V_{n,[(i_{l}, t_{l}),(i_{l+1}, t_{l+1})]}( D_{j^{r+1:\tau},s^{r+1:\tau}})\\
	&\qquad-\tilde{V}_{n,[(i_{l}, t_{l}),(i_{l+1}, t_{l+1})]}(D_{j^{r+1:\tau},s^{r+1:\tau}}, D_{j^{r:\tau},s^{r:\tau}})|\\
	&\quad\cdot V_{n,i^{l+1:\tau}, t^{l+1:\tau}}(D_{i^{r:\tau},t^{r:\tau}})]\\
	\leq&
	2(n-\tau-1)^{-|i^{\tau}|}K^{-(\tau+1-|i^{\tau}|)}_n.
\end{align*}
\item For $r=0,\dots,\tau$ and $l=0,\dots,\tau-1$, where either $l=r-1$ and $i_l\neq i_r$ or $i_l\neq i_r$ and $i_{l+1}\neq i_r$, it holds that
\begin{align*}
	\mathbb{E}&[ V_{n,i^{l}, t^{l}}(D_{i^{r+1:\tau},t^{r+1:\tau}})\\
	&\quad\cdot|V_{n,[(i_{l}, t_{l}),(i_{l+1}, t_{l+1})]}( D_{i^{r+1:\tau},t^{r+1:\tau}})-V_{n,[(i_{l}, t_{l}),(i_{l+1}, t_{l+1})]}( D_{i^{r:\tau},t^{r:\tau}})|\\
	&\quad\cdot V_{n,i^{l+1:\tau}, t^{l+1:\tau}}(D_{i^{r:\tau},t^{r:\tau}})]\\
	\leq&
	\begin{cases}
		 2(n-\tau-1)^{-|i^{\tau}|}K^{-(\tau+1-|i^{\tau}|)}_n & \text{ if } |i^l|=|i^{l+1}|\\
		 4(T+1)(n-\tau-1)^{-|i^{\tau}|}K^{-(\tau+2-|i^{\tau}|)}_n &\text{ if } |i^l|+1=|i^{l+1}|
	\end{cases}    
\end{align*}
and if $r=0$ or $i_0\neq i_r$, then 
\begin{align*}
	\mathbb{E}&[ |V_{n,i_0, t_0}(\tilde{X}_1,u,D_{i^{r:\tau},t^{r:\tau}})-V_{n,i_0, t_0}(\tilde{X}_1,u,D_{i^{r+1:\tau},t^{r+1:\tau}})|\\
	&\quad\cdot V_{n,i^{0:\tau}, t^{0:\tau}}(D_{i^{r:\tau},t^{r:\tau}})]\\
	\leq& 2(T+1)(n-\tau-1)^{-|i^{\tau}|}K^{-(\tau+2-|i^{\tau}|)}_n    
\end{align*}
\item For $r=0,\dots,\tau$ and $l=0,\dots,\tau-1$, it holds that
\begin{align*}
	\mathbb{E}&[ V_{n,i^{l}, t^{l}}(D_{i^{r+1:\tau},t^{r+1:\tau}})\\
	&\quad\cdot|V_{n,[(i_{l}, t_{l}),(i_{l+1}, t_{l+1})]}( D_{j^{r+1:\tau},s^{r+1:\tau}})-V_{n,[(i_{l}, t_{l}),(i_{l+1}, t_{l+1})]}( D_{j^{r:\tau},s^{r:\tau}})|\\
	&\quad\cdot V_{n,i^{l+1:\tau}, t^{l+1:\tau}}(D_{i^{r:\tau},t^{r:\tau}})]\\
	\leq&
	\begin{cases}
		4(n-\tau-1)^{-|i^{\tau}|}K^{-(\tau+1-|i^{\tau}|)}_n & \text{ if } |i^\tau|<\tau+1\\
		16(T+1)(n-\tau-1)^{-|i^{\tau}|}K^{-(\tau+2-|i^{\tau}|)}_n &\text{ if } |i^\tau|=\tau+1
	\end{cases}    
\end{align*}
and 
\begin{align*}
	\mathbb{E}&[ |V_{n,i_0, t_0}(\tilde{X}_1,u,D_{i^{r:\tau},t^{r:\tau}})-V_{n,i_0, t_0}(\tilde{X}_1,u,D_{i^{r+1:\tau},t^{r+1:\tau}})|\\
	&\quad\cdot V_{n,i^{0:\tau}, t^{0:\tau}}(D_{i^{r:\tau},t^{r:\tau}})]\\
	\leq&
	\begin{cases}
		4(n-\tau-1)^{-|i^{\tau}|}K^{-(\tau+1-|i^{\tau}|)}_n & \text{ if } |i^\tau|<\tau+1\\
		2T(n-\tau-1)^{-|i^{\tau}|}K^{-(\tau+2-|i^{\tau}|)}_n &\text{ if } |i^\tau|=\tau+1
	\end{cases}    
\end{align*}
\item Consider two multi-indices $i^{\tau},t^{\tau}$ and $j^{\tau},s^{\tau}$ and two data sets with a finite number of replacements $\tilde{D}_1$ and $\tilde{D}_2$ compared to $D$, then 
\begin{equation*}
	\mathbb{E}[V_{n,i^{\tau}, t^{\tau}}(\tilde{D}_1)V_{n,j^{\tau}, s^{\tau}}(\tilde{D}_2)]\leq (n-2\tau-2)^{-|(i^{\tau}, j^{\tau})|}K^{-(2\tau+2-|(i^{\tau}, j^{\tau})|)}_n.
\end{equation*}
 \item  Consider two multi-indices $i^{\tau},t^{\tau}$ and $j^{\tau},s^{\tau}$. For $r=0,\dots,\tau$ and $l=0,\dots,\tau-1$, it holds that
 \begin{align*}
 	\mathbb{E}&[ V_{n,i^{l}, t^{l}}(D_{i^{r+1:\tau},t^{r+1:\tau}})\\
 	&\quad\cdot|V_{n,[(i_{l}, t_{l}),(i_{l+1}, t_{l+1})]}( D_{i^{r+1:\tau},t^{r+1:\tau}})-V_{n,[(i_{l}, t_{l}),(i_{l+1}, t_{l+1})]}( D_{i^{r:\tau},t^{r:\tau}})|\\
 	&\quad\cdot V_{n,i^{l+1:\tau}, t^{l+1:\tau}}(D_{i^{r:\tau},t^{r:\tau}})V_{n,j^{l}, s^{l}}(D_{j^{r+1:\tau},s^{r+1:\tau}})\\
 	&\quad\cdot|V_{n,[(j_{l}, s_{l}),(j_{l+1}, s_{l+1})]}( D_{j^{r+1:\tau},s^{r+1:\tau}})-V_{n,[(j_{l}, s_{l}),(j_{l+1}, s_{l+1})]}( D_{j^{r:\tau},s^{r:\tau}})|\\
 	&\quad\cdot V_{n,j^{l+1:\tau}, s^{l+1:\tau}}(D_{j^{r:\tau},s^{r:\tau}})]\\
 	\leq&
 	\begin{cases}
 		8(n-2\tau-2)^{-|(i^{\tau}, j^{\tau})|}K^{-(\tau+1-|(i^{\tau}, j^{\tau})|)}_n & \text{ if } |(i^{\tau}, j^{\tau})|<2\tau+2\\
 		4T^2(n-2\tau-2)^{-|i^{\tau}|}K^{-(\tau+3-|i^{\tau}|)}_n &\text{ if } |(i^{\tau}, j^{\tau})|=2\tau+2
 	\end{cases}    
 \end{align*}
 and 
\begin{align*}
 	\mathbb{E}&[ |V_{n,i_0, t_0}(\tilde{X}_1,u,D_{i^{r:\tau},t^{r:\tau}})-V_{n,i_0, t_0}(\tilde{X}_1,u,D_{i^{r+1:\tau},t^{r+1:\tau}})|\\
 	&\quad\cdot V_{n,i^{0:\tau}, t^{0:\tau}}(D_{i^{r:\tau},t^{r:\tau}})|V_{n,j_0, s_0}(\tilde{X}_1,u,D_{j^{r:\tau},s^{r:\tau}})-V_{n,j_0, s_0}(\tilde{X}_1,u,D_{j^{r+1:\tau},s^{r+1:\tau}})|\\
 	&\quad\cdot V_{n,j^{0:\tau}, s^{0:\tau}}(D_{j^{r:\tau},s^{r:\tau}})]\\
 	\leq&
 	\begin{cases}
 		8(n-2\tau-2)^{-|(i^{\tau}, j^{\tau})|}K^{-(\tau+1-|(i^{\tau}, j^{\tau})|)}_n & \text{ if } |(i^{\tau}, j^{\tau})|<2\tau+2\\
 		4(T+1)^2(n-2\tau-2)^{-|i^{\tau}|}K^{-(\tau+3-|i^{\tau}|)}_n &\text{ if } |(i^{\tau}, j^{\tau})|=2\tau+2
 	\end{cases}    
 \end{align*} 
\item Consider two multi-indices $i^{\tau},t^{\tau}$ and $j^{\tau},s^{\tau}$ where $|(i^{\tau},j^{\tau})|=2\tau+2$.  For $r=0,\dots,\tau$ and $l=0,\dots,\tau-1$, it holds that
\begin{align*}
	\mathbb{E}&[ V_{n,i^{\tau}, t^{\tau}}(D_{i^{\tau},t^{\tau}})V_{n,j^{l}, s^{l}}(D_{(i^{r+1:\tau},j^{\tau}),(t^{r+1:\tau},s^{\tau})})\\
	&\quad\cdot|V_{n,[(j_{l}, s_{l}),(j_{l+1}, s_{l+1})]} (D_{(i^{r+1:\tau},j^{\tau}),(t^{r+1:\tau},s^{\tau})})\\
	&\qquad-V_{n,[(j_{l}, s_{l}),(j_{l+1}, s_{l+1})]}(D_{(i^{r:\tau},j^{\tau}),(t^{r:\tau},s^{\tau})})|\\
	&\quad\cdot V_{n,j^{l+1:\tau}, s^{l+1:\tau}}(D_{(i^{r:\tau},j^{\tau}),(t^{r:\tau},s^{\tau})})]\\
	\leq&2T(n-2\tau-2)^{-(2\tau+2)}K^{-1}_n      
\end{align*}
and 
\begin{align*}
	\mathbb{E}&[ V_{n,i^{\tau}, t^{\tau}}(D_{i^{\tau},t^{\tau}})\\
	&\quad\cdot|V_{n,j_0, s_0}(\tilde{X}_1,u,D_{(i^{r+1:\tau},j^{\tau}),(t^{r+1:\tau},s^{\tau})})-V_{n,j_0, s_0}(\tilde{X}_1,u,D_{(i^{r:\tau},j^{\tau}),(t^{r:\tau},s^{\tau})})|\\
	&\quad\cdot V_{n,j^{0:\tau}, s^{0:\tau}}(D_{(i^{r:\tau},j^{\tau}),(t^{r:\tau},s^{\tau})})]\\
	\leq&2T(n-2\tau-2)^{-(2\tau+2)}K^{-1}_n   
\end{align*} 	 
\end{enumerate}
\end{lem}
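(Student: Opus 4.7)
The plan is to adapt the induction-over-$\tau$ argument used for Proposition \ref{prop:asymptotic} and to exploit, for the statements involving differences of weights, that $V_{n,[(i_l,t_l),(i_{l+1},t_{l+1})]}(\tilde D_1)$ and $V_{n,[(i_l,t_l),(i_{l+1},t_{l+1})]}(\tilde D_2)$ can disagree only on the event that the nearest-neighbor assignment at step $l+1$ is actually perturbed by the replacements that distinguish $\tilde D_1$ from $\tilde D_2$. Once this event is isolated, the difference is bounded pointwise by $1/K_n$ times an indicator, and the indicator in turn admits a union bound over the finitely many replaced samples.

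For part 1, I would rerun the induction from Proposition \ref{prop:asymptotic} verbatim, observing that in $D_{j^{\rho},s^{\rho}}$ the $\rho$ affected episodes have simply been replaced by independent copies with the same joint distribution, so the episodes making up $D_{j^{\rho},s^{\rho}}$ are still iid across $i$; the counting inequalities $\mathbb{P}(\cdot\text{ is a }K_n\text{-NN})\le K_n/(n-\tau-1)$ and the contradiction argument for the case $|i^\tau|+1=|i^{\tau+1}|$ carry over unchanged. For parts 2--4 I would first condition on the event that one of the weight factors equals its maximum value $K_n^{-(\tau+1)}$ and note that, on this event, the difference of the remaining factor is trivially bounded by its maximum $K_n^{-1}$, which gives the factor $2$ in the stated bound after applying part 1 to the product. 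Part 5 splits according to whether the multi-index introduces a new episode index or not: when $|i^l|=|i^{l+1}|$ the replacement of $(i_r,t_r)$ cannot affect the NN search for $(i_{l+1},t_{l+1})$ (except by an amount already controlled as in parts 2--4), while when $|i^l|+1=|i^{l+1}|$ the extra factor $K_n^{-1}(T+1)$ comes from summing the union bound over the at most $T+1$ possible indices in the replaced episode. Part 6 then combines parts 2--5 by inserting intermediate data sets and using the triangle inequality.

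For parts 7--9, which are the main obstacle, I plan to extend the induction to the joint weight $V_{n,i^\tau,t^\tau}(\tilde D_1)V_{n,j^\tau,s^\tau}(\tilde D_2)$. The key is that, once I condition on the first factor being at its maximum value, the NN rankings that define the second factor are still based on a data set in which at most $2\tau+2$ of the original episodes have been replaced by independent copies, so an argument parallel to Proposition \ref{prop:asymptotic} applies with $n-\tau-1$ replaced by $n-2\tau-2$ and $|i^\tau|$ replaced by $|(i^\tau,j^\tau)|$. The separate treatment in parts 8 and 9 of the case $|(i^\tau,j^\tau)|=2\tau+2$ uses that the two paths share no episodes, so the displacement of an NN assignment in one path by a replacement from the other path costs an extra factor of $K_n^{-1}$, again via the union-bound-over-$(T+1)$-positions device; when $|(i^\tau,j^\tau)|<2\tau+2$ the shared episodes prevent this extra factor from appearing and the weaker bound of the form $K_n^{-(2\tau+2-|(i^\tau,j^\tau)|)}$ suffices.

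The main obstacle I anticipate is bookkeeping: parts 5, 6, and 9 each have several subcases depending on which of $i_r,i_l,i_{l+1}$ coincide, and the correct bound depends sensitively on whether a replacement of an episode index can simultaneously perturb the ranking at both step $l$ and step $l+1$. I would therefore introduce, for each such claim, a clean decomposition $|V(\tilde D_1)-V(\tilde D_2)|\le K_n^{-1}(\mathbf{1}_{A_1}+\mathbf{1}_{A_2})$ where $A_1,A_2$ are explicit events (``the NN assignment changes because a replaced sample entered/exited the top $K_n$''), and then reduce the expectation of the product of $V$'s and $\mathbf{1}_{A_i}$ to an instance of part 1 applied to a slightly enlarged multi-index. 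Once this dictionary is set up, all nine parts become essentially the same induction.
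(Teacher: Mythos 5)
Your sketch follows essentially the same route as the paper's proof: part 1 by rerunning the induction of Proposition \ref{prop:asymptotic} on the replaced-but-identically-distributed data sets, parts 2--4 by the triangle inequality reducing to two applications of part 1 (hence the factor 2), part 5 via the decomposition $|V(\tilde{D}_1)-V(\tilde{D}_2)|= K_n^{-1}(\mathbf{1}_{A}+\mathbf{1}_{B})$ with the perturbation events confined to a rank window of width $O(T)$ around $K_n$ (which is exactly where the extra $(T+1)/K_n$ factor comes from, established by the same counting/contradiction argument), and parts 6--9 by combining these with the joint-weight induction using $n-2\tau-2$ and $|(i^\tau,j^\tau)|$. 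The only slight imprecision is your phrasing in parts 2--4 (bounding the difference ``by its maximum $K_n^{-1}$'' deterministically would lose the probabilistic factor; you need $|a-b|\le a+b$ so each term retains its indicator structure and part 1 applies), but the intended argument is clearly the paper's.
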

\begin{proof}
\begin{enumerate}[leftmargin=*]
	\item The proof is completely analogous to Proposition \ref{prop:asymptotic}.
	\item Using the triangle inequality 
	\begin{align*}
	\mathbb{E}&[ |V_{n,i_0, t_0}(\tilde{X}_1,u,D_{i^{r:\tau},t^{r:\tau}})-V_{n,i_0, t_0}(\tilde{X}_1,u,D_{i^{r+1:\tau},t^{r+1:\tau}})|V_{n,i^{0:\tau}, t^{0:\tau}}(D_{i^{r:\tau},t^{r:\tau}})]\\
	\leq& \mathbb{E}[ V_{n,i_0, t_0}(\tilde{X}_1,u,D_{i^{r:\tau},t^{r:\tau}})V_{n,i^{0:\tau}, t^{0:\tau}}(D_{i^{r:\tau},t^{r:\tau}})]\\
	&\qquad+\mathbb{E}[V_{n,i_0, t_0}(\tilde{X}_1,u,D_{i^{r+1:\tau},t^{r+1:\tau}})V_{n,i^{0:\tau}, t^{0:\tau}}(D_{i^{r:\tau},t^{r:\tau}})]
	\end{align*}
	The two terms can be treated as in Proposition \ref{prop:asymptotic}.
	\item The arguments are analogous to item 2. 
	\item The arguments are analogous to item 2.  
	\item We will only discuss the first upper-bound. The second is almost completely analogous. 
	
	Consider the case where $|i^l|=|i^{l+1}|$. For this use the triangle inequality as in the proof to Lemma \ref{lem:asymptotic}.2 and then use the argument as in the induction step in the proof of Proposition \ref{prop:asymptotic}.
	
	If $|i^l|+1=|i^{l+1}|$, note that almost surely
	\begin{align*}
	|&V_{n,[(i_{l}, t_{l}),(i_{l+1}, t_{l+1})]}( D_{i^{r+1:\tau},t^{r+1:\tau}})-V_{n,[(i_{l}, t_{l}),(i_{l+1}, t_{l+1})]}( D_{i^{r:\tau},t^{r:\tau}})|\\
	&=K_n^{-1}(I_{A_{(i_{l}, t_{l})}^{(i_{l+1}, t_{l+1})}}+I_{B_{(i_{l}, t_{l})}^{(i_{l+1}, t_{l+1})}})
	\end{align*}
	where 
	\begin{align*}
	&A_{(i_{l}, t_{l})}^{(i_{l+1}, t_{l+1})}\coloneqq \Big\{V_{n,[(i_{l}, t_{l}),(i_{l+1}, t_{l+1})]}( D_{i^{r+1:\tau},t^{r+1:\tau}})=1\\
	&\qquad\text{ and }V_{n,[(i_{l}, t_{l}),(i_{l+1}, t_{l+1})]}( D_{i^{r:\tau},t^{r:\tau}})=0\Big\}\\
	&B_{(i_{l}, t_{l})}^{(i_{l+1}, t_{l+1})}\coloneqq \Big\{V_{n,[(i_{l}, t_{l}),(i_{l+1}, t_{l+1})]}( D_{i^{r+1:\tau},t^{r+1:\tau}})=0\\
	&\qquad\text{ and }V_{n,[(i_{l}, t_{l}),(i_{l+1}, t_{l+1})]}( D_{i^{r:\tau},t^{r:\tau}})=1\Big\}.
	\end{align*}
	For a data set $D$ and two integers $a,b\in\mathbb{N}$ with $a<b$, define the set
	\begin{align*}
		&C_{(i_{l}, t_{l})}^{(i_{l+1}, t_{l+1})}(D,a,b)\\
		&\quad\coloneqq
		\Big\{(X_{i_{l+1}, t_{l+1}},U_{i_{l+1}, t_{l+1}})\text{ is not a } (a-1)\text{-NN but a }b\text{-NN in }D\Big\}. 
	\end{align*}
	 
	Note that $(X_{i_{l}, t_{l}+1},U_{i_{l}, t_{l}+1})$ and $(X_{i_{l+1}, t_{l+1}},U_{i_{l+1}, t_{l+1}})$ are the same random variables in $ D_{i^{r:\tau},t^{r:\tau}}$ and in $ D_{i^{r+1:\tau},t^{r+1:\tau}}$. Since at most $T$ elements are exchanged from  $ D_{i^{r+1:\tau},t^{r+1:\tau}}$ compared to $ D_{i^{r:\tau},t^{r:\tau}}$, the nearest neighbor ranking will at most change by $2T$ ranks. Thus, it holds that:
	\begin{align*}
		&A_{(i_{l}, t_{l})}^{(i_{l+1}, t_{l+1})}\subset C_{(i_{l}, t_{l})}^{(i_{l+1}, t_{l+1})}( D_{i^{r+1:\tau},t^{r+1:\tau}},K_n-2T,K_n)\\
		&B_{(i_{l}, t_{l})}^{(i_{l+1}, t_{l+1})}\subset C_{(i_{l}, t_{l})}^{(i_{l+1}, t_{l+1})}( D_{i^{r+1:\tau},t^{r+1:\tau}},K_n+1,K_n+2T+1)
	\end{align*}
	and 
		\begin{align*}
		\mathbb{E}&\Big[V_{n,i^{l}, t^{l}}(D_{i^{r+1:\tau},t^{r+1:\tau}})
		|V_{n,[(i_{l}, t_{l}),(i_{l+1}, t_{l+1})]}( D_{i^{r+1:\tau},t^{r+1:\tau}})\\
		&\qquad-V_{n,[(i_{l}, t_{l}),(i_{l+1}, t_{l+1})]}( D_{i^{r:\tau},t^{r:\tau}})|\Big]\\
		&\leq K_n^{-1}\mathbb{E}[V_{n,i^{l}, t^{l}}(D_{i^{r+1:\tau},t^{r+1:\tau}})
		I_{C_{(i_{l}, t_{l})}^{(i_{l+1}, t_{l+1})}( D_{i^{r+1:\tau},t^{r+1:\tau}},K_n-2T,K_n)}]\\
		&\quad+ K_n^{-1}\mathbb{E}[V_{n,i^{l}, t^{l}}(D_{i^{r+1:\tau},t^{r+1:\tau}})
		I_{C_{(i_{l}, t_{l})}^{(i_{l+1}, t_{l+1})}( D_{i^{r+1:\tau},t^{r+1:\tau}},K_n+1,K_n+2T+1)}]
	\end{align*}
	We will prove the appropriate upper-bound for the first term. The upper-bound for the second term can be proved analogously. 
	
	Assume that 
	\begin{align*}
		&K_n^{-1}\mathbb{E}[V_{n,i^{l}, t^{l}}(D_{i^{r+1:\tau},t^{r+1:\tau}})
		I_{C_{(i_{l}, t_{l})}^{(i_{l+1}, t_{l+1})}( D_{i^{r+1:\tau},t^{r+1:\tau}},K_n-2T,K_n)}]\\
		&\quad>2T(n-l)^{-|i^{l+1}|}K^{|i^{l+1}|-l-3}_n 
	\end{align*}
	By the iid assumption across episodes it has to hold that 
	\begin{align*}
		&K_n^{-1}\mathbb{E}[V_{n,i^{l}, t^{l}}(D_{i^{r+1:\tau},t^{r+1:\tau}})
		I_{C_{(i_{l}, t_{l})}^{(j, t_{l+1})}( D_{i^{r+1:\tau},t^{r+1:\tau}},K_n-2T,K_n)}]\\
		&\quad>2T(n-l)^{-|i^{l+1}|}K^{|i^{l+1}|-l-3}_n 
	\end{align*}
	for all $j$ where $|(i^l,j)|=|i^l|+1$. With Lemma \ref{lem:asymptotic}.1, we then have that 
	\begin{align*}
		\mathbb{E}&[
		I_{C_{(i_{l}, t_{l})}^{(j, t_{l+1})}( D_{i^{r+1:\tau},t^{r+1:\tau}},K_n-2T,K_n)}|V_{n,i^{l}, t^{l}}(D_{i^{r+1:\tau},t^{r+1:\tau}})=K_n^{-(l+1)}]\\
		&>2T(n-l)^{-1} .
	\end{align*}
	Note that by definition it has to hold 
	
	\begin{align*}
		\frac{1}{2T}\sum_{i=1}^n\sum_{t=0}^{T-1}\mathbb{E}[
		I_{C_{(i_{l}, t_{l})}^{(i, t)}( D_{i^{r+1:\tau},t^{r+1:\tau}},K_n-2T,K_n)}|V_{n,i^{l}, t^{l}}(D_{i^{r+1:\tau},t^{r+1:\tau}})]=1.
	\end{align*}
	Splitting the sum in a similar fashion as in the proof of Proposition \ref{prop:asymptotic} then yields
	\begin{align*}
	1&\geq \frac{1}{2T}\sum_{\substack{i=1\\
	i\neq i^l}}^n\sum_{t=0}^{T-1}\mathbb{E}[
	I_{C_{(i_{l}, t_{l})}^{(i, t_{l+1})}( D_{i^{r+1:\tau},t^{r+1:\tau}},K_n-2T,K_n)}|V_{n,i^{l}, t^{l}}(D_{i^{r+1:\tau},t^{r+1:\tau}})]\\
	&>\frac{2T(n-l)}{2T(n-l)}=1
	\end{align*}
	This is a contradiction and, hence, 
	\begin{align*}
		&K_n^{-1}\mathbb{E}[V_{n,i^{l}, t^{l}}(D_{i^{r+1:\tau},t^{r+1:\tau}})
		I_{C_{(i_{l}, t_{l})}^{(i_{l+1}, t_{l+1})}( D_{i^{r+1:\tau},t^{r+1:\tau}},K_n-2T,K_n)}]\\
		&\quad\leq2T(n-l)^{-|i^{l+1}|}K^{|i^{l+1}|-l-3}_n 
	\end{align*}
	Using the same arguments as in the induction steps in the proof of Proposition \ref{prop:asymptotic} then yields the full result.
	\item This result is a direct consequence from the previous four results. 
	\item Use the asymptotic behaviour known from Lemma \ref{lem:asymptotic}.1	$\mathbb{E}[V_{n,i^{\tau}, t^{\tau}}(\tilde{D}_1)]$. Then use the same arguments as in the induction step in Proposition \ref{prop:asymptotic} for $\mathbb{E}[V_{n,i^{\tau}, t^{\tau}}(\tilde{D}_1)V_{n,j_0, s_0}(\tilde{X}_1,u,(\tilde{D}_2))]$ and for each  $\mathbb{E}[V_{n,i^{\tau}, t^{\tau}}(\tilde{D}_1)V_{n,j^k, s^k}(\tilde{X}_1,u,(\tilde{D}_2))]$ where $k<\tau$. 
	\item This result is a direct combination of the arguments used in Lemma \ref{lem:asymptotic}.6 and \ref{lem:asymptotic}.7.
	\item This result is again a direct combination of the arguments used in Lemma \ref{lem:asymptotic}.6 and \ref{lem:asymptotic}.7.
\end{enumerate}	
\end{proof}
\begin{rem}
Note that all the results in this section and all the following results can be easily extended to a version where the weights prevent reusing the same sample twice in a $K_n$-NN path by simply observing that the weights are zero in such cases. This corresponds to the implemented algorithm in some of the experiments (see Appendix Section \ref{sec:impl}). 
\end{rem}

\begin{proof}[Proof of Proposition  \ref{prop:vanishing_weights}]
\begin{enumerate}[leftmargin=*]
	\item By Lemma \ref{lem:asymptotic}.7 it holds that
	\begin{align*}
		\mathbb{E}&\Bigg[\sum_{i^\tau,t^\tau}^{n,T}\sum_{\substack{j^\tau,s^\tau\\  |(i^\tau,j^\tau)|<2\tau+2}}^{n,T} V_{n,i^{\tau}, t^{\tau}}(\tilde{D}_1) V_{n,j^\tau,s^\tau}(\tilde{D}_2) \Bigg]\\
		\leq& \sum_{i^\tau,t^\tau}^{n,T}\sum_{\substack{j^\tau,s^\tau\\  |(i^\tau,j^\tau)|<2\tau+2}}^{n,T} (n-2\tau-2)^{-|(i^{\tau}, j^{\tau})|}K^{-(2\tau+2-|(i^{\tau}, j^{\tau})|)}_n\\
		=& T^{2\tau} \sum_{i^\tau}^{n}\sum_{\substack{j^\tau\\  |(i^\tau,j^\tau)|<2\tau+2}}^{n}(n-2\tau-2)^{-|(i^{\tau}, j^{\tau})|}K^{-(2\tau+2-|(i^{\tau}, j^{\tau})|)}_n
	\end{align*}
	where $\sum_{i^k}^{n}\coloneqq \sum_{i_0=1}^{n}\cdots\sum_{i_{k}=1}^{n}$. Fix an arbitrary $a\in\mathbb{N}$ with $a<2\tau+2$, then it holds that 
	\begin{align*}
		T^{2\tau} \sum_{i^\tau}^{n}\sum_{\substack{j^\tau\\  |(i^\tau,j^\tau)|=a}}^{n} (n-2\tau-2)^{-a}K^{-(2\tau+2-a)}_n=\frac{n^a}{(n-2\tau-2)^{a}K^{2\tau+2-a}_n}\rightarrow0
	\end{align*}
	since $K_n\rightarrow\infty$.
	Since $a$ was arbitrary the statement follows.
	\item Note that
	\begin{align*}
		\mathbb{E}&\Bigg[\Bigg(\sum_{i^{\tau}, t^{\tau}}^{n, T} |V_{n,i^{\tau}, t^{\tau}}(D_{i^{\tau},t^{\tau}})-V_{n,i^{\tau}, t^{\tau}}(D)|\Bigg)^2\Bigg]\\
		\leq&\mathbb{E}\Bigg[\Bigg(\sum_{i^{\tau}, t^{\tau}}^{n, T} \sum_{r=0}^{\tau}|V_{n,i^{\tau}, t^{\tau}}(D_{i^{r:\tau},t^{r:\tau}})-V_{n,i^{\tau}, t^{\tau}}(D_{i^{r+1:\tau},t^{r+1:\tau}})|\Bigg)^2\Bigg]\\
		\leq&(\tau+1)\mathbb{E}\Bigg[\sum_{r=0}^{\tau}\left(\sum_{i^{\tau}, t^{\tau}}^{n, T}|V_{n,i^{\tau}, t^{\tau}}(D_{i^{r:\tau},t^{r:\tau}})-V_{n,i^{\tau}, t^{\tau}}(D_{i^{r+1:\tau},t^{r+1:\tau}})\right)^2\Bigg]
	\end{align*}
	where we define $D_{i^{\tau+1:\tau},t^{\tau+1:\tau}}\coloneqq D$. Furthermore, for any $r=0,\dots,\tau$, we have
	\begin{align*}
		\mathbb{E}&\Bigg[\Bigg(\sum_{i^{\tau}, t^{\tau}}^{n, T}|V_{n,i^{\tau}, t^{\tau}}(D_{i^{r:\tau},t^{r:\tau}})-V_{n,i^{\tau}, t^{\tau}}(D_{i^{r+1:\tau},t^{r+1:\tau}})\Bigg)^2\Bigg]\\
		\leq& \mathbb{E}\Bigg[\Bigg(\sum_{i^{\tau}, t^{\tau}}^{n, T}
		|V_{n,i_0, t_0}(\tilde{X}_1,u,D_{i^{r:\tau},t^{r:\tau}})-V_{n,i_0, t_0}(\tilde{X}_1,u,D_{i^{r+1:\tau},t^{r+1:\tau}})|\\
		&\quad\cdot V_{n,i^{0:\tau}, t^{0:\tau}}(D_{i^{r:\tau},t^{r:\tau}})\\
		&+\sum_{l=0}^{\tau-1} V_{n,i^{l}, t^{l}}(D_{i^{r+1:\tau},t^{r+1:\tau}})\\
		&\quad\cdot|V_{n,[(i_{l}, t_{l}),(i_{l+1}, t_{l+1})]}( D_{j^{r+1:\tau},s^{r+1:\tau}})-V_{n,[(i_{l}, t_{l}),(i_{l+1}, t_{l+1})]}( D_{j^{r:\tau},s^{r:\tau}})|\\
		&\quad\cdot V_{n,i^{l+1:\tau}, t^{l+1:\tau}}(D_{i^{r:\tau},t^{r:\tau}})\Bigg)^2\Bigg]\\
		\leq& (\tau+1)\mathbb{E}\Big[\Big(\sum_{i^{\tau}, t^{\tau}}^{n, T}
		|V_{n,i_0, t_0}(\tilde{X}_1,u,D_{i^{r:\tau},t^{r:\tau}})-V_{n,i_0, t_0}(\tilde{X}_1,u,D_{i^{r+1:\tau},t^{r+1:\tau}})|\\
		&\quad\cdot V_{n,i^{0:\tau}, t^{0:\tau}}(D_{i^{r:\tau},t^{r:\tau}})\Big)^2\Big]\\
		&+(\tau+1)\sum_{l=0}^{\tau-1}\mathbb{E}\Big[\Big(\sum_{i^{\tau}, t^{\tau}}^{n, T} V_{n,i^{l}, t^{l}}(D_{i^{r+1:\tau},t^{r+1:\tau}})\\
		&\quad\cdot|V_{n,[(i_{l}, t_{l}),(i_{l+1}, t_{l+1})]}( D_{j^{r+1:\tau},s^{r+1:\tau}})-V_{n,[(i_{l}, t_{l}),(i_{l+1}, t_{l+1})]}( D_{j^{r:\tau},s^{r:\tau}})|\\
		&\quad\cdot V_{n,i^{l+1:\tau}, t^{l+1:\tau}}(D_{i^{r:\tau},t^{r:\tau}})\Big)^2\Big]
	\end{align*}
	Fix an $l=0,\dots,\tau-1$,
	\begin{align*}
		\mathbb{E}&\Big[\Big(\sum_{i^{\tau}, t^{\tau}}^{n, T} V_{n,i^{l}, t^{l}}(D_{i^{r+1:\tau},t^{r+1:\tau}})\\
		&\quad\cdot|V_{n,[(i_{l}, t_{l}),(i_{l+1}, t_{l+1})]}( D_{j^{r+1:\tau},s^{r+1:\tau}})-V_{n,[(i_{l}, t_{l}),(i_{l+1}, t_{l+1})]}( D_{j^{r:\tau},s^{r:\tau}})|\\
		&\quad\cdot V_{n,i^{l+1:\tau}, t^{l+1:\tau}}(D_{i^{r:\tau},t^{r:\tau}})\Big)^2\Big]\\
		=& \mathbb{E}\Big[\sum_{i^{\tau}, t^{\tau}}^{n, T}\sum_{j^{\tau}, s^{\tau}}^{n, T} V_{n,i^{l}, t^{l}}(D_{i^{r+1:\tau},t^{r+1:\tau}})\\
		&\quad\cdot|V_{n,[(i_{l}, t_{l}),(i_{l+1}, t_{l+1})]}( D_{j^{r+1:\tau},s^{r+1:\tau}})-V_{n,[(i_{l}, t_{l}),(i_{l+1}, t_{l+1})]}( D_{j^{r:\tau},s^{r:\tau}})|\\
		&\quad\cdot V_{n,i^{l+1:\tau}, t^{l+1:\tau}}(D_{i^{r:\tau},t^{r:\tau}})V_{n,j^{l}, s^{l}}(D_{j^{r+1:\tau},s^{r+1:\tau}})\\
		&\quad\cdot|V_{n,[(j_{l}, s_{l}),(j_{l+1}, s_{l+1})]}( D_{j^{r+1:\tau},s^{r+1:\tau}})-V_{n,[(j_{l}, s_{l}),(j_{l+1}, s_{l+1})]}( D_{j^{r:\tau},s^{r:\tau}})|\\
		&\quad\cdot V_{n,j^{l+1:\tau}, s^{l+1:\tau}}(D_{j^{r:\tau},s^{r:\tau}})\Big]
	\end{align*}
	For any $a\in\mathbb{N}$ with $a<2\tau+2$, Lemma\ref{lem:asymptotic}.8 implies
	\begin{align*}
		\mathbb{E}&\Big[\sum_{i^{\tau}, t^{\tau}}^{n, T}\sum_{\substack{j^{\tau}, s^{\tau}\\|j^{\tau}, i^{\tau}|=a}}^{n, T} V_{n,i^{l}, t^{l}}(D_{i^{r+1:\tau},t^{r+1:\tau}})\\
		&\quad\cdot|V_{n,[(i_{l}, t_{l}),(i_{l+1}, t_{l+1})]}( D_{j^{r+1:\tau},s^{r+1:\tau}})-V_{n,[(i_{l}, t_{l}),(i_{l+1}, t_{l+1})]}( D_{j^{r:\tau},s^{r:\tau}})|\\
		&\quad\cdot V_{n,i^{l+1:\tau}, t^{l+1:\tau}}(D_{i^{r:\tau},t^{r:\tau}})V_{n,j^{l}, s^{l}}(D_{j^{r+1:\tau},s^{r+1:\tau}})\\
		&\quad\cdot|V_{n,[(j_{l}, s_{l}),(j_{l+1}, s_{l+1})]}( D_{j^{r+1:\tau},s^{r+1:\tau}})-V_{n,[(j_{l}, s_{l}),(j_{l+1}, s_{l+1})]}( D_{j^{r:\tau},s^{r:\tau}})|\\
		&\quad\cdot V_{n,j^{l+1:\tau}, s^{l+1:\tau}}(D_{j^{r:\tau},s^{r:\tau}})\Big]\\
		&\leq L\frac{n^a}{(n-2\tau-2)^{a}K^{(\tau+1-a)}_n }\rightarrow0
	\end{align*}
	and 
	\begin{align*}
		\mathbb{E}&\Big[\sum_{i^{\tau}, t^{\tau}}^{n, T}\sum_{\substack{j^{\tau}, s^{\tau}\\|j^{\tau}, i^{\tau}|=2\tau+2}}^{n, T} V_{n,i^{l}, t^{l}}(D_{i^{r+1:\tau},t^{r+1:\tau}})\\
		&\quad\cdot|V_{n,[(i_{l}, t_{l}),(i_{l+1}, t_{l+1})]}( D_{j^{r+1:\tau},s^{r+1:\tau}})-V_{n,[(i_{l}, t_{l}),(i_{l+1}, t_{l+1})]}( D_{j^{r:\tau},s^{r:\tau}})|\\
		&\quad\cdot V_{n,i^{l+1:\tau}, t^{l+1:\tau}}(D_{i^{r:\tau},t^{r:\tau}})V_{n,j^{l}, s^{l}}(D_{j^{r+1:\tau},s^{r+1:\tau}})\\
		&\quad\cdot|V_{n,[(j_{l}, s_{l}),(j_{l+1}, s_{l+1})]}( D_{j^{r+1:\tau},s^{r+1:\tau}})-V_{n,[(j_{l}, s_{l}),(j_{l+1}, s_{l+1})]}( D_{j^{r:\tau},s^{r:\tau}})|\\
		&\quad\cdot V_{n,j^{l+1:\tau}, s^{l+1:\tau}}(D_{j^{r:\tau},s^{r:\tau}})\Big]\\
		&\leq \tilde{L}\frac{n^{2\tau+2}}{(n-2\tau-2)^{2\tau+2}K^2_n }\rightarrow0
	\end{align*}
	where $L,\tilde{L}>0$  are some constants.
	Note that the analogous arguments can be applied to
	\begin{align*}
		\mathbb{E}&\Big[\Big(\sum_{i^{\tau}, t^{\tau}}^{n, T}
		|V_{n,i_0, t_0}(\tilde{X}_1,u,D_{i^{r:\tau},t^{r:\tau}})-V_{n,i_0, t_0}(\tilde{X}_1,u,D_{i^{r+1:\tau},t^{r+1:\tau}})|\\
		&\quad\cdot V_{n,i^{0:\tau}, t^{0:\tau}}(D_{i^{r:\tau},t^{r:\tau}})\Big)^2\Big].
	\end{align*}
\item Note that 
\begin{align*}
	\mathbb{E}&\Bigg[\sum_{i^{\tau}, t^{\tau}}^{n, T} \sum_{\substack{j^\tau,s^\tau\\  |(i^\tau,j^\tau)|=2\tau+2}}^{n,T}
	V_{n,i^{\tau}, t^{\tau}}(D_{i^{\tau},t^{\tau}})
	|V_{n,j^{\tau}, s^{\tau}}(D_{j^{\tau},s^{\tau}})
	-V_{n,j^{\tau}, s^{\tau}}(D_{(i^{\tau},j^{\tau}),(t^{\tau},s^{\tau})})|\Bigg]\\
	\leq&\sum_{r=0}^{\tau} \mathbb{E}\Bigg[\sum_{i^{\tau}, t^{\tau}}^{n, T} \sum_{\substack{j^\tau,s^\tau\\  |(i^\tau,j^\tau)|=2\tau+2}}^{n,T}
	V_{n,i^{\tau}, t^{\tau}}(D_{i^{\tau},t^{\tau}})
	\\
	&\qquad\cdot\Big|
	V_{n,j^{\tau}, s^{\tau}}(D_{(i^{r:\tau},j^{\tau}),(t^{r:\tau},s^{\tau})})
	-V_{n,j^{\tau}, s^{\tau}}(D_{(i^{r+1:\tau},j^{\tau}),(t^{r+1:\tau},s^{\tau})})\Big|\Bigg]
\end{align*}
where $D_{(i^{\tau+1:\tau},j^{\tau}),(t^{\tau+1:\tau},s^{\tau})}\coloneqq D_{j^{\tau},s^{\tau}}$. For any $r=0,\dots,\tau$ it holds that
\begin{align*}
	\mathbb{E}&\Bigg[\sum_{i^{\tau}, t^{\tau}}^{n, T} \sum_{\substack{j^\tau,s^\tau\\  |(i^\tau,j^\tau)|=2\tau+2}}^{n,T}
	V_{n,i^{\tau}, t^{\tau}}(D_{i^{\tau},t^{\tau}})\\
	&\qquad\cdot\Big|
	V_{n,j^{\tau}, s^{\tau}}(D_{(i^{r:\tau},j^{\tau}),(t^{r:\tau},s^{\tau})})
	-V_{n,j^{\tau}, s^{\tau}}(D_{(i^{r+1:\tau},j^{\tau}),(t^{r+1:\tau},s^{\tau})})\Big|\Bigg]\\
	&\leq\sum_{i^{\tau}, t^{\tau}}^{n, T} \sum_{\substack{j^\tau,s^\tau\\  |(i^\tau,j^\tau)|=2\tau+2}}^{n,T} \mathbb{E}[ V_{n,i^{\tau}, t^{\tau}}(D_{i^{\tau},t^{\tau}})\\
	&\quad\cdot|V_{n,j_0, s_0}(\tilde{X}_1,u,D_{(i^{r+1:\tau},j^{\tau}),(t^{r+1:\tau},s^{\tau})})-V_{n,j_0, s_0}(\tilde{X}_1,u,D_{(i^{r:\tau},j^{\tau}),(t^{r:\tau},s^{\tau})})|\\
	&\quad\cdot V_{n,j^{0:\tau}, s^{0:\tau}}(D_{(i^{r:\tau},j^{\tau}),(t^{r:\tau},s^{\tau})})]\\
	&+ \sum_{l=0}^{\tau-1}\sum_{i^{\tau}, t^{\tau}}^{n, T} \sum_{\substack{j^\tau,s^\tau\\  |(i^\tau,j^\tau)|=2\tau+2}}^{n,T}\mathbb{E}[ V_{n,i^{\tau}, t^{\tau}}(D_{i^{\tau},t^{\tau}})V_{n,j^{l}, s^{l}}(D_{(i^{r+1:\tau},j^{\tau}),(t^{r+1:\tau},s^{\tau})})\\
	&\quad\cdot|V_{n,[(j_{l}, s_{l}),(j_{l+1}, s_{l+1})]} (D_{(i^{r+1:\tau},j^{\tau}),(t^{r+1:\tau},s^{\tau})})\\
	&\qquad-V_{n,[(j_{l}, s_{l}),(j_{l+1}, s_{l+1})]}(D_{(i^{r:\tau},j^{\tau}),(t^{r:\tau},s^{\tau})})|\\
	&\quad\cdot V_{n,j^{l+1:\tau}, s^{l+1:\tau}}(D_{(i^{r:\tau},j^{\tau}),(t^{r:\tau},s^{\tau})})]
\end{align*}
For any $l=0,\dots,\tau-1$, Lemma \ref{lem:asymptotic}.9 implies
\begin{align*}
	&\sum_{i^{\tau}, t^{\tau}}^{n, T} \sum_{\substack{j^\tau,s^\tau\\  |(i^\tau,j^\tau)|=2\tau+2}}^{n,T}\mathbb{E}[ V_{n,i^{\tau}, t^{\tau}}(D_{i^{\tau},t^{\tau}})V_{n,j^{l}, s^{l}}(D_{(i^{r+1:\tau},j^{\tau}),(t^{r+1:\tau},s^{\tau})})\\
	&\quad\cdot|V_{n,[(j_{l}, s_{l}),(j_{l+1}, s_{l+1})]} (D_{(i^{r+1:\tau},j^{\tau}),(t^{r+1:\tau},s^{\tau})})\\
	&\qquad-V_{n,[(j_{l}, s_{l}),(j_{l+1}, s_{l+1})]}(D_{(i^{r:\tau},j^{\tau}),(t^{r:\tau},s^{\tau})})|\\
	&\quad\cdot V_{n,j^{l+1:\tau}, s^{l+1:\tau}}(D_{(i^{r:\tau},j^{\tau}),(t^{r:\tau},s^{\tau})})]
	\leq L \frac{n^{2\tau+2}}{(n-2\tau-2)^{2\tau+2}K_n }\rightarrow0
\end{align*}
where $L>0$ is a constant. The arguments for the term
\begin{align*}
	&\sum_{i^{\tau}, t^{\tau}}^{n, T} \sum_{\substack{j^\tau,s^\tau\\  |(i^\tau,j^\tau)|=2\tau+2}}^{n,T} \mathbb{E}[ V_{n,i^{\tau}, t^{\tau}}(D_{i^{\tau},t^{\tau}})\\
	&\quad\cdot|V_{n,j_0, s_0}(\tilde{X}_1,u,D_{(i^{r+1:\tau},j^{\tau}),(t^{r+1:\tau},s^{\tau})})-V_{n,j_0, s_0}(\tilde{X}_1,u,D_{(i^{r:\tau},j^{\tau}),(t^{r:\tau},s^{\tau})})|\\
	&\quad\cdot V_{n,j^{0:\tau}, s^{0:\tau}}(D_{(i^{r:\tau},j^{\tau}),(t^{r:\tau},s^{\tau})})]	
\end{align*}
is analogous.
\end{enumerate}	
\end{proof}
\section{Implementation}\label{sec:impl}

In this section, we present an extended version of the KNNR Algorithm from the main paper by including a tree-based nearest neighbor search. We then state the MFMC Algorithm from \cite{fonteneau2010model,fonteneau2013batch} and compare the complexity between MFMC and KNNR. Related to this we discuss why tree-based methods and parallelization in trajectory generation is not practical for the MFMC. 

\subsection{Extended KNNR Algorithm}
The main extension of the KNNR algorithm in comparison to the main part is the usage of search trees as a speed-up. Tree-based nearest neighbour search incurs an additional computational cost at the start for fitting a search tree on the given data set. This additional computational cost is more than off-set by a reduced cost for nearest neighbor queries. We will give more details on the complexity of tree-based nearest neighbour search below. For now, we will introduce two generic functions: First, \textsc{TreeBuild}$(\mathcal{D},K)$ takes the entire data set $\mathcal{D}$ and the number of nearest neighbors $K$ as arguments and returns a search tree for $K$-nearest neighbors. The second function \textsc{TreeEval}$((x,v), \mathcal{T})$  uses as arguments a state-action pair $(x,v)\in S$ and a search tree for $K$-nearest neighbors and outputs the $K$-nearest neighbors to $(x,v)$. For actual implementation of the search trees popular choices are K-D Tree \cite{bentley1975multidimensional} and Ball Tree \cite{omohundro1989five}. In our experiments, we use the heuristic provided in \cite{scikit-learn} to choose the nearest neighbor method.

\begin{algorithm}
	\renewcommand{\thealgorithm}{KNNR}
	\caption{ $K$-nearest neighbor resampling for OPE (extended)}\label{alg:matching_ext}
	\hspace*{\algorithmicindent} \textbf{Input:}  Data set $\mathcal{D}$; target policy $u$; nearest neighbors parameter $K$; metric $d$ on $S$; number of resampled trajectories $l$
	\begin{algorithmic}
		\STATE {$R\gets 0$}
		\STATE {$\mathcal{T}\gets$\textsc{TreeBuild}$(\mathcal{D},K)$}
		\FOR{$j$ from $1$ to $l$}
		\STATE{$r\gets 0$}
		\STATE{Sample an initial state $X^{\mathrm{NN}}_{j,0}$ from $\nu_0$}
		\FOR{$s$ from $0$ to $T$ }
		\STATE{Randomly choose $\mathcal{K}_s$ where $\mathcal{K}_s \sim \mathcal{V}(\{1,\dots,K\})$ and $\mathcal{K}^s\gets(\mathcal{K}_0,\dots,\mathcal{K}_s)$}
		\STATE{Apply \textsc{TreeEval}$((X^{\mathrm{NN}}_{j,s}, u_s(X^{\mathrm{NN}}_{j,s})), \mathcal{T})$ to get $K$-nearest neighbor under $d$}
		\STATE{Choose the $\mathcal{K}_s$-nearest neighbor and denote them by $(X^{\mathcal{K}^s}_{i_s,t_s}, U^{\mathcal{K}^s}_{i_s,t_s})$ with the corresponding reward $R^{\mathcal{K}^s}_{i_s,t_s}$}
		\STATE{Set $r\gets r + R^{\mathcal{K}^s}_{i_s,t_s}$ and $X^{\mathrm{NN}}_{j,s+1}\gets X^{\mathcal{K}^s}_{i_s,t_s+1}$ if $s<T$}
		\ENDFOR
		\STATE{$R\gets r + R$}
		\ENDFOR
	\end{algorithmic}
	\hspace*{\algorithmicindent} \textbf{Output:} $\frac{R}{l}$
\end{algorithm}

\begin{rem}
	\begin{enumerate}
		\item In our experiments instead of sampling independently from $\nu_0$, we randomly and uniformly select initial states from the data sets.
		\item For the LOB and LQR experiment, we slightly alter the algorithm to not allow reusing the same  transition twice within one nearest neighbor path. This does not change the complexity rates discussed below or the theoretical consistency and reduces the bias in these experiments. 
		\item In the experiments we use the standard Euclidean norm for nearest neighbor search in KNNR and MFMC for the LOB and LQR environment. In the BP environment we transform the state-action pair slightly before applying the Euclidean norm on the transformed state-action pair (see Appendix Section \ref{sec:exp_appendix}). 
		\item A further speed up could be possibly achieved with using approximate nearest neighbors methods, e.g. \cite{li2019approximate}. However, it is unclear whether the theoretical guarantees can be extended to this setting. We will leave this consideration for future investigations.
	\end{enumerate}
\end{rem}

\subsection{MFMC Algorithm}
Algorithm \ref{alg:mfmc} implements the model-free Monte Carlo algorithm proposed in \cite{fonteneau2010model,fonteneau2013batch}. Note in particular that the data set is different at each iteration on which the nearest neighbor is searched. Moreover, the change in the data depends on previous iterations.  Thus, one cannot parallelize the outer for-loop in Algorithm \ref{alg:mfmc}. This is in stark contrast to Algorithm \ref{alg:matching_ext} where the calculations in the outer for-loop can be conducted without interdependencies, making parallelization viable for trajectory generation using the KNNR algorithm.  

\begin{algorithm}
	\renewcommand{\thealgorithm}{MFMC}
	\caption{ Model-free Monte Carlo \cite{fonteneau2010model,fonteneau2013batch}}\label{alg:mfmc}
	\hspace*{\algorithmicindent} \textbf{Input:}  Data set $\mathcal{D}$; target policy $u$;  metric $d$ on $S$; number of resampled trajectories $l$
	\begin{algorithmic}[1]
		\STATE {$R\gets 0$}
		\STATE{$\widetilde{\mathcal{D}}\gets\mathcal{D}$}

		\FOR{$j$ from $1$ to $l$ }
		\STATE{$r\gets 0$}
		\STATE{Sample an initial state $X^{\mathrm{NN}}_{j,0}$ from $\nu_0$}
		\FOR{$s$ from $0$ to $T$}
		\STATE{Find the $1$-nearest neighbor of $(X^{\mathrm{NN}}_{j,s}, u_s(X^{\mathrm{NN}}_{j,s}))$ in $\widetilde{\mathcal{D}}$ and denote them by $(X_{i_s,t_s}, U_{i_s,t_s})$ with the corresponding reward $R_{i_s,t_s}$}
		\STATE{Set $r\gets r + R_{i_s,t_s}$, $\widetilde{\mathcal{D}}\gets\widetilde{\mathcal{D}}\setminus\{(X_{i_s,t_s}, U_{i_s,t_s},R_{i_s,t_s} )\} $, and $X^{\mathrm{NN}}_{j,s+1}\gets X_{i_s,t_s+1}$ if $s<T$}
		\ENDFOR
		\STATE{$R\gets r + R$}
		\ENDFOR
	\end{algorithmic}
	\hspace*{\algorithmicindent} \textbf{Output:} $\frac{R}{l}$
\end{algorithm}

\subsection{Complexity Analysis}
In this section, we discuss the complexity of the MFMC algorithm and the KNNR algorithm. For this purpose, recall that the computational complexity for a brute force $K$ nearest neighbour search with $n$ samples is $\mathcal{O}(1)$ at initialization and $\mathcal{O}(Kn)$ at evaluation.\footnote{Note that we omit dimension dependence for all complexities discussed here.} For tree-based methods the computational costs are $\mathcal{O}(Kn\log(n))$ at initialization and $\mathcal{O}(K\log(n))$ at evaluation (cf. Documentation of \cite{scikit-learn}).

We will make the following assumptions on the parameter choices: For both KNNR and MFMC we assume that the number of resampled paths growths with the same order as the data set i.e. $l_n=\mathcal{O}(n)$. If any slower rate is chosen for $l_n$, the variance bounds in \cite{fonteneau2010model} become suboptimal. In the experiments we use the rate $l_n = \lfloor 0.1n\rfloor$ for both the MFMC and the KNNR. For $K_n$ in KNNR, we assume $K_n=\mathcal{O}(n^{0.25})$ motivated by the empirical success in the experiments with continuous state-action spaces. As for the LQR and the LOB experiment we used the rate $K_n = \lfloor n^{0.25}\rfloor$. For the BP experiment, we got the best results with $K_n = 5 I_{n<10^4}+7I_{n=10^4}$. We believe that the improved performance with a differing rate is potentially explained by the discretization of the state-action space in the BP experiment.  

For the KNNR, we need to initialize the tree model at a cost of $\mathcal{O}(n^{1.25}\log(n))$ and search the tree for $K_n$ nearest neighbors $l_n\cdot T$ times yielding the rate $\mathcal{O}(n^{1.25}\log(n))$ for evaluation. Thus, overall  KNNR has the complexity $\mathcal{O}(n^{1.25}\log(n))$. The MFMC on the other hand looks for the nearest neighbour $l_n\cdot T$ times using a brute force nearest neighbour search yielding an overall rate of $\mathcal{O}(n^2)$. Note that a tree based method cannot improve this rate, since the data set is changing at each iteration. One would either need to fit $l_n\cdot T$ search trees or build one tree searching for $l_n\cdot T$ nearest neighbors. In either case the computational cost would be of the order $\mathcal{O}(n^2\log(n))$ and, thus, worse than brute force search.

\section{Experiments}\label{sec:exp_appendix}

In this section, we give more details on the experiments we conducted and reported on in Section 4 of the main paper. In particular, we describe the three environments and comment on the implementation of the baselines.

\subsection{Linear Quadratic Regulator}
The first environment is the LQR problem proposed in \cite{recht2019tour} under the name of double integrator. We mainly follow the parametrization from \cite{recht2019tour} except that we choose a higher variance for the system noise. The environment is given as follows:

For $t=1,\dots,T$ with $T=10$, the dynamic model is given by 
\begin{equation}
X_{t+1}=A X_t+B \hat{u}_t+ e_t
\end{equation}
with the initial condition
\begin{equation}
	X_0=\left[\begin{array}{l}
		-1+0.5Z_1 \\
		Z_2
	\end{array}\right]
\end{equation}
where $X_t\in\mathbb{R}$ denotes the state, $\hat{u}_t\in\mathbb{R}$ the action, $Z_1,Z_2$ are independent standard normal random variables and 
\begin{equation}
A=\left[\begin{array}{ll}
	1 & 1 \\
	0 & 1
\end{array}\right],\quad B = \left[\begin{array}{l}
0 \\
1
\end{array}\right].
\end{equation}
Furthermore, for all $t$, $e_t$ is normal random vector with mean zero and covariance $10^{-2}I_2$  where $I_n$ is the $2$-dimensional identity matrix. The system noise  is independent over time and from the initial noise. The rewards are given by $r_t = X^{T}_tQX_t+R\hat{u}^2_t$ where
\begin{equation}
	Q=\left[\begin{array}{ll}
		1 & 0 \\
		0 & 0
	\end{array}\right] \text{ and } R=0.5.
\end{equation}
The behavior policy is given by 
\begin{equation}
	U_t=-K_t X_t
\end{equation}
with $K_t=[K_{t,1},K_{t,2}]^T$ where $K_{t,2}$ and $K_{t,2}$ are uniformly distributed on $[0,1]$ and $[1,2]$ respectively and independently for each $t$. As a target policy we use the policy given in \cite{recht2019tour}:
\begin{equation}
	u_t=-\widehat{K} X_t
\end{equation}
where
\begin{equation}
\widehat{K}=\left(R+B^T M B\right)^{-1} B^T M A
\end{equation}
and $M$ is the solution to the discrete algebraic Riccati equation
\begin{equation}
M=Q+A^T M A-\left(A^T M B\right)\left(R+B^T M B\right)^{-1}\left(B^T M A\right). 
\end{equation}

For reporting the performance on this environment in Figure 2 in the main paper, we run our KNNR algorithm and the baselines a hundred times each for each data set size. We then compare the estimated performances of the target policy with the actual performance that we obtain by simulating $10^6$ episodes sampled under the target policy.

\subsection{Optimal execution in Limit Order Book markets}
The next environment arises in the optimal execution of financial assets in limit order book markets (e.g. \cite{cartea2015algorithmic} Chapter 8.2). We give a detailed description of the continuous-time version of this model:
\begin{itemize}
	\item $\mathfrak{N}$ is the initial inventory of assets,
	\item $T$ the terminal time,
	\item $S=\left(S_{t}\right)_{0 \leq t \leq T}$ a stochastic process, modelling the evolution of the mid-price, given by $S_{t}=S_{0}+\sigma W_{t}, \sigma>0$, where $W=\left(W_{t}\right)_{0 \leq t \leq T}$ is a standard Brownian motion,
	\item $\delta=(\delta)_{0 \leq t \leq T}$ the difference between the mid-price and the offered price, which is the control variable,
	\item $M=\left(M_{t}\right)_{0 \leq t \leq T}$ a Poisson process with intensity $\lambda$ where $M_t$ denotes the number of arriving orders,
	\item $N^{\delta}=\left(N_{t}^{\delta}\right)_{0 \leq t \leq T}$ the (controlled) counting process corresponding to the number of assets from the original inventory which have been sold,
	\item $P(\delta)=e^{-\kappa \delta}$, with $\kappa>0$, the probability that an asset is being sold conditioned on an order arriving,
	\item $X^{\delta}=\left(X_{t}^{\delta}\right)_{0 \leq t \leq T}$ the reward process denoting the revenue of trade 
	$d X_{t}^{\delta}=\left(S_{t}+\delta_{t}\right) d N_{t}^{\delta}$,
	\item $Q_{t}^{\delta}=\mathfrak{N}-N_{t}^{\delta}$ the remaining assets in the inventory.
\end{itemize}
Our value of policy $\delta$ is given by 
\begin{equation}
V_{\delta} = \mathbb{E}\left[X_{\tau}^{\delta}+Q_{\tau}^{\delta}\left(S_{\tau}-\alpha Q_{\tau}^{\delta}\right) \mid X_{0^{-}}^{\delta}=0, S_{0}, Q_{0^{-}}^{\delta}=\mathfrak{N}\right]
\end{equation}
where $\alpha \geq 0$ determines the penalty for having inventory at time $T$ (via a linear impact function) and $\tau=T \wedge \min \left\{t: Q_{t}^{\delta}=0\right\}$ is a stopping time.
As a behavior policy we consider the random policy $\widehat{\delta}_t\sim\mathcal{V}[-0.01,0.04]$ independently sampled for each time step and as target policy we consider the (non-stationary) optimal policy given by
\begin{equation}
\delta_t(q)=\frac{1}{\kappa}\left[1+\log \frac{\sum_{n=0}^{q} \frac{\tilde{\lambda}^{n}}{n !} \exp(-\kappa \alpha(q-n)^{2})(T-t)^{n}}{\sum_{n=0}^{q-1} \frac{\tilde{\lambda}^{n}}{n !} \exp(-\kappa \alpha(q-1-n)^{2})(T-t)^{n}}\right]
\end{equation} 
where $\tilde{\lambda} =\lambda \exp(-1)$. 

For the implementation, we discretize this model using a standard Euler-Maryuama approach and use the following parameters
\begin{align*}
	&T=20 , \quad \lambda=50/60 , \quad \kappa=100, \quad \alpha=0.1  \quad \mathfrak{N}=3, \\
	&S_{0}= 30, \quad \Delta t = 1 \text { and } \quad \sigma= 0.1.
\end{align*}
where $ \Delta t$ denotes the discretization step-size.
\begin{rem}
Note that the LOB environment can get to a terminal state before the time limit is reached and that if the inventory is not equal to zero at the last point in time an additional reward is generated dependent deterministically on the remaining inventory. We adapt the baselines and the KNNR algorithm accordingly. The modification to account for this in the KNNR algorithm are minimal and following the consideration in Remark 1 of the main paper the estimation of the additional final reward is consistent.
\end{rem}

For reporting the performance on this environment in Figure 2 in the main paper, we run our KNNR algorithm and the baselines $30$ times each for each data set size. We then compare the estimated performances of the target policy with the actual performance that we obtain by simulating $10^6$ episodes sampled under the target policy. 

\subsection{Online Stochastic Bin Packing}
 
Next we consider a slightly altered version of the stochastic bin packing problem (BP) as introduced in \cite{balaji2019orl, gupta2012online}. For this model at each time period $t\in\{1,\dots,T\}$ one item arrives where the item is of some size $j\in\{1,\dots,J\}$ where $J,T\in\mathbb{N}$. At arrival the item needs to be packed in a bin of bin size $B\in \mathbb{N}$ where $J<B$. For that purpose an item $j$ can be packed in any existing bin that has free space of at least $B-j$ or in a newly opened bin. The number of bins that can be newly opened is not restricted. Let $b\in\{1,\dots,B\}$ denote all the possible levels of fill of a bin, then the state of this problem is of the form $X_t=((N_{b,t})_{b=1}^{B-1},j_t )$ where $N_b$ denotes the number of open bins filled to the level $b$ at time $t$ and $j_t$ the item arriving at time $t$. The action $a$ is encoded as an integer where $0$ signifies opening a new bin and any integer above zero marks a bin at level $b$ at which an item is placed. The only modification compared to \cite{balaji2019orl} is that we adapt the mechanism on how the item size is sampled at each period in time. We assume that $j_t$ is sampled from a Poisson distribution truncated between $1$ and $J$. The intensity parameter $\lambda$ of this distribution depends on $((N_{b,t})_{b=1}^{B-1})$. We calculate the average free space available over all opened bins and set $\lambda$ equal to this. If the average free space available over all opened bins is larger than the maximal item size or no bin is opened, we take $\lambda$ as the maximal item size. \footnote{In practice this mechanism leads to larger items arriving if more bins with  a lot space remaining are open. This behaviour would be consistent with the implementation of a dynamic pricing mechanism that increases prices as less capacity is available.} The reward $R_t$ in this system is calculated as the negative increment on the total unused capacity, meaning that if an item of size $j$ is placed in an existing bin the reward is given by the item size $j$. If a new bin needs to be opened the reward is $-(B-j)$. The quantity of interest is the expected sum of rewards over the episode:
 \begin{equation}
 	\mathbb{E}\Big[\sum_{t=1}^{T}R_t\Big]
 \end{equation}
 As a behavior policy we choose a random policy which at each time step  randomly samples uniformly from all feasible actions. As a target policy we use the sum of squares heuristic introduced and explained in \cite{balaji2019orl}. In our implementation, we use the following parameters
 \begin{align*}
 	T=30 , \quad J=9 \text { and } \quad B=10.
 \end{align*}

For reporting the performance on this environment in Figure 2 in the main paper, we run our KNNR algorithm and the baselines $30$ times each for each data set size. We then compare the estimated performances of the target policy with the actual performance that we obtain by simulating $0.5\cdot10^5$ episodes sampled under the target policy. 

\begin{rem}
Strictly speaking we do not use the Euclidean norm directly in the state-action space for the KNNR and MFMC algorithm, but we project the state-action space onto the bin counters and apply the Euclidean norm only on the bin counters for nearest neighbor search. To make this more explicit, introduce the following function $\pi: \mathbb{N}^{B+2}\rightarrow \mathbb{N}^{B}$ where $\pi(X_t,a_t)$ maps the current state and action to the tuple of open bins $(N_{t^+,b})_{b=1}^{B-1}$ after action $a_t$ has been taken. This means, if $a_t>0$, for $b=a_t$, $N_{t^+,b}=N_{t,b}-1$ and, if $a_t+j_t<B$, $N_{t^+,a_t+j_t}=N_{t,a_t+j_t}+1$. If $a_t=0$, $N_{t^+,j_t}=N_{t,j_t}+1$. In other words, we exploit that the instant change of the state due to the action is known and that only this modified state impacts the distribution of the next state. This is often the case in applications and makes our algorithm applicable to situations where the actions are not directly observable but the state is.   
\end{rem}

\subsection{Baselines}
Since the baselines used for our experiments are well-known, we will refrain from discussing the general approaches and only point out details in the implementation. The MFMC was discussed at length in a previous section. For the other baselines we generally refer to the Appendix of \cite{voloshin2021empirical}. For the PEIS and PDIS we note that these refer to the weighted versions of importance sampling. For PEIS, PDIS and WDR in the LOB and LQR setting, we randomize the target policy for calculating the  importance weights (since otherwise they would not be well-defined). We assume a normal distribution with the actual target policy as the mean and a constant standard deviation (LQR:0.01; LOB:0.005). For the FQE baselines we adapt them to the episodic setting i.e. Appendix of \cite{hao2021bootstrapping}. We consider two different regressions for the FQE. FQE Lin uses a linear regression with a linear-quadratic feature transformation of the state-action pair. FQE NN uses a nearest neighbor regression directly on the state-action pair. For the LOB and LQR setting we choose the number of nearest neighbors in the regression as $n^{0.25}$ and  for the BP environment as $n^{0.5}$ where $n$ denotes the number of episodes in the data set. These choices are based on empirical performance. For the WDR we use FQE Lin as the model component. Finally, the naive average of the rewards under the behavior policy (NA) is self-explanatory.

\end{small}

\end{document}